%% file: neurips_2026.tex
\documentclass{article}

\usepackage[preprint]{neurips_2026}

\usepackage[utf8]{inputenc} 
\usepackage[T1]{fontenc}    
\usepackage{hyperref}       
\usepackage{url}            
\usepackage{booktabs}       
\usepackage{amsfonts}       
\usepackage{nicefrac}       
\usepackage{microtype}      
\usepackage{xcolor}         
\usepackage{subcaption}
\usepackage{graphicx}

\usepackage{amsmath}
\usepackage{amssymb}
\usepackage{mathtools}
\usepackage{multirow}
\usepackage{amsthm}
\usepackage{algorithmic}
\usepackage{algorithm}

\usepackage[capitalize,noabbrev]{cleveref}

\usepackage{enumitem}
\theoremstyle{plain}

\newtheorem{theorem}{Theorem}[section]
\newtheorem{proposition}[theorem]{Proposition}
\newtheorem{lemma}[theorem]{Lemma}
\newtheorem{corollary}[theorem]{Corollary}
\theoremstyle{definition}
\newtheorem{definition}[theorem]{Definition}

\theoremstyle{remark}

\newtheorem{example}{Example}[section]
\newcommand{\toolname}{D\textsuperscript{3}PO}

\usepackage{totcount}\regtotcounter{fixmeTotalCounter}
\usepackage[useregional]{datetime2}
\input{z-commands}

\title{Preference Conditioned Multi-Objective Reinforcement Learning: Decomposed, Diversity-Driven Policy Optimization}

\author{%
 Tanmay Ambadkar\thanks{https://ambadkar.com/d3po},\quad Sourav Panda,\quad Shreyash Kale,\quad Jonathan Dodge,\quad Abhinav Verma\\
  Pennsylvania State University\\
  University Park, PA 16802 \\
  \texttt{\{ambadkar, sbp5911, shreyash, dodge, verma\}@psu.edu} \\
}

\begin{document}

\maketitle

\begin{abstract}
Multi-objective reinforcement learning (MORL) seeks to train agents capable of balancing conflicting objectives. While single preference-conditioned policies offer a highly scalable solution, existing approaches remain brittle in practice, frequently failing to recover dense Pareto fronts. We demonstrate that this failure stems from two structural pathologies: destructive advantage cancellation caused by premature Early Scalarization (ES), and representational mode collapse across the preference space. To overcome these bottlenecks, we introduce \toolname{}, a PPO-based framework that fundamentally reorganizes multi-objective optimization. By preserving per-objective learning signals through a decomposed pipeline and integrating preferences only after trust-region stabilization (Late-Stage Weighting), \toolname{} improves credit assignment under conflicting objectives. Concurrently, a scaled diversity regularizer encourages behavioral divergence proportional to preference distance. \toolname{} operates entirely within the efficient linear scalarization regime shared by standard deep MORL baselines. By reducing information loss caused due to linear scalarization rather than relying on expensive non-linear utility functions, it suggests that optimization bottlenecks play a significant role. Across available standard benchmarks, including high-dimensional and many-objective environments, \toolname{} consistently discovers broader, higher-quality Pareto fronts than prior methods, exceeding state-of-the-art hypervolume and expected utility using a single deployable policy.
\end{abstract}

\input{documentBody/1-intro}
\input{documentBody/2-related}

\input{documentBody/3-problem}
\input{documentBody/4-method}
\input{documentBody/5-theory}
\input{documentBody/6-experiments}
\input{documentBody/7-conclusion}

\bibliography{main}
\bibliographystyle{unsrt}

\newpage
\appendix
\onecolumn
\input{documentBody/A-code}
\input{documentBody/B-metrics}
\input{documentBody/C-discrete}

\input{documentBody/D-Ablation}
\input{documentBody/E-gradient_analysis}
\input{documentBody/F-theory}
\input{documentBody/G-regularizer}
\input{documentBody/I-environments}

\input{documentBody/J-expDetails}

\input{documentBody/K-Reproducibility}
\input{documentBody/L-demo}

\end{document}

%% file: z-commands.tex
\newcommand{\algName}[1]{MO-PPO}

\newcommand{\REDACT}[1]{$\Box REDACTED \Box$} 

\newcommand{\redactCollege}[1]{[a U.S. University]}  

\newcounter{boldifyCounter}
\newcounter{fixmeSectionCounter}
\newcounter{fixmeTotalCounter}
\makeatletter
\@addtoreset{fixmeSectionCounter}{section}
\@addtoreset{fixmeSectionCounter}{subsection}
\@addtoreset{boldifyCounter}{section}
\@addtoreset{boldifyCounter}{subsection}
\makeatother

\newcommand{\boldify}[1]{}
\ifdefined\boldifyON
	\renewcommand{\boldify}[1]{
        \par\noindent
		\stepcounter{boldifyCounter}
		\textbf{{\color{green}**}
		~\arabic{section}.\arabic{subsection}.\arabic{boldifyCounter}
		: #1} 
	}
\fi 

\newcommand{\reportOnFIXME}{%
    \newcount\iterCounter
    \iterCounter=1
    \newcount\endCounter
    \endCounter=\totvalue{fixmeTotalCounter}
    \advance \endCounter +1
    There are 
    {\color{red}\total{fixmeTotalCounter}} 
    FIX\_ME\\
    links:
    \loop
        {\#\the\iterCounter}
        \advance \iterCounter +1
    \ifnum \iterCounter < \endCounter
    \repeat
}

\newcommand{\FIXME}[1]{} 
\ifdefined\fixmeON
	\renewcommand{\FIXME}[1]{\par\noindent
		\stepcounter{fixmeSectionCounter}\stepcounter{fixmeTotalCounter}
		{\color{red}\fbox{\color{black}
			\parbox{.965\linewidth}{
				\textbf{%
                {FIXME}	\arabic{section}.\arabic{subsection}.
        		\arabic{fixmeSectionCounter} (\color{red}
        		\#\arabic{fixmeTotalCounter}):} #1}}
        }
	}
\fi 

\newcommand{\FIXED}[1]{}
\ifdefined\fixmeON
	\renewcommand{\FIXED}[1]{\par\noindent%
		{\color{black}\fbox{\color{black}%
			\parbox{.99\columnwidth}{%
				\color{blue}#1}}%
        }
	}
\fi 

\newcommand{\draftStatus}[1]{}
\ifdefined\draftStatusON
	\renewcommand{\draftStatus}[1]{
        \hfill **#1
	}
\fi 

%% file: documentBody/1-intro.tex
\section{Introduction
\draftStatus{Page Budget:1pg (GOOD) Status: 
TSA Redraft, JED has read (just a few requests for citation), AV please read
}}
\label{sec:intro}

Reinforcement learning (RL) successfully trains agents to make sequential decisions by maximizing a \textit{single scalar reward}~\citep{sutton1998introduction}. 
However, real-world applications (such as autonomous driving or logistics) require agents to simultaneously optimize multiple, often conflicting objectives. 
Optimizing a single reward function collapses this richness, frequently leading to suboptimal behaviors and motivating the field of \textit{Multi-Objective Reinforcement Learning (MORL)}.

MORL decomposes objectives into a \textit{vector of reward signals}. 
When objectives conflict, a single global optimum is unattainable. Instead, the goal is to learn a set of Pareto-optimal policies~\citep{morl_decomposition}, allowing users to select desired trade-offs via \textit{preference weight vectors}~\citep{utility_functions_morl, nonlinear_scalarization}. 
This non-uniqueness of optimal solutions introduces severe algorithmic challenges. The agent must respond to a potentially infinite set of preference queries, and conflicting objective gradients can point in opposing directions, thereby \textit{destabilizing policy updates and impairing sample efficiency}~\citep{PSL-MORL}. 
In fact, our empirical diagnostics reveal that per-objective gradients actively oppose each other (exhibiting negative cosine similarity) in 36 to 53 percent of all training steps across continuous control benchmarks.

\input{figure/0-teaser}

Consequently, existing methods face persistent limitations. Single-policy methods often suffer from \textbf{destructive gradient interference}, where naively combining conflicting objectives produces opposing gradients that stall learning~\citep{PSL-MORL}. 
Our analysis shows that \textbf{early scalarization can reduce 60–65\% of the aggregated advantage magnitude per update in our experiments} (Sec ~\ref{sec:scalarization_analysis}). 
These methods also frequently exhibit \textbf{incomplete front coverage} due to representational mode collapse, ignoring shifting preferences to output a safe average behavior. 
Alternatively, multi-policy approaches train discrete collections of policies to cover the front, suffering from \textbf{architectural inefficiency} and massive memory costs that scale poorly with the number of objectives.

We present \toolname{} (Decomposed, Diversity-Driven Policy Optimization), illustrated in Figure~\ref{fig:teaser}, a framework for training a single preference-conditioned policy for multi-objective reinforcement learning. 
\toolname{} achieves state-of-the-art performance on the evaluated continuous control benchmarks, outperforming both single-policy baselines and parameterized multi-policy ensembles while utilizing up to 99 percent fewer parameters.

\toolname{} achieves this by revisiting how  Proximal Policy Optimization (PPO) interacts with multi-objective scalarization. Instead of fighting the optimizer by injecting explicit multi-objective gradient surgeries that inherently violate PPO's trust region, \toolname{} leverages the algorithm's native mechanics. By delaying linear scalarization to the late stage (LSW) and injecting non-homogeneity via per-objective advantage normalization, we structurally rely on PPO's clipping operator to act as a \textit{dynamic piecewise non-linearity}. Our hierarchy lemma ($\text{LSW} \;\succ\; \text{MVS} \;\succeq\; \text{ES},$~\ref{cor:hierarchy}) shows that early scalarization can reduce usable gradient signal under conflicting objectives, whereas LSW preserves more per-objective signal prior to aggregation. When these conflicting signals demand destructively divergent updates, independent clipping truncates them, mitigating gradient cancellation. 

\FIXME{TSA: Need to talk about LS}
We demonstrate that multi-policy ensembles struggle to converge discrete policies across complex trade-off manifolds. \toolname{} establishes a new state-of-the-art in Hypervolume and Expected Utility. By approximating a dense, continuous Pareto manifold, our single-policy architecture demonstrates improved trade-off discovery on the evaluated benchmarks. 

Our core contributions are:

\textbf{1. Deployment-efficient single-policy SOTA.} \toolname{} achieves state-of-the-art performance (up to 9 objectives), demonstrating that a single continuous network outperforms 200-network ensembles under strict sample budgets. By eliminating the need to store and route discrete policies, \toolname{} drastically reduces memory footprint and deployment complexity.
    
\textbf{2. Decomposed optimization via Late-Stage Weighting (LSW).} We show analytically and empirically that early scalarization collapses per-objective advantage signals into a single scalar signal and leads to substantial reduction in aggregated advantage magnitude (up to 65\% in our experiments).
We show that LSW preserves these conflicting signals prior to aggregation and utilizes PPO clipping as a stabilizing non-linearity.
    
\textbf{3. Scaled diversity regularization.} We introduce a preference-proportional diversity regularizer that encourages preference-dependent behavioral separation and empirically reduces collapse, forcing the single policy to stretch across the full continuum of the Pareto front.

%% file: figure/0-teaser.tex
\begin{figure*}
\centering
        
    \includegraphics[width=\textwidth]{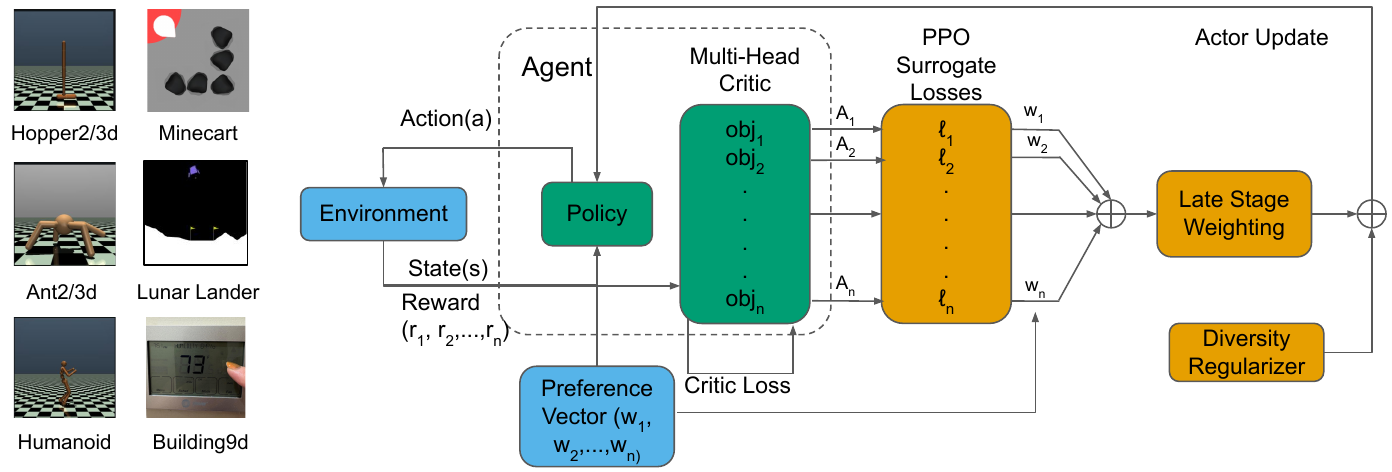}

    \caption{Overview of the $D^{3}PO$ framework. \textbf{(1) Multi-Head Critic:} The critic estimates independent per-objective values $V^{(i)}(s,\omega)$ to compute unweighted advantages $A^{(i)}$. \textbf{(2) PPO Surrogate Losses:} The clipping mechanism is applied to each advantage stream \textit{independently} Eq.~\ref{eq:ppo_loss}, stabilizing the learning signal before scalarization. \textbf{(3) Late-Stage Weighting:} Preference weights $\omega$ are applied only to the stabilized surrogate losses Eq.~\ref{eq:final_loss}, reducing cancellation of gradients prior to optimization. \textbf{(4) Diversity Regularizer:} A diversity loss Eq.~\ref{eq:diversity_loss} is added to force behavioral separation between different preference queries, incurs a penalty under representational collapse, discouraging such solutions during optimization.}

    \label{fig:teaser}
\end{figure*}

%% file: documentBody/2-related.tex
\section{Related Work
\draftStatus{TSA Drafted}
}
\label{sec:related}

Multi-objective reinforcement learning (MORL) has developed along several algorithmic paradigms, each with distinct strengths and limitations.

\textbf{Scalarization.} 
A foundational approach is scalarization, which reduces vector rewards to a scalar for standard RL methods. Linear scalarization (e.g., weighted sums) is computationally efficient but limited to the convex regions of the Pareto front. Nonlinear scalarization functions~\citep{nonlinear_scalarization,utility_functions_morl,nonlinear_preferences} extend expressivity but still collapse objectives into a single training signal, risking loss of information and instability when objectives conflict.

\textbf{Multi-policy methods.}
Other work trains a set of specialized policies for different preferences, then approximates the Pareto front directly~\citep{distributional_pareto_morl,efficient_pareto_front,pa2d_morl, yang2025preference}. Such approaches often rely on constrained optimization or advanced multi-objective optimization techniques to achieve high-quality fronts, but scale poorly with the number of objectives due to the cost of maintaining many policies and are sample-inefficient due to requiring separate environment interactions for each policy.

\textbf{Decomposition Based Approaches.} 
Reward- and value-decomposition methods form an influential class of approaches in multi-objective reinforcement learning. These methods explicitly learn objective-specific value functions or successor features and recombine them, typically through generalized policy improvement (GPI), to derive policies for different scalarizations without retraining \cite{barreto2017sfs,barreto2019transfer}. Variants based on linear scalarization similarly maintain separate per-objective Q-functions and construct policies by applying improvement operators over decomposed value components \cite{vanmoffaert2014pareto}. More recent work has enhanced GPI-based schemes by prioritizing which decomposed components to update in order to improve sample efficiency \cite{gpi-ls}. While such approaches can be effective, they typically rely on linear recombination assumptions and require maintaining multiple value components or policies, and can incur significant storage/compute overhead and may exhibit limited smooth interpolation across the Pareto front.

\textbf{Single universal policies.}
To avoid training multiple policies, recent methods learn a single policy conditioned on a preference vector, enabling adaptation at runtime~\citep{generalized_morl,pareto_conditioned,basaklar2023pdmorl,PSL-MORL,latent_conditioned}. Examples include Pareto-Conditioned Networks (PCN)~\citep{pareto_conditioned}, which reuse past transitions across preferences for sample efficiency; Preference-Driven MORL (PD-MORL)~\citep{basaklar2023pdmorl}, which combines preference conditioning with off-policy engineering such as replay and HER to scale to continuous control; and latent-conditioned policy gradients~\citep{latent_conditioned}, which embed preferences in a latent space. Other PPO-style explorations (e.g., MOPPO~\citep{Terekhov2024InSFA}) study empirical design choices for conditioned PPO variants. These methods demonstrate the practicality of universal preference-conditioned agents but can suffer from gradient interference or representational collapse.

\textbf{The Early Scalarization Trap.} 
We analyze the optimization pipelines of the compared methods at the gradient level (Section~\ref{sec:scalarization_analysis}) and observe a common structural pattern: many approaches apply linear scalarization before gradient-based optimization. This early aggregation can lead to significant cancellation of per-objective advantage signals under conflicting objectives, reducing the effective learning signal available during training.
D3PO addresses this limitation by delaying preference weighting until after per-objective update stabilization (Late-Stage Weighting, LSW), thereby preserving per-objective signals prior to aggregation and mitigating gradient interference.

\textbf{Our contribution.}
D3PO belongs to the fourth family but differs in two key respects: (i) it is an \emph{on-policy} PPO extension with a multi-head critic that preserves raw per-objective signals and applies preferences only after PPO stabilization (Late-Stage Weighting), and (ii) it introduces a \emph{scaled diversity} regularizer that provides a mechanism against mode collapse. This combination of decomposed advantage preservation, principled preference integration, and diversity regularization offers a theoretically enriched alternative to prior preference-conditioned methods, which have primarily emphasized empirical or off-policy approaches.

%% file: documentBody/3-problem.tex
\section{Preliminaries
\draftStatus{TSA Drafted}
}
\label{sec:problem}

We model decision-making problems with multiple objectives using a \emph{Multi-Objective Markov Decision Process} (MOMDP), formalized as the tuple:
\(
\mathcal{M} = \langle \mathcal{S}, \mathcal{A}, P, R_{1:d}, \Omega, \gamma \rangle,
\)
where $\mathcal{S}$ is the state space, $\mathcal{A}$ is the action space, $P(s' \mid s, a)$ is the transition probability function, $R_i(s, a)$ for $i = 1, \dots, d$ are $d$ objective-specific reward functions, $\Omega := \{ \omega \in \mathbb{R}_{\ge 0}^d | \sum_{i=1}^d \omega_i = 1 \}$ denotes the space of preference weights, and $\gamma \in [0,1)$ is the discount factor.

At each timestep $t$, the agent observes state $s_t$, chooses an action $a_t$, and receives a reward vector $r_t = [R_1(s_t, a_t), \dots, R_d(s_t, a_t)]^\top \in \mathbb{R}^d$. Given a preference vector $\omega \in \Omega$, the overall goal is to find a policy $\pi_w$ that maximizes the expected scalarized return:
\(
\mathbb{E}_{\pi}\left[\omega^\top\sum_{t=0}^{\infty} \gamma^t \cdot  r_t\right].
\)
The unweighted vector return corresponding to a policy $\pi$ is given by: 
\(
G^\pi := \mathbb{E}_{\pi}\left[\sum_{t=0}^{\infty} \gamma^t  r_t\right].
\)


\textbf{Pareto Optimality.} Since no single policy can be optimal for all preferences simultaneously, the goal of MORL is to approximate the \emph{Pareto front}, which is a set of non-dominated policies.

\begin{definition}[Pareto Dominance]
Let $u, v \in \mathbb{R}^d$ be two cumulative return vectors. Then $u$ \emph{dominates} $v$ (denoted $u \succ v$) if $u_i \geq v_i$ for all $i$, and there exists at least one objective $j$ such that $u_j > v_j$.
\end{definition}

\begin{definition}[Pareto-Optimal Policy]
A policy $\pi$ with a return vector $G^\pi \in \mathbb{R}^d$ is \emph{Pareto-optimal} if there is no other policy $\pi'$ such that $G^{\pi'}$ dominates $G^\pi$. 
\end{definition}


To evaluate MORL algorithms, we use key metrics that quantify both the quality and diversity of the learned Pareto front. \textbf{Hypervolume (HV)} measures the volume of the objective space dominated by the discovered front, encouraging both Pareto-dominance and spread. \textbf{Sparsity (SP)} measures the evenness of the discovered solutions along the front, with lower values indicating better coverage. \textbf{Expected Utility (EU)} measures the average performance across a distribution of sampled preference weights.
Together, these metrics assess both the fidelity (HV, EU) and diversity (SP) of the solutions.

%% file: documentBody/4-method.tex
\section{Method
\draftStatus{Page Budget: 2.5pg
Status: TSA drafted}
}
\label{sec:method}

We propose \textbf{Decomposed, Diversity-Driven Policy Optimization (\toolname{})}, an extension of the standard PPO framework for learning a single preference-conditioned policy that operates across a continuous spectrum of user-specified trade-offs.
While prior work on universal policies has demonstrated the feasibility of conditioning on preferences, many approaches rely on early scalarization of multi-objective signals, which can reduce the effective learning signal and introduce gradient interference under conflicting objectives.

\toolname{} addresses this limitation through a decomposed optimization framework that maintains per-objective reward and advantage signals throughout training. Rather than aggregating objectives upfront, the method processes each objective independently before combining them, allowing the optimizer to better utilize conflicting signals. In addition, \toolname{} introduces a diversity-driven regularization term that encourages preference-dependent behavioral differentiation within a single policy. This enables the learned policy to produce distinct behaviors across the preference simplex $\omega \in \mathbb{R}^d \text{ s.t. } \sum \omega = 1, ;\omega \geq 0$.

As illustrated in Figure~\ref{fig:teaser}, the $D^3PO$ framework follows a decomposed optimization pipeline. A \textbf{Multi-Head Critic} estimates objective-specific value functions, which are used to compute per-objective Generalized Advantage Estimates (GAE). These advantage signals are then passed through \textbf{Per-Objective PPO Surrogates}, ensuring that each objective is stabilized before aggregation. The resulting updates are combined via \textbf{Late-Stage Weighting} using the preference vector $\omega$. Finally, a \textbf{Diversity Regularizer} is applied to the actor objective to encourage distinct behaviors across nearby preferences, mitigating representational collapse.

\subsection{Methodological Contributions}

The key contributions of \toolname{} lie in two design modifications that adapt PPO to the multi-objective setting. A detailed description of the full algorithm is provided in Algorithm~\ref{alg:mo-ppo-single} (Appendix~\ref{appendix:code}), along with supporting lemmas and propositions.

\textbf{Decomposed Policy Optimization via Late-Stage Weighting:}
\toolname{} computes the PPO clipped surrogate objective independently for each of the $d$ objective-specific advantages. The resulting per-objective losses are then combined using the preference weights $\omega$ to form the final policy update. By applying weighting only after the clipping operation, this \emph{Late-Stage Weighting (LSW)} ensures that PPO’s trust-region mechanism operates directly on per-objective signals before aggregation. As shown in Proposition~\ref{cor:hierarchy}, this ordering preserves more informative advantage signals under conflicting objectives and reduces cancellation effects associated with Early Scalarization (ES) and Mid-stage Vectorial Scalarization (MVS). This process is illustrated in Figure~\ref{fig:teaser}, where per-objective PPO surrogate losses are combined through preference weighting to construct the final objective.

\textbf{Scaled Diversity Regularization:}
To mitigate representational collapse in the preference-conditioned policy, \toolname{} introduces a diversity regularization term in the actor objective. This term encourages the KL divergence between action distributions to scale with the distance between their corresponding preference vectors. As discussed in Proposition~\ref{lem:diversity_theory}, this regularizer promotes preference-dependent behavioral differentiation by discouraging identical policies for distinct preferences. In practice, it helps the learned policy maintain diverse behaviors across the preference simplex. This component is depicted in Figure~\ref{fig:teaser} as the diversity regularizer added to the LSW-based objective.

\subsection{Per-Objective Advantage and Value Estimation}

Following trajectory collection, we compute the Generalized Advantage Estimate (GAE) independently for each of the $d$ objectives, resulting in a vector-valued advantage $\mathbf{A}t \in \mathbb{R}^d$. The critic network, $V\phi(s, \omega)$, approximates the corresponding per-objective state-value functions and is central to this computation.

The critic adopts a multi-head architecture (Figure~\ref{fig:teaser}, green), where a shared feature encoder processes the state $s$ and preference vector $\omega$, followed by $d$ separate output heads. Each head $V_\phi^{(i)}$ predicts the \textbf{unweighted value} for objective $i$. The critic is trained by minimizing the mean squared error between predicted values and empirical unweighted returns $G_t^{(i)}$ across all objectives.

\textbf{Conditioning on Preferences.}
Although the critic predicts unweighted returns, it is conditioned on the preference vector $\omega$. This is necessary because the policy $\pi(\cdot \mid s, \omega)$ depends on $\omega$, and therefore the state distribution and expected future returns are also preference-dependent. The critic thus estimates $V_{\pi_\omega}^{(i)}(s)$, the expected return for objective $i$ under the preference-conditioned policy, making conditioning on $\omega$ essential for accurate value estimation.

\textbf{Normalization.}
We apply standard reward and advantage normalization techniques to stabilize training. These transformations rescale per-objective signals before they are passed to the policy update, helping balance gradients across objectives with different magnitudes. We further analyze the role of normalization within \toolname{} in comparison to existing approaches, highlighting its interaction with the decomposed optimization pipeline. 

\subsection{Policy Optimization with Decomposed Gradients and Diversity Regularization}

We update the actor network, $\pi_\theta(a \mid s, \omega)$ (Figure~\ref{fig:teaser}, green), over $K$ epochs for each batch. The policy optimization objective combines a decomposed PPO surrogate with a diversity-promoting regularizer to improve generalization across the preference space.

\textbf{Per-Objective Policy Loss.} 
We compute the PPO clipped surrogate objective independently for each of the $d$ advantage estimates (Figure~\ref{fig:teaser}, PPO Surrogate Losses), allowing each objective to be processed before preference aggregation:
\begin{equation}\label{eq:ppo_loss}
\mathcal{L}_{\text{clip}}^{(i)}(\theta) = \mathbb{E}_t\left[
\min\left(\rho_t(\theta) A_t^{(i)}, \;
\text{clip}(\rho_t(\theta), 1 - \epsilon, 1 + \epsilon) A_t^{(i)}\right)
\right]
\end{equation}
where the probability ratio is 
\(
\rho_t(\theta) = \frac{\pi_\theta(a_t \mid s_t, \omega)}{\pi_{\theta_{\text{old}}}(a_t \mid s_t, \omega)}.
\)
This formulation ensures that PPO’s clipping mechanism operates on per-objective advantage signals prior to aggregation, reducing cancellation effects that arise under early scalarization.

\textbf{Role of Clipping.} 
The clipping operation introduces a non-linear transformation of the advantage signal, limiting the influence of large policy updates. Empirically, we observe that a substantial fraction of updates are affected by clipping (Section~\ref{sec:experiments}), indicating that it plays a key role in shaping the optimization dynamics under decomposed objectives.

\textbf{Diversity-Promoting Regularization.}
Preference-conditioned policies may map different preference vectors $\omega$ to similar behaviors. To encourage preference-dependent differentiation, we introduce a diversity regularizer. For each sampled preference $\omega$, we generate a nearby preference $\omega'$ by adding small Gaussian noise and projecting back onto the simplex (Figure~\ref{fig:teaser}, Diversity Regularizer).

We then penalize deviations between the divergence of action distributions and the distance between preferences:
\begin{equation}
\begin{split}
\mathcal{L}_{\text{diversity}}(\theta) = \mathbb{E}_t \Big[
\big( D_{\mathrm{KL}}(\pi_\theta(\cdot \mid s_t, \omega) \,\|\, \pi_\theta(\cdot \mid s_t, \omega')) 
- \alpha \|\omega - \omega'\|_1 \big)^2
\Big]
\end{split}
\label{eq:diversity_loss}
\end{equation}
As discussed in Proposition~\ref{lem:diversity_theory}, this regularizer encourages alignment between preference distance and behavioral divergence, mitigating representational collapse.

\textbf{Final Actor Objective.}
The actor objective combines policy improvement and diversity regularization:
\begin{equation}\label{eq:final_loss}
\mathcal{L}_{\text{actor}}(\theta) = 
- \sum_{i=1}^d \omega_i \mathcal{L}_{\text{clip}}^{(i)}(\theta)
+ \lambda_{\text{div}} \mathcal{L}_{\text{diversity}}(\theta).
\end{equation}

Here, the preference weights $\omega_i$ determine the relative importance of each objective in the final update. By applying these weights after per-objective clipping, the resulting gradient reflects a preference-weighted combination of stabilized objective-specific updates. The coefficient $\lambda_{\text{div}}$ controls the strength of the diversity regularization, balancing policy improvement with preference-dependent behavioral differentiation.

%% file: documentBody/5-theory.tex
\section{Analysis}
\label{sec:analysis}

The performance of \toolname{} arises from a combination of design choices that address two key challenges in preference-conditioned MORL: 
(1) achieving \textbf{stable credit assignment} under conflicting objectives, and 
(2) ensuring that a single policy \textbf{generalizes across the continuous preference manifold} without collapsing. 
Our framework addresses these challenges through three complementary components: decomposed value estimation, late-stage preference integration, and diversity regularization. These design choices are motivated by empirical diagnostics and supported by theoretical analysis (Appendix~\ref{sec:LSW_theory}).

\textbf{Stable Credit Assignment via Decomposition.} 
\toolname{} begins with decomposed optimization at the critic level. A multi-head critic predicts the unweighted expected return $V^{(i)}(s,\omega)$ for each objective $i$, and advantages are computed independently, yielding a vector-valued advantage $\mathbf{A}_t$. This preserves distinct per-objective learning signals prior to aggregation.

When objectives conflict, scalarizing advantages into a single value can reduce the magnitude of the resulting signal due to cancellation effects. As shown in Proposition~\ref{prop:magnitude_bound}, the scalarized advantage $|A_t^\omega|$ is strictly smaller than the sum of individual magnitudes whenever objectives have opposing signs. Empirically (Section~\ref{subsec:advantage_cancellation}), we observe frequent gradient conflict, with negative cosine similarity between objectives occurring in up to 53\% of training steps. Under these conditions, early scalarization can substantially reduce the aggregated advantage magnitude (up to $\sim$60--65\% in our experiments), limiting the effective learning signal available during optimization. This helps explain why methods relying on early scalarization can struggle on complex multi-objective tasks.

\textbf{Preference Integration via Late-Stage Weighting.} 
While decomposition preserves per-objective signals, preference information must still be incorporated to guide learning. A key distinction between methods lies in when scalarization occurs within the optimization pipeline. In Early Scalarization (ES), objectives are combined before the optimization step, which can lead to irreversible cancellation. Mid-stage Vectorial Scalarization (MVS) introduces per-objective preprocessing but still aggregates signals before the update.

\toolname{} instead employs \emph{Late-Stage Weighting (LSW)}, where the PPO clipped surrogate objective is computed independently for each objective and combined only after per-objective updates are stabilized. This ordering allows the optimization process to operate on decomposed signals prior to aggregation, reducing cancellation effects and preserving more informative gradients under conflicting objectives. 

Formally, Proposition~\ref{prop:magnitude_bound} characterizes the signal reduction induced by early scalarization, while Proposition~\ref{prop:es_mvs_equiv} shows that ES and MVS are equivalent in the absence of per-objective preprocessing. Proposition~\ref{prop:mvs_distortion}
 and Corollary~\ref{cor:hierarchy} establish that LSW preserves more signal than ES and MVS under conflicting objectives, providing a principled justification for the ordering:
\[
    \text{LSW} \;\succ\; \text{MVS} \;\succeq\; \text{ES},
\]
in terms of signal preservation prior to aggregation. LSW does not eliminate scalarization, but changes when it occurs within the optimization pipeline, allowing per-objective signals to be processed before aggregation.

\textbf{Mitigating Collapse via Diversity Regularization.} 
Stable credit assignment alone does not guarantee that the learned policy will represent the full range of trade-offs. A common failure mode of preference-conditioned policies is representational collapse, where distinct preferences map to similar behaviors.

To address this, \toolname{} introduces a scaled diversity regularizer that encourages the divergence between action distributions to reflect the distance between their corresponding preference vectors. Proposition~\ref{lem:diversity_theory} shows that this regularizer introduces a penalty when distinct preferences produce similar policies, encouraging preference-dependent behavioral differentiation. While the final solution depends on joint optimization with the PPO objective, this regularization provides a consistent gradient signal that mitigates collapse and improves coverage of the preference space in practice.

\textbf{Convex Coverage and Limits of Linear Scalarization.} 
It is important to situate \toolname{} within the theoretical limits of linear scalarization. Prior work (e.g., CAPQL~\cite{capql}) shows that for discounted MDPs with stationary policies, the achievable objective set is convex, and linear scalarization is sufficient to recover the convex Pareto front. However, in environments with concave Pareto fronts, linear methods (including \toolname{} and standard baselines) are fundamentally limited to the convex coverage set.

Thus, the improvements observed with \toolname{} do not arise from overcoming geometric limitations of scalarization, but from improving optimization within this regime. By preserving per-objective signals and reducing cancellation, \toolname{} more effectively realizes the potential of linear scalarization in practice.

\textbf{Simplicity as Evidence.} The simplicity of LSW is not a limitation but the point. We provide three converging lines of evidence that the dominant failure mode in preference-conditioned MORL is premature scalarization, not insufficient expressivity: (1) empirical diagnostics showing 50–65\% advantage magnitude cancellation at every update step (Figure~\ref{fig:advantage_cancellation_grid}); (2) gradient conflict between objective pairs occurring in 30–55\% of training steps across all benchmarks (Figure~\ref{fig:gradient_prevalence_grid}); and (3) the hierarchy lemma (Corollary~\ref{cor:hierarchy}) proving analytically that ES discards valid gradient signal that LSW preserves whenever the trust region binds under conflicting objectives. That a minimal reordering of existing operations with no new parameters, no quadratic programs, no ensemble overhead yields SOTA by a large margin across all evaluated benchmarks is itself evidence that prior methods were bottlenecked by this pathology rather than by representational capacity.

\textbf{Sample Efficiency and Deployment.} 
Finally, the contrast between \toolname{} and multi-policy methods highlights the benefits of a unified policy representation. Multi-policy approaches train separate networks for different preferences, which can be costly in terms of both data and computation. In contrast, \toolname{} learns a single preference-conditioned policy that shares representations across the preference space, enabling more efficient use of data during training and direct generalization to unseen preferences.

This also improves deployment efficiency. Rather than storing and selecting among multiple policies, \toolname{} maps any preference vector $\omega$ directly to an action through $\pi(a \mid s, \omega)$, enabling continuous and efficient adaptation across the preference space with significantly lower parameter overhead.

%% file: documentBody/6-experiments.tex
\section{Experiments
\draftStatus{Page Budget: 2.5pg
Status: TSA drafted}}
\label{sec:experiments}

\input{table/1-monolithic} 

\input{figure/1-fronts}

We evaluate our proposed method, \textbf{\toolname{}}, against state-of-the-art baselines to answer two key questions:
(1) Does \toolname{} achieve comprehensive Pareto front coverage?
(2) Does it effectively reduce mode collapse and generate diverse solutions?

Our evaluation uses a suite of challenging MORL tasks from the MO-Gymnasium library~\citep{mo-gymnasium}, including five continuous control and two discrete control environments, and additionally the Building-9d environment, introduced in~\cite{cmorl}.
We compare \toolname{} against six strong baselines: 
\textbf{CAPQL}~\citep{capql}, \textbf{MORL/D}~\cite{morl_decomposition}, \textbf{PG-MORL}~\citep{pgmorl}, \textbf{PreCo}~\cite{preco}, \textbf{C-MORL}~\citep{cmorl}. GPI-LS~\cite{gpi-ls} was unable to complete any environment in the budget of 5 days, so was excluded from this table. 
For discrete-action environments, the number of environment interactions was $2 \times 10^6$ steps.
For the all continuous control environments, we have used $5 \times 10^6$ steps. 
We measure performance with Hypervolume (HV), Expected Utility (EU) and Sparsity (SP). Sparsity has been identified as a noisy and potentially misleading metric in practice~\cite{felten_toolkit_2023}, particularly when a method discovers a larger hypervolume, causing solutions to be more spread out by construction. We report SP for comparability with prior work but treat HV and EU as primary indicators of front quality.
We note the discrepancies between our baseline evaluations and the original C-MORL and PG-MORL papers. First, they evaluate on older \texttt{v4} environments rather than the updated \texttt{v5} suite. Second, their reported sample efficiencies do not account for parallelization; by failing to multiply their stated timestep budget by the 6--8 parallel processes used during training, their true environmental interaction cost is 6$\times$ to 8$\times$ higher than explicitly claimed. We have ensured that the total number of interactions for all baselines and \toolname{} are exactly the same and have provided the code in the supplementary material.


\textbf{\toolname{} achieves State-of-the-art Pareto Front Coverage.} The results in Table~\ref{tab:continuous}, \ref{tab:discrete} and Figure~\ref{fig:pareto-fronts} show that \toolname{} finds the most dominant and complete solution sets, providing a strong leap over the current SOTA baselines. 
Visually, the Pareto fronts in Figure~\ref{fig:pareto-fronts} show \toolname{} (red) discovering solutions that envelop the baselines. In all cases, \toolname{} finds densely packed solutions at extremes in the trade-off space that all methods miss. 
Quantitatively, \toolname{} discovered pareto front has a significantly higher hypervolume and the expected utility is much better than the existing baselines. In \textbf{Ant-2d}, \toolname{} has a higher sparsity metric due to the larger hypervolume (2x gain compared to best baseline). This increase in hypervolume contributes to more distant solutions which results in higher sparsity. In all other cases, the sparsity is much lower, showing extremely dense coverage of the pareto front. 
This superior coverage stems from our core methodological contributions.
By computing a vectorized, per-objective advantage and using decomposed policy gradients, \toolname{} mitigates the destructive gradient interference problem in MORL.
This process preserves more stable per-objective credit assignment signals, boosting the policy's ability to better exploit the reward landscape and learn a broader range of trade-offs. 


\textbf{Diversity Regularization reduces Mode Collapse.} Since we observe high advantage cancellation rates and frequently conflicting gradients, we study how the regularizer provides pressure to prevent collapse. As observed in Figure~\ref{fig:pareto-fronts}, \toolname{} discovers multiple solutions and shows no evidence of collapse in the evaluated settings while other baselines like \textbf{CAPQL}, \textbf{PG-MORL} and \textbf{GPI-LS} collapse. This is particularly visible in Ant-2D and Humanoid-2D, where the methods have only discovered 1-2 pareto optimal solutions.  

The most direct evidence is in the MO-Humanoid-2d results (Table~\ref{tab:continuous}), where several baselines report a Sparsity (SP) of $0$.
This indicates a total collapse to a single dominant policy.
In contrast, \toolname{} achieves a low but non-zero SP ($0.003 \times 10^4$), demonstrating that it has learned a diverse and well-distributed set of policies across the front.
The visual results in Figure~\ref{fig:pareto-fronts} further confirm that \toolname{} discovers rich, well-spaced pareto fronts.

Diverse policies are primarily due to our proposed scaled diversity regularization.
As shown in our ablation study (Table~\ref{tab:ablations}), removing the diversity loss (\toolname{}-DR) results in a clear performance drop and, in some cases, collapse to a single-point front (e.g., Humanoid-2d).
This highlights that explicitly encouraging the policy to produce distinct behaviors for distinct preferences is critical for discovering a complete and useful Pareto front.


\textbf{Ablations.}
We conducted ablation experiments to understand the impact of our changes - Late Stage Weighting \textbf{(LSW)} and Diversity-Regularization \textbf{(DR)}. First, we remove \textbf{LSW} by multiplying the preference weights with the advantages after rollout collection, thereby collecting the weighted advantages instead of the unweighted advantages (in effect, ES).
In this experiment, we do not remove the diversity loss.
Second, we turn off the diversity loss and keep the original decomposed gradient function.

Table~\ref{tab:ablations} shows that both additions are necessary for \toolname{}'s success.
Turning off LSW (column 2), makes the performance suffer considerably.
This shows that learning accurate unweighted returns is necessary to drive correct gradient updates.
When we turn on \textbf{LSW} and turn off \textbf{DDPO} (column 3), we see that the performance improves significantly but it still does not fully approximate the whole front.
In both cases, the policies converged prematurely to a single point front in the Humanoid environment. For Hopper and Ant the combination of low HV, EU and SP values shows that they discovered an inferior Pareto front compared to \toolname{}. 
These experiments show that both innovations are necessary to learn robust policies that approximate a high quality Pareto Front in the single-policy MORL setting. Further ablations with $\lambda_{div}, \alpha$ and noise scale show \toolname{}'s resistance to hyperparameters. 

%% file: table/1-monolithic.tex
\begin{table*}[!h]
\centering
\caption{Performance comparison on \textbf{continuous} environments (Hopper, Ant, Humanoid, Building-9d). Metrics: Hypervolume (HV), Expected Utility (EU) and Sparsity (SP). \textit{T/O} indicates timeout after 5 days.}
\small
\setlength{\tabcolsep}{3pt}
\resizebox{\textwidth}{!}{%
\begin{tabular}{@{}llllllll@{}}
\toprule
\textbf{Environment}
& \textbf{Metrics}
& \textbf{CAPQL}
& \textbf{MORL/D}
& \textbf{PG-MORL}
& \textbf{PreCO}
& \textbf{C-MORL}
& \textbf{\toolname{}}
\\ \midrule

 \multirow{4}{*}{\textbf{Hopper-2d}}
  & HV ($10^5 \uparrow$) & $1.15 \pm 0.08$ & $0.33 \pm 0.00$ & $1.20 \pm 0.09$ & $1.19 \pm 0.10$ & $1.18 \pm 0.05$ & $\mathbf{1.35 \pm 0.08}$ \\
  & EU ($10^2 \uparrow$) & $2.28 \pm 0.07$ & $0.82 \pm 0.01$ & $2.34 \pm 0.10$ & $2.33 \pm 0.10$ & $2.31 \pm 0.05$ & $\mathbf{2.48 \pm 0.081}$ \\
  & SP ($10^2 \downarrow$) & $0.46 \pm 0.10$ & - & $5.13 \pm 5.81$ & $0.49 \pm 0.37$ & $0.92 \pm 0.43$ & $\mathbf{0.11 \pm 0.07}$ \\
\hline

\multirow{4}{*}{\textbf{Hopper-3d}} 
& HV ($10^7 \uparrow$) & $1.62 \pm 0.17$ & $0.59 \pm 0.01$ & $0.63 \pm 0.04$ & $1.70 \pm 0.29$ & $1.62 \pm 0.13$ & $\mathbf{2.20 \pm 0.05}$ \\
& EU ($10^2 \uparrow$) & $1.58 \pm 0.07$ & $0.81 \pm 0.01$ & $0.86 \pm 0.04$ & $1.62 \pm 0.10$ & $1.51 \pm 0.04$ & $\mathbf{1.76 \pm 0.02}$ \\
& SP ($10^2 \downarrow$) & $6.80 \pm 2.38$ & $\mathbf{0.04 \pm 0.05}$ & $0.44 \pm 0.63$ & $0.74 \pm 1.22$ & $0.32 \pm 0.28$ & $0.11 \pm 0.04$ \\
\hline

 \multirow{4}{*}{\textbf{Ant-2d}}
  & HV ($10^5 \uparrow$) & $1.11 \pm 0.69$ & $0.31 \pm 0.00$ & $0.35 \pm 0.08$ & $1.17 \pm 0.25$ & $0.59 \pm 0.02$ & $\mathbf{2.88 \pm 0.03}$ \\
  & EU ($10^2 \uparrow$) & $2.16 \pm 0.94$ & $0.76 \pm 0.01$ & $0.81 \pm 0.23$ & $2.14 \pm 0.19$ & $1.34 \pm 0.03$ & $\mathbf{3.98 \pm 0.34}$ \\
  & SP ($10^3 \downarrow$) & $0.18 \pm 0.07$ & - & $2.20 \pm 3.48$ & $3.61 \pm 2.13$ & $\mathbf{0.13 \pm 0.06}$ & $0.75 \pm 0.36$ \\
\hline

\multirow{4}{*}{\textbf{Ant-3d}} 
& HV ($10^7 \uparrow$) & $1.08 \pm 0.16$ & $0.55 \pm 0.01$ & $0.29 \pm 0.19$ & $0.55 \pm 0.81$ & $0.99 \pm 0.04$ & $\mathbf{3.65 \pm 0.35}$ \\
& EU ($10^2 \uparrow$) & $1.36 \pm 0.23$ & $0.76 \pm 0.01$ & $0.59 \pm 0.17$ & $1.20 \pm 0.20$ & $1.09 \pm 0.02$ & $\mathbf{2.22 \pm 0.11}$ \\
& SP ($10^3 \downarrow$) & $0.58 \pm 0.30$ & $0.07 \pm 0.02$ & $14.06 \pm 19.89$ & $1.96 \pm 0.79$ & $0.01 \pm 0.00$ & $\mathbf{0.005 \pm 0.004}$ \\
\hline

\multirow{4}{*}{\textbf{Humanoid-2d}} 
& HV ($10^5 \uparrow$) & $1.69 \pm 0.21$ & $0.43 \pm 0.00$ & $0.00 \pm 0.00$ & $1.98 \pm 0.02$ & $2.08 \pm 0.09$ & $\mathbf{4.03 \pm 0.05}$ \\
& EU ($10^2 \uparrow$) & $3.11 \pm 0.26$ & $1.08 \pm 0.01$ & $-0.47 \pm 0.04$ & $3.67 \pm 0.02$ & $3.54 \pm 0.12$ & $\mathbf{5.56 \pm 0.18}$ \\
& SP ($10^4 \downarrow$) & $\mathit{0.00 \pm 0.00}$ & $\mathit{0.00 \pm 0.00}$ & $0.49 \pm 0.21$ & $\mathit{0.00\pm 0.00}$ & $0.65 \pm 0.81$ & $\mathbf{0.003 \pm 0.002}$ \\
\hline

\multirow{4}{*}{\textbf{Building-9d}} 
& HV ($10^{31} \uparrow$) & $4.29 \pm 0.73$ & \textit{T/O} & \textit{T/O} & \textit{T/O} & $6.38 \pm 0.08$ & $\mathbf{7.77 \pm 0.11}$ \\
& EU ($10^3 \uparrow$) & $3.31 \pm 0.06$ & \textit{T/O} & \textit{T/O} & \textit{T/O} & $3.44 \pm 0.00$ & $\mathbf{3.50 \pm 0.003}$ \\
& SP ($10^3 \downarrow$) & $4.34 \pm 3.72$ & \textit{T/O} & \textit{T/O} & \textit{T/O} & $1.01 \pm 1.19$ & $\mathbf{0.003 \pm 0.001}$ \\

\bottomrule
\end{tabular}}
\label{tab:continuous}
\end{table*}

%% file: figure/1-fronts.tex
\begin{figure*}[h]
\centering
    \begin{subfigure}{0.195\textwidth}
        \includegraphics[width=\textwidth]{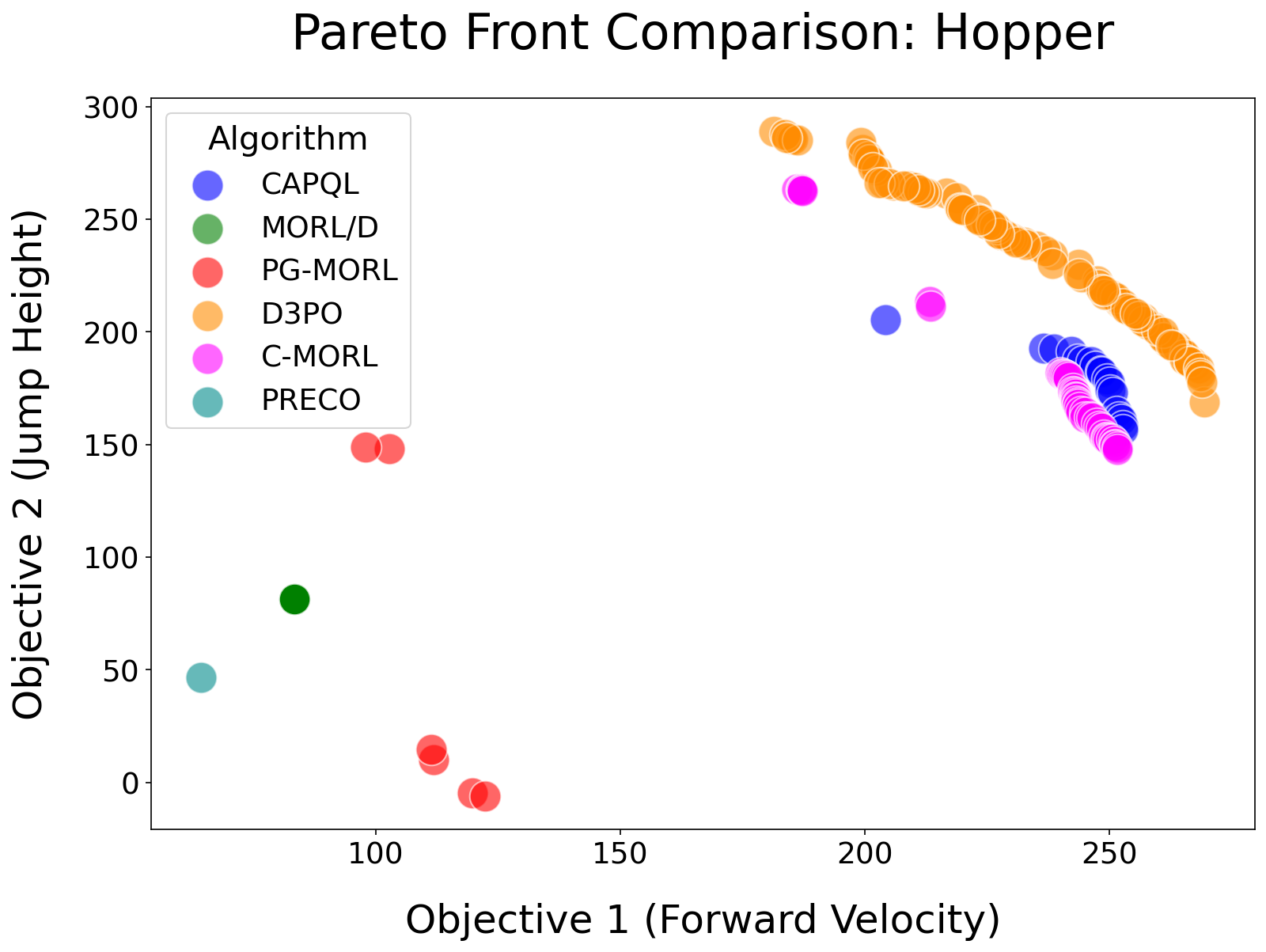}
    \end{subfigure}
    \begin{subfigure}{0.195\textwidth}
        \includegraphics[width=\textwidth]{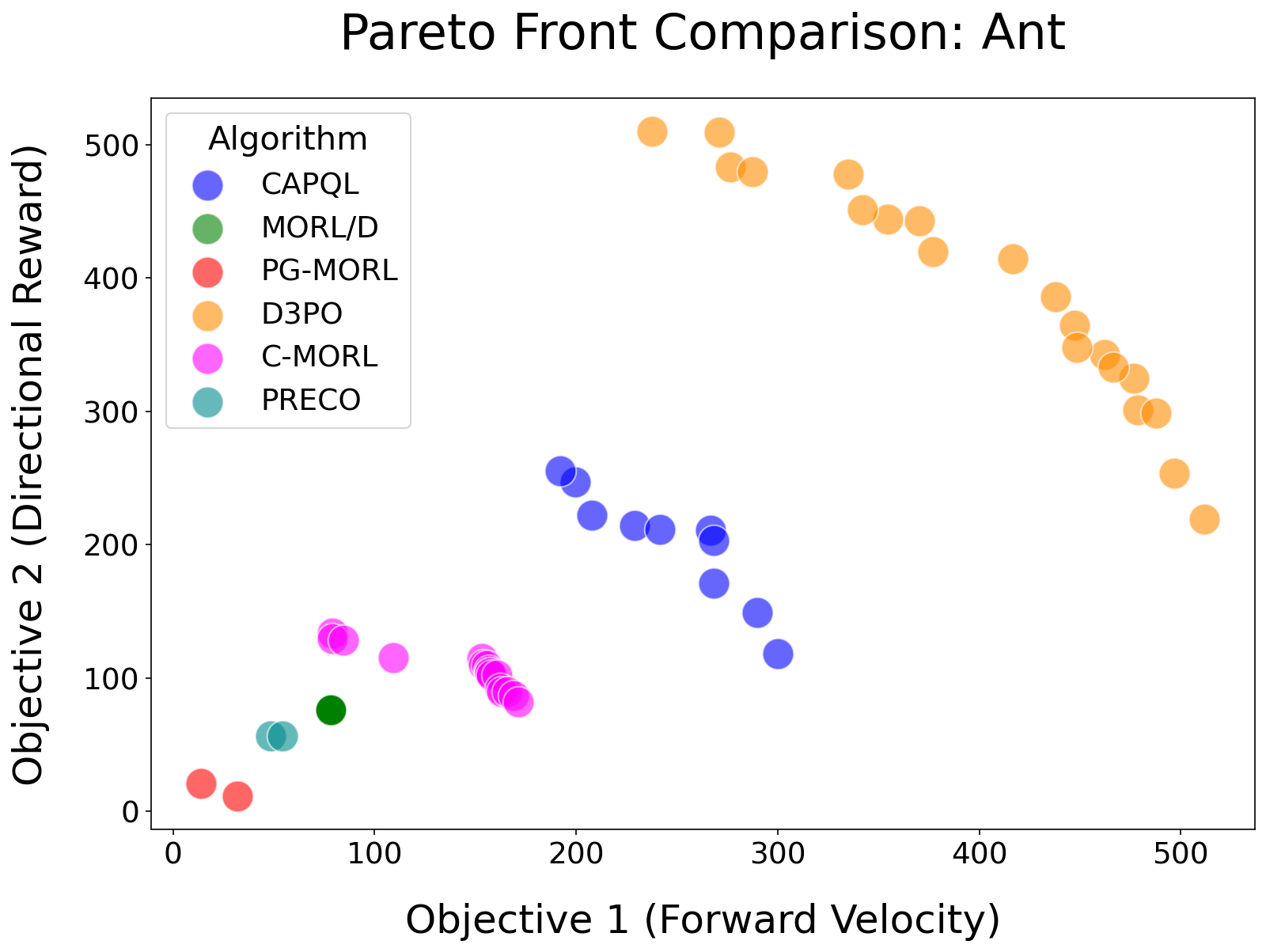}
    \end{subfigure}
    \begin{subfigure}{0.195\textwidth}
        \includegraphics[width=\textwidth]{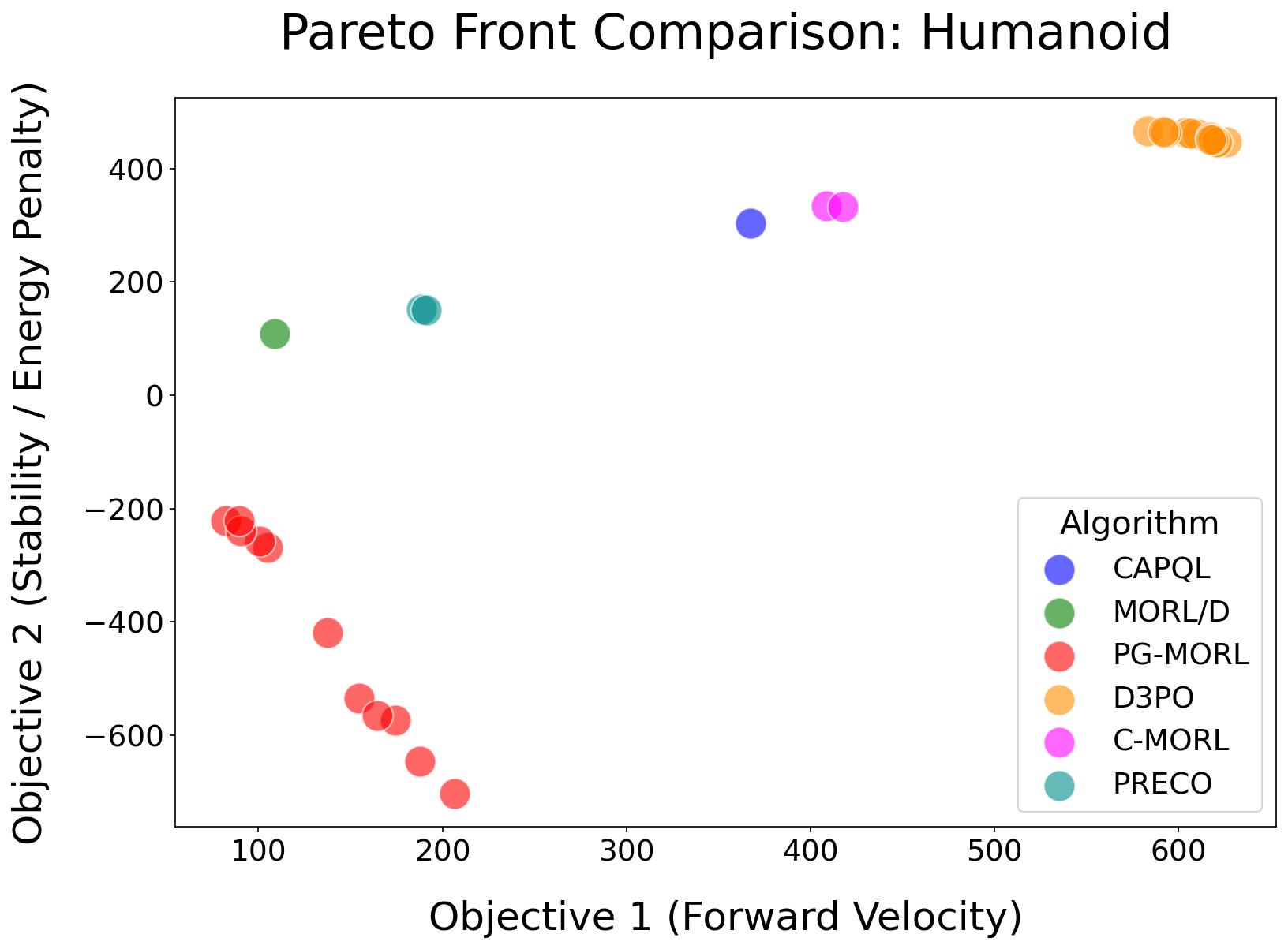}
    \end{subfigure}
    \begin{subfigure}{0.195\textwidth}
        \includegraphics[width=\textwidth]{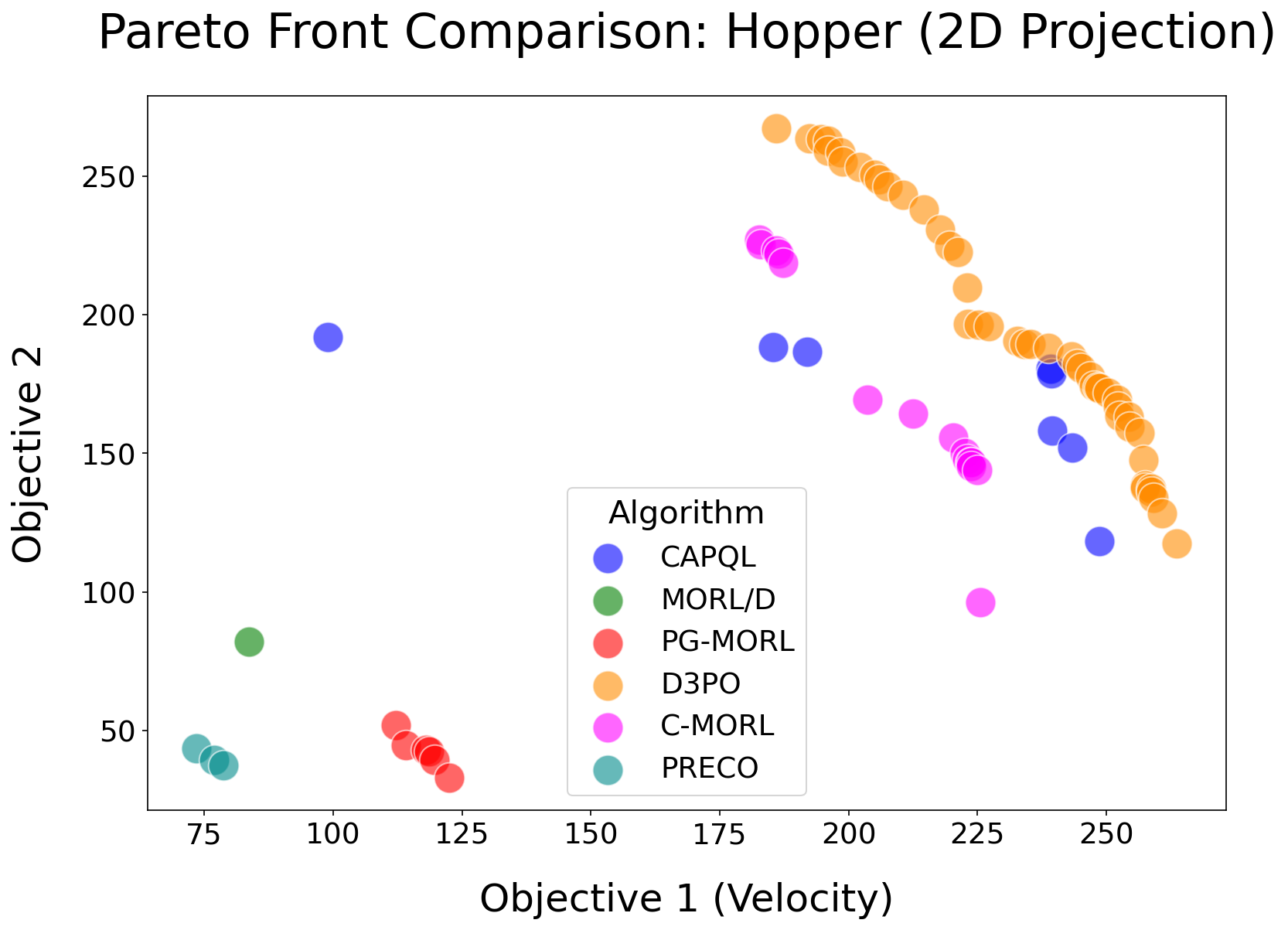}
    \end{subfigure}
    \begin{subfigure}{0.195\textwidth}
        \includegraphics[width=\textwidth]{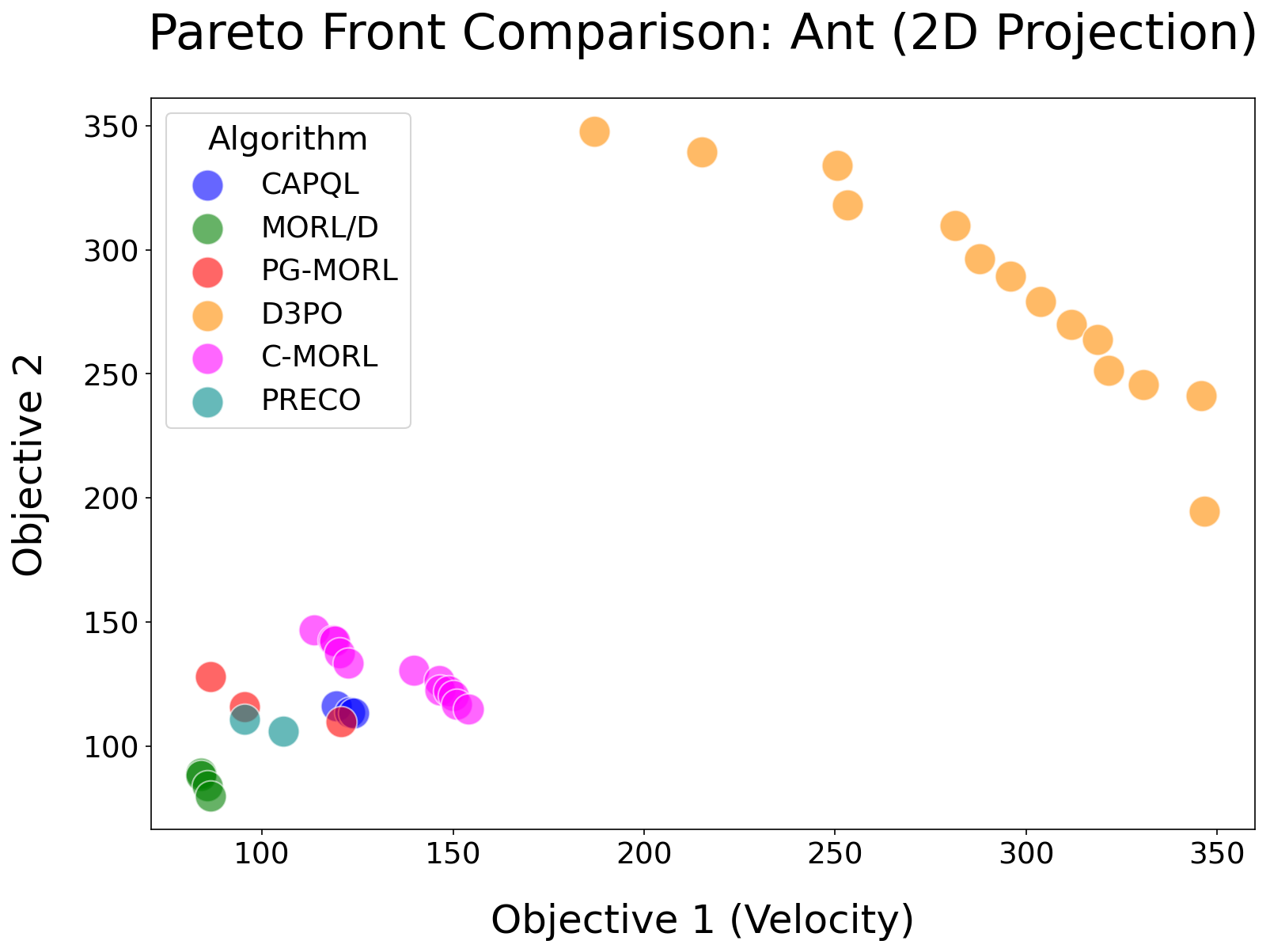}
    \end{subfigure}
   
    \caption{Pareto front comparison on two-objective MO-MuJoCo benchmarks. \toolname{} (red) discovers a uniform and well-distributed front across the trade-off space, whereas C-MORL (blue) refines extreme points at the cost of higher sparsity. Compared to CAPQL, GPI-LS, and PG-MORL, \toolname{} achieves broader coverage and reduced collapse, particularly visible in Ant and Humanoid.}
    \label{fig:pareto-fronts}
\end{figure*}

%% file: documentBody/7-conclusion.tex
\section{Conclusion
\draftStatus{Page Budget: 0.5pg
Status: TSA drafted, JED has reviewed (no notes), AV please review}}
\label{sec:conclusion}

In this work, we introduced \toolname{}, a novel algorithm for training a single, generalizable policy for MORL.
We identified two critical challenges that hinder prior preference-conditioned methods: destructive gradient interference and representational mode collapse.
Our proposed framework addresses these issues through a synergy of two principled mechanisms:
a decomposed optimization process that preserves the integrity of per-objective credit assignment, and a scaled diversity regularization term that enforces a robust and high-fidelity mapping from the preference space to the policy manifold.
Our experiments demonstrate that these two targeted additions to PPO are necessary and sufficient to achieve state-of-the-art MORL performance. \toolname{} discovers more complete and higher-quality Pareto fronts than existing methods, with particularly pronounced advantages in complex, high-dimensional control and many-objective scenarios.

%% file: documentBody/A-code.tex
\section{\toolname{} Pseudocode}
\label{appendix:code}

\input{algorithms/MO-PPO}

%% file: algorithms/MO-PPO.tex
\begin{algorithm*}[!htb]
\caption{Decomposed, Diversity-Driven Policy Optimization}
\label{alg:mo-ppo-single}
\begin{algorithmic}[1]

\REQUIRE 
  Actor $\pi_{\theta}(a \mid s, \omega)$, multi-head critic $V_{\phi}(s, \omega) \in \mathbb{R}^d$,
  Optimizers $\mathrm{Opt}_{\theta}, \mathrm{Opt}_{\phi}$, and hyperparameters $\gamma, \lambda, \epsilon, \beta, \lambda_{\text{div}}, \alpha$

\STATE Initialize network parameters $\theta, \phi$ and rollout buffer $\mathcal{D}$
\STATE Sample an initial preference vector $\omega$ from the preference space $\Omega$
\FOR{iteration $=1, 2, \dots$}
    \STATE Clear rollout buffer $\mathcal{D}$
    
    \FOR{$t=1$ \textbf{to} $T$} 
        \STATE Sample action $a_t \sim \pi_{\theta}( \cdot \mid s_t, \omega)$
        \STATE Execute $a_t$ and observe next state $s_{t+1}$, reward vector $\mathbf{r}_t \in \mathbb{R}^d$, and done flag $d_t$
        \STATE Store transition $(s_t, a_t, \mathbf{r}_t, \omega, \log\pi_{\theta}(a_t \mid s_t, \omega))$ in $\mathcal{D}$
        \STATE $s_t \gets s_{t+1}$
        \IF{$d_t$ is True}
            \STATE Reset environment to get new state $s_t$ and resample a new preference vector $\omega \sim \Omega$
        \ENDIF
    \ENDFOR
    
    \STATE Compute unweighted advantages $\mathbf{A}_t = [A_t^{(1)}, \dots, A_t^{(d)}]$ and returns $\mathbf{G}_t$ for all transitions in $\mathcal{D}$ using GAE with $V_\phi$. 
    
    \FOR{epoch $=1$ \textbf{to} $E$} 
        \FOR{each minibatch $\mathcal{B} \subset \mathcal{D}$}
            \STATE Let $(s, a, \mathbf{A}, \mathbf{G}, \omega, \log\pi_{\text{old}})$ be the data in $\mathcal{B}$
            
            \STATE Predict value vector $\mathbf{V}_\phi(s, \omega) = [V_\phi^{(1)}, \dots, V_\phi^{(d)}]$
            \STATE $\mathcal{L}_{\text{critic}} \gets \frac{1}{d} \sum_{i=1}^d \left(V_\phi^{(i)}(s, \omega) - G^{(i)}\right)^2$
            \STATE Update critic parameters $\phi$ using $\mathrm{Opt}_{\phi}$ and $\nabla_\phi \mathcal{L}_{\text{critic}}$

            \STATE Sample distractor weights $\omega'$ by perturbing and re-normalizing $\omega$
            \STATE Compute per-objective PPO losses $\{\mathcal{L}_{\text{clip}}^{(i)}\}_{i=1}^d$ using unweighted advantages $\mathbf{A}$
            \STATE Compute diversity loss $\mathcal{L}_{\text{diversity}}(\theta) = \mathbb{E}_{s \in \mathcal{B}}\Big[ \big( D_{KL}(\pi_\theta(\cdot \mid s, \omega) \| \pi_\theta(\cdot \mid s, \omega')) - \alpha \|\omega - \omega'\|_1 \big)^2 \Big]$
            \STATE Compute entropy bonus $\mathcal{H} \gets \mathbb{E}_{s \in \mathcal{B}}[\text{H}(\pi_\theta(\cdot \mid s, \omega))]$
            
            \STATE $\mathcal{L}_{\text{actor}} \gets -\left( \sum_{i=1}^d \omega_i \mathcal{L}_{\text{clip}}^{(i)} \right) - \beta \mathcal{H} + \lambda_{\text{div}} \mathcal{L}_{\text{diversity}}$
            \STATE Update actor parameters $\theta$ using $\mathrm{Opt}_{\theta}$ and $\nabla_\theta \mathcal{L}_{\text{actor}}$
        \ENDFOR
    \ENDFOR
\ENDFOR
\end{algorithmic}
\end{algorithm*}

%% file: documentBody/B-metrics.tex
\section{Metrics Definitions}

\begin{definition}[Hypervolume Indicator]
Given a reference point $r \in \mathbb{R}^d$ that all Pareto-optimal returns dominate, the \emph{hypervolume} of a finite set $\{u^k\}$ is, where \textit{LM} stands for Lebesgue Measure:
\[
\mathrm{HV}(\{u^k\}; r) = \text{LM}\left(\bigcup_k \{u \in \mathbb{R}^d : r \leq u \leq u^k\}\right)
\]
\end{definition}

\begin{definition}[Sparsity Indicator]
Let $\{u^1, \dots, u^K\} \subset \mathbb{R}^d$ be an ordered set of Pareto-approximated points.
Define the \emph{sparsity} as:
\[
\mathrm{SP}(\{u^k\}) = \frac{1}{K-1} \sum_{k=1}^{K-1} \| u^{(k+1)} - u^{(k)} \|_2
\]
\end{definition}

\begin{definition}[Expected Utility]
Let $\mathcal{W} \subset \mathbb{R}^d$ be a distribution over preference weights and let $\pi_\omega$ denote the policy conditioned on $\omega$.
The \emph{expected utility} is:
\[
\mathrm{EU} = \mathbb{E}_{\omega \sim \mathcal{W}}[\, \omega^\top G^{\pi_\omega} \,].
\]
\end{definition}

\begin{definition}[Compute Time]
The compute time is defined as the time taken by the algorithm to complete its training given the fixed budget of environment interactions. It is calculated as the wall clock time required to complete the entire training pipeline 
\end{definition}

%% file: documentBody/C-discrete.tex
\section{Discrete Environments Results}

\input{table/1-discrete}

%% file: table/1-discrete.tex
\begin{table*}[!htb]
\centering
\caption{Performance comparison on \textbf{discrete} environments (Minecart, Lunar Lander-4d). Metrics: Hypervolume (HV), Expected Utility (EU), Sparsity (SP).}
\label{tab:discrete}
\small
\begin{tabular}{@{}lllllll@{}}
\toprule
\textbf{Environment}
& \textbf{Metrics}
& \textbf{GPI-LS}
& \textbf{C-MORL}
& \textbf{\toolname{}}
\\ \midrule

\multirow{3}{*}{\textbf{Minecart}} 
& HV ($10^2 \uparrow$)
& $5.05 \pm 0.37$
& $4.65 \pm 0.27$
& $5.35 \pm 0.20$
\\
& EU ($10^{-1} \uparrow$)
& $\mathbf{2.75 \pm 0.17}$
& $0.50 \pm 0.40$
& $\mathbf{3.00 \pm 0.10}$
\\
& SP ($10^{-1} \downarrow$)
& $0.20 \pm 0.05$
& $0.50 \pm 0.60$
& $\mathbf{0.00 \pm 0.00}$
\\ \hline

\multirow{3}{*}{\textbf{Lunar Lander-4d}} 
& HV ($10^9 \uparrow$)
& $1.06 \pm 0.16$
& $0.88 \pm 0.02$
& $\mathbf{1.23 \pm 0.04}$
\\
& EU ($10^1 \uparrow$)
& $1.81 \pm 0.34$
& $1.23 \pm 0.18$
& $\mathbf{2.5 \pm 0.57}$
\\
& SP ($10^3 \downarrow$)
& $0.13 \pm 0.01$
& $0.14 \pm 0.11$
& $0.21 \pm 0.13$ 
\\ \hline
\bottomrule
\end{tabular}
\end{table*}

%% file: documentBody/D-Ablation.tex
\section{Ablation Experiments}
\label{sec:lambda_div}

\input{table/2-humanoid}

\input{table/3-lambda}

Table~\ref{tab:humanoid_ablation} reports ablation results on Humanoid-2d across a sweep of $\lambda_{\text{div}}$ values. \textbf{All ablations were run with a smaller budget of $1.5 \times 10^6$ steps.} The purpose of the ablation budget is to isolate the contribution of each component under the same reduced budget, not to reproduce Table~\ref{tab:continuous} numbers. 
These results demonstrate that the diversity regularizer itself plays a critical role in shaping the discovered Pareto front. 
Without diversity encouragement ($\lambda_{\text{div}}=0$), the algorithm collapses toward limited modes, yielding weaker hypervolume and expected utility despite producing seemingly low sparsity values.
Introducing a nonzero regularizer ($\lambda_{\text{div}}>0$) resolves this issue by reducing mode collapse and maintaining broad front coverage, thereby producing substantially stronger Pareto sets.

At the same time, the quantitative metrics reveal that the performance is relatively insensitive to the precise choice of $\lambda_{\text{div}}$. 
Across the range $\lambda_{\text{div}} \in \{0.01, 0.1, 0.5, 1.0\}$, hypervolume and expected utility remain consistently high, and sparsity values remain comparable. 
This indicates that while the presence of the diversity term is essential, its specific scaling does not heavily influence the outcome. 
Overall, these ablations reinforce that the diversity regularizer is the key mechanism enabling robust front discovery, and that the method is not fragile to the exact tuning of $\lambda_{\text{div}}$.

\input{table/4-alpha}

Table~\ref{tab:alpha_ablation} reports similar results. When $\alpha = 0$, the weights scaling parameter is turned off. This keeps the KL term active, and the loss function now tries to minimize the KL. By minimizing the KL, the function actively promotes collapse. Thus, $\alpha$ is an extremely important parameter. When $\alpha = 0.1$ and $\alpha = 1$, the results are similar. This shows that \toolname{} is robust to the values of the weight parameter. Choosing a very high value $\alpha=10$ is also detrimental to performance, as that term dominates the loss function. Thus, a reasonable choice for $\alpha$ is between 0.1 and 1.

%% file: table/2-humanoid.tex
\begin{table*}
\centering
\caption{Ablation results showing the contributions of Late Stage Weighting (LSW) and Diversity-Driven Policy Optimization (DDPO) in \toolname{}.
LSW improves stability but often collapses the Pareto front (SP = 0), while DDPO preserves diversity and yields more uniform fronts. The full \toolname{} consistently achieves the best trade-off across HV, EU, and SP.
}
\label{tab:ablations}
\begin{tabular}{@{}lllll@{}}
\toprule
\textbf{Environment} & \textbf{Metrics} & \textbf{\toolname{}} & \toolname{}$\setminus$\textbf{LSW} & \toolname{}$\setminus$\textbf{DDPO} \\ \midrule

\multirow{3}{*}{Humanoid-2d} 
& HV ($10^5 \uparrow$)     & $\mathbf{3.76 \pm 0.11}$ & $1.50 \pm 0.17$ & $2.32 \pm 0.05$ \\ 
& EU ($10^{2} \uparrow$)  & $\mathbf{5.11 \pm 0.09}$ & $2.87 \pm 0.22$ & $3.83 \pm 0.05$ \\
& SP ($10^{4} \downarrow$) & $\mathbf{0.003 \pm 0.001}$ & $0^*$ & $0^*$ \\ \midrule

\multirow{3}{*}{Hopper-2d} 
& HV ($10^5 \uparrow$)     & $\mathbf{1.30 \pm 0.03}$ & $1.23 \pm 0.03$ & $1.22 \pm 0.06$ \\ 
& EU ($10^2 \uparrow$)     & $\mathbf{2.47 \pm 0.01}$ & $2.38 \pm 0.05$ & $2.42 \pm 0.05$ \\
& SP ($10^2 \downarrow$)   & $0.26 \pm 0.31$ & $0.08 \pm 0.02$ & $\mathbf{0.04 \pm 0.02}$ \\ \midrule

\multirow{3}{*}{Ant-2d} 
& HV ($10^5 \uparrow$)     & $\mathbf{1.91 \pm 0.18}$ & $1.53 \pm 0.11$ & $1.86 \pm 0.07$ \\ 
& EU ($10^2 \uparrow$)     & $\mathbf{3.14 \pm 0.21}$ & $2.71 \pm 0.13$ & $3.09 \pm 0.06$ \\
& SP ($10^3 \downarrow$)   & $0.66 \pm 0.40$ & $\mathbf{0.18 \pm 0.07}$ & $0.36 \pm 0.09$ \\ 

\bottomrule
\end{tabular}
\end{table*}

%% file: table/3-lambda.tex
\begin{table}[H]
\centering

\caption{Ablation results on MO-Humanoid-2d across different values of $\lambda_{\text{div}}$. 
The results show that the discovered Pareto front remains stable and high-performing over a wide range of $\lambda_{\text{div}}$, indicating robustness of the method to this hyperparameter.}

\label{tab:humanoid_ablation}
\begin{tabular}{@{}lccccc@{}}
\toprule
\textbf{Metric} & $\lambda_{\text{div}}=0$ & $\lambda_{\text{div}}=0.01$ & $\lambda_{\text{div}}=0.1$ & $\lambda_{\text{div}}=0.5$ & $\lambda_{\text{div}}=1.0$ \\ \midrule

HV ($10^5 \uparrow$)   & $2.32 \pm 0.05$ & $\mathbf{3.76 \pm 0.11}$ & $3.73 \pm 0.07$ & $3.72 \pm 0.10$ & $3.73 \pm 0.07$ \\ 
EU ($10^{2} \uparrow$) & $3.83 \pm 0.05$ & $\mathbf{5.11 \pm 0.09}$ & $5.08 \pm 0.06$ & $5.07 \pm 0.09$ & $5.07 \pm 0.06$ \\ 
SP ($10^{3} \downarrow$) & $0^*$ & $\mathbf{0.03 \pm 0.01}$ & $0.047 \pm 0.045$ & $0.059 \pm 0.044$ & $0.053 \pm 0.032$ \\ 

\bottomrule
\end{tabular}
\end{table}

%% file: table/4-alpha.tex
\begin{table}[H]
\centering
\begin{tabular}{@{}lcccc@{}}
\toprule
\textbf{Metric} & $\alpha=0$ & $\alpha=0.1$ & $\alpha=1$ & $\alpha=10$ \\ 
\midrule

HV ($10^{5}\!\uparrow$) 
& $2.50 \pm 0.12$
& $3.71 \pm 0.08$
& $\mathbf{3.76 \pm 0.11}$
& $3.20 \pm 0.10$ \\

EU ($10^{2}\!\uparrow$)
& $3.90 \pm 0.09$
& $5.03 \pm 0.07$
& $\mathbf{5.11 \pm 0.09}$
& $4.80 \pm 0.27$ \\

SP ($10^{3}\!\downarrow$)
& $0^{*}$
& $0.07 \pm 0.02$
& $\mathbf{0.03 \pm 0.01}$
& $0.12 \pm 0.08$ \\

\bottomrule
\end{tabular}
\caption{Ablation results on MO-Humanoid-2d across different values of $\alpha$.}
\label{tab:alpha_ablation}
\end{table}

%% file: documentBody/E-gradient_analysis.tex
\section{A Gradient-Level Analysis of Scalarization in Multi-Objective RL}
\label{sec:scalarization_analysis}

A central design choice in any multi-objective reinforcement learning algorithm is how the vector-valued objective $\vec{J}(\pi) \in \mathbb{R}^d$ is reduced to a scalar signal suitable for gradient-based policy optimization.
Despite the diversity of algorithmic frameworks proposed in recent years, we argue that many existing methods, whether framed as population-based search, constrained optimization, or preference-conditioned learning, fundamentally rely on \emph{linear scalarization} as their core optimization primitive.
In this section, we provide a precise, gradient-level classification of existing approaches, identify the structural limitations that linear scalarization imposes when combined with trust-region methods like PPO, and explain how \toolname{} addresses these limitations without abandoning the linear paradigm.

\subsection{Linear Scalarization as the Universal Primitive}

Given a preference vector $\omega \in \Delta^{d-1}$, linear scalarization defines the optimization target as $\max_\pi \; \omega^\top \vec{J}(\pi)$. This formulation is attractive for two reasons: (i) it is compatible with standard single-objective RL algorithms, requiring only that rewards be pre-combined as $r_\omega(s,a) = \omega^\top \vec{r}(s,a)$, and (ii) for discounted MDPs with stationary policies, the achievable objective set $\mathcal{J}$ is a convex polytope~\citep{capql}, guaranteeing that every point on the \emph{convex} Pareto front can be recovered by some linear weight vector.

We now examine how each major baseline implements this primitive at the gradient level.

\paragraph{PG-MORL~\citep{pgmorl}.}
PG-MORL maintains a population of separate policies, each trained by standard PPO with a linearly scalarized reward $r_\omega = \omega^\top \vec{r}$.
Its novelty lies in a prediction model that guides evolutionary selection of which weight vectors to train next.
However, once a direction $\omega$ is selected, the PPO surrogate operates on the scalar advantage $A_\omega = \omega^\top \vec{A}$, the definition of \emph{early scalarization} (ES).

\paragraph{CAPQL~\citep{capql}.}
Far from challenging linear scalarization, CAPQL provides its strongest theoretical defense.
The authors prove that for stationary policies, the induced value function range is convex, implying that any Pareto-optimal point is recoverable via linear weights.
Its preference-conditioned vector Q-network $\vec{Q}(s,a,\omega)$ is ultimately optimized through the scalar projection $\omega^\top \vec{Q}(s,a,\omega)$ in the actor loss.
The addition of a strongly concave reward augmentation term stabilizes training but does not alter the linear nature of the optimization target.

\paragraph{GPI-LS~\citep{gpi-ls}.}
The name itself declares the paradigm: Generalized Policy Improvement with \emph{Linear Support}.
GPI-LS trains a discrete set of policies, each on a specific linear scalarization.
Its contribution is a principled prioritization scheme that identifies which weight vector $\omega$ currently yields the worst performance across known policies.
Action selection remains $\arg\max_a\, \omega^\top \vec{Q}(s,a)$.

\paragraph{C-MORL~\citep{cmorl}.}
C-MORL introduces a two-stage framework: a warmup phase training parallel policies on fixed linear weights, followed by an extension phase formulated as constrained optimization (maximize objective $i$ subject to constraints on objectives $j \neq i$).
While the constrained formulation appears to depart from linear scalarization, its practical implementation reveals otherwise.
Constrained policy optimization in deep RL is enforced via Lagrangian relaxation, introducing dual variables $\lambda$ that convert the constrained problem into:
\begin{equation}
    \mathcal{L}(\theta, \lambda) = J_i(\theta) + \sum_{j \neq i} \lambda_j \left( J_j(\theta) - c_j \right)
\end{equation}
This is algebraically identical to linear scalarization with dynamically tuned weights.
When this Lagrangian is translated into a PPO surrogate, the optimizer receives the combined advantage $A_{\text{total}} = A_i + \sum_j \lambda_j A_j$, which is early scalarization by definition.

\subsection{Post-Hoc Gradient Repair: MOPPO and PCRL}

A second class of methods recognizes the gradient conflicts inherent in linear scalarization and attempts to repair them after they occur.

\paragraph{MOPPO~\citep{Terekhov2024InSFA}.}
MOPPO extends PPO to the multi-objective setting with a single preference-conditioned policy.
Per-objective advantages are normalized via PopART, then scalarized as $\hat{A} = \alpha^\top (\hat{\vec{A}} / \sigma)$ before being passed to the PPO clipped surrogate (Equation 19 of~\cite{Terekhov2024InSFA}).
This corresponds to early scalarization in the sense that: the clipping operator sees a single scalar advantage, and conflicting objectives can still cancel before clipping has any effect.

\paragraph{PCRL / PreCo~\citep{preco}.}
PCRL explicitly acknowledges that linear scalarization produces conflicting gradients and applies multi-objective optimization (MOO) techniques, including EPO, CAGrad, and SDMGrad, to manipulate the gradient \emph{after} computing per-objective policy gradients from a linearly scalarized objective.
While the post-processed update direction is nonlinear in the per-objective gradients, the underlying optimization target remains $\omega^\top \vec{v}^{\pi}$.
The gradient surgery is a repair mechanism applied to a linear objective, not a change in the objective itself.
This repair comes at significant computational cost: each policy update requires solving a constrained quadratic program over the gradient vectors.

\subsection{Where \toolname{} Fits: Structural Repair via Late-Stage Weighting}

The analysis above reveals a consistent pattern: among all PPO-based multi-objective methods, every existing approach applies scalarization \emph{before} the clipping operator acts.
Whether the scalarization is a fixed linear combination (PG-MORL, GPI-LS), a Lagrangian-tuned combination (C-MORL), or a variance-normalized combination (MOPPO), the PPO surrogate always receives a single scalar advantage.
When objectives conflict ($A_i > 0$ and $A_j < 0$ for a given action), the scalar advantage $\omega^\top \vec{A} \approx 0$ suppresses the gradient signal, and the clipping operator, calibrated for the magnitude of individual advantages, cannot distinguish genuine near-zero advantages from destructive cancellation.

\toolname{} resolves this by reversing the order of operations.
Late-Stage Weighting (LSW) computes the PPO clipped surrogate \emph{independently for each objective}:
\begin{equation}
    \mathcal{J}_{\text{LSW}}(\theta) = \sum_{i=1}^{d} w_i \, L_i^{\text{CLIP}}(\theta), \quad L_i^{\text{CLIP}} = \mathbb{E}\left[\min\left(\rho\, A_i,\; \text{clip}(\rho, 1{-}\epsilon, 1{+}\epsilon)\, A_i\right)\right]
\end{equation}
The linear combination is applied \emph{after} each objective's advantage has been independently bounded by the trust region.
This preserves per-objective gradient information: when $A_i > 0$ and $A_j < 0$, the clipping operator correctly constrains the positive update for objective $i$ while the negative signal from objective $j$ provides an appropriate penalty.
The result is a component-wise prioritization that is inherently controlled by PPO's trust region, without requiring gradient surgery or nonlinear scalarization.

\toolname{} further addresses the mode collapse risk inherent in single-policy preference-conditioned learning through Scaled Diversity Regularization (SDR), which penalizes the squared difference between a target KL divergence (proportional to preference distance) and the actual KL divergence between action distributions for nearby preference vectors.
This mechanism is absent from all baselines except MOPPO's entropy control, which operates on a fundamentally different axis (global exploration vs.\ preference-conditioned behavioral diversity).

Table~\ref{tab:scalarization_taxonomy} summarizes the landscape.

\begin{table}[t]
\centering
\caption{Taxonomy of scalarization strategies in preference-conditioned MORL.
\emph{ES} = Early Scalarization (scalarize before clipping/update);
\emph{LSW} = Late-Stage Weighting (clip per-objective, then weight).
``Repair'' denotes post-hoc gradient manipulation applied to a linearly scalarized objective.}
\label{tab:scalarization_taxonomy}
\small
\setlength{\tabcolsep}{4pt}
\begin{tabular}{@{}lccccc@{}}
\toprule
\textbf{Method} & \textbf{Objective} & \textbf{Scalarization} & \textbf{Policy} & \textbf{Algorithm} & \textbf{Diversity} \\
\midrule
PG-MORL  & Linear & ES & Many & PPO & --- \\
GPI-LS   & Linear & ES & Many & DQN & --- \\
C-MORL   & Linear (Lagrangian) & ES & Many & PPO & --- \\
CAPQL    & Linear (augmented) & ES & Single & SAC & --- \\
MOPPO    & Linear (PopART) & ES & Single & PPO & Entropy \\
PCRL     & Linear + Repair & ES + Surgery & Single & SAC & Cosine sim. \\
\midrule
\textbf{\toolname{}} & \textbf{Linear} & \textbf{LSW} & \textbf{Single} & \textbf{PPO} & \textbf{KL div.} \\
\bottomrule
\end{tabular}
\end{table}

\toolname{} occupies a unique position in this landscape.
It retains the theoretical guarantees of linear scalarization, meaning any convex Pareto-optimal point is recoverable, while resolving the practical failure mode that undermines every other PPO-based approach.
Unlike PCRL, it requires no per-step quadratic program.
And unlike multi-policy methods (PG-MORL, GPI-LS, C-MORL), it deploys a single network at inference time, with $O(1)$ cost for any preference vector.

\subsection{Empirical Diagnostics: Advantage Cancellation and Cosine Gradient Analysis}
\label{subsec:advantage_cancellation}

To empirically validate the structural limitations of Early Scalarization (ES) discussed above, we track both the magnitude and frequency of gradient conflict across our benchmark suite. Because ES squashes multi-objective rewards into a single scalar before passing them to the PPO surrogate, the optimizer operates on a scalar projection that does not preserve per-objective structure. When an action benefits one objective ($A_i > 0$) but penalizes another ($A_j < 0$), these signals destructively interfere. 

We quantify this failure mode through two complementary metrics: the absolute magnitude of the cancelled signal, and the step-by-step prevalence of gradient opposition.

\textbf{The Magnitude of Signal Loss.} 
Figure~\ref{fig:advantage_cancellation_grid} tracks the advantage cancellation rate throughout training. This metric measures the percentage of the total available advantage magnitude that is reduced due to cancellation in the linear summation step before trust-region clipping occurs. The results reveal that advantage cancellation is not a transient artifact, but a severe, persistent pathology. Across diverse continuous locomotion tasks (e.g., \texttt{mo-ant-v5}) and highly discontinuous discrete fronts (e.g., \texttt{fruit-tree-v0}), ES can reduce up to 50–65\% of the aggregated advantage magnitude at each update. By prematurely homogenizing the advantage, ES reduces the effective signal available to the trust-region optimizer when navigating competing objectives.

\textbf{The Prevalence of Gradient Conflict.} 
Complementing the magnitude of signal loss, Figure~\ref{fig:gradient_prevalence_grid} illustrates the frequency of these conflicts, visualized as the percentage of training steps where the cosine similarity between per-objective gradients is negative. These heatmaps reveal a distinct shift during the learning process. In early training, objectives often align as the agent learns basic environmental competence (e.g., surviving while moving forward). However, the moment the agent reaches the Pareto front, the objectives become more strongly competing, and the prevalence of gradient conflict sharply increases. In high-dimensional environments such as \texttt{mo-humanoid-v5} and \texttt{Building-9d}, opposing gradients between objective pairs frequently plateau between 30\% and 55\% of all training steps. 

Together, these diagnostics show that gradient conflict under Early Scalarization is persistent and widespread across the evaluated environments. At almost every timestep along the Pareto manifold, ES reduces the usable gradient signal available for trade-off discovery. \toolname{} mitigates this bottleneck  by preserving decomposed advantages and applying Late-Stage Weighting (LSW). It safely subjects every conflicting signal to the trust-region operator before any scalarization can induce destructive interference.
\begin{figure*}[!h]
\centering
\begin{subfigure}[b]{0.32\textwidth}
  \centering
  \includegraphics[width=\linewidth]{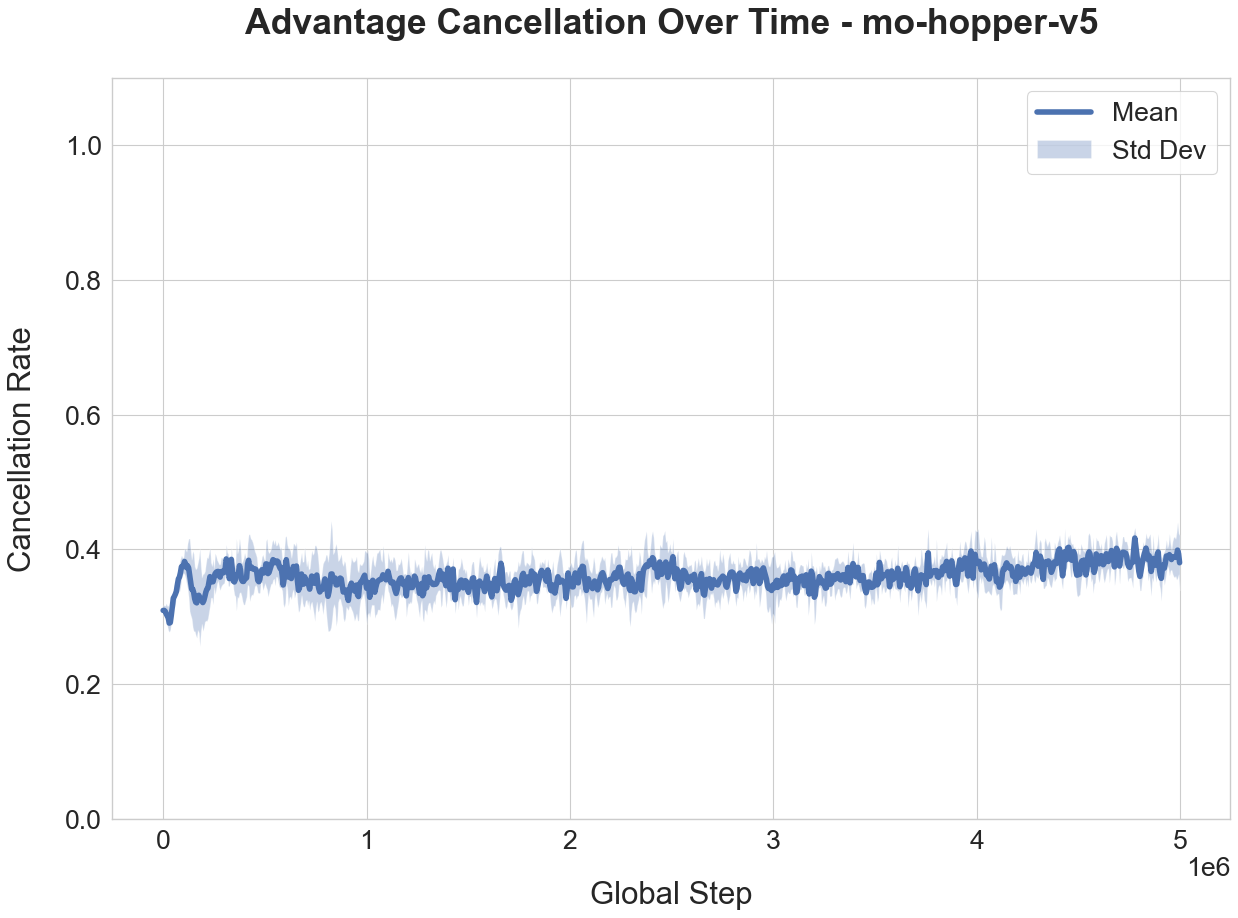}
  \caption{Hopper-3d}
\end{subfigure}\hfill
\begin{subfigure}[b]{0.32\textwidth}
  \centering
  \includegraphics[width=\linewidth]{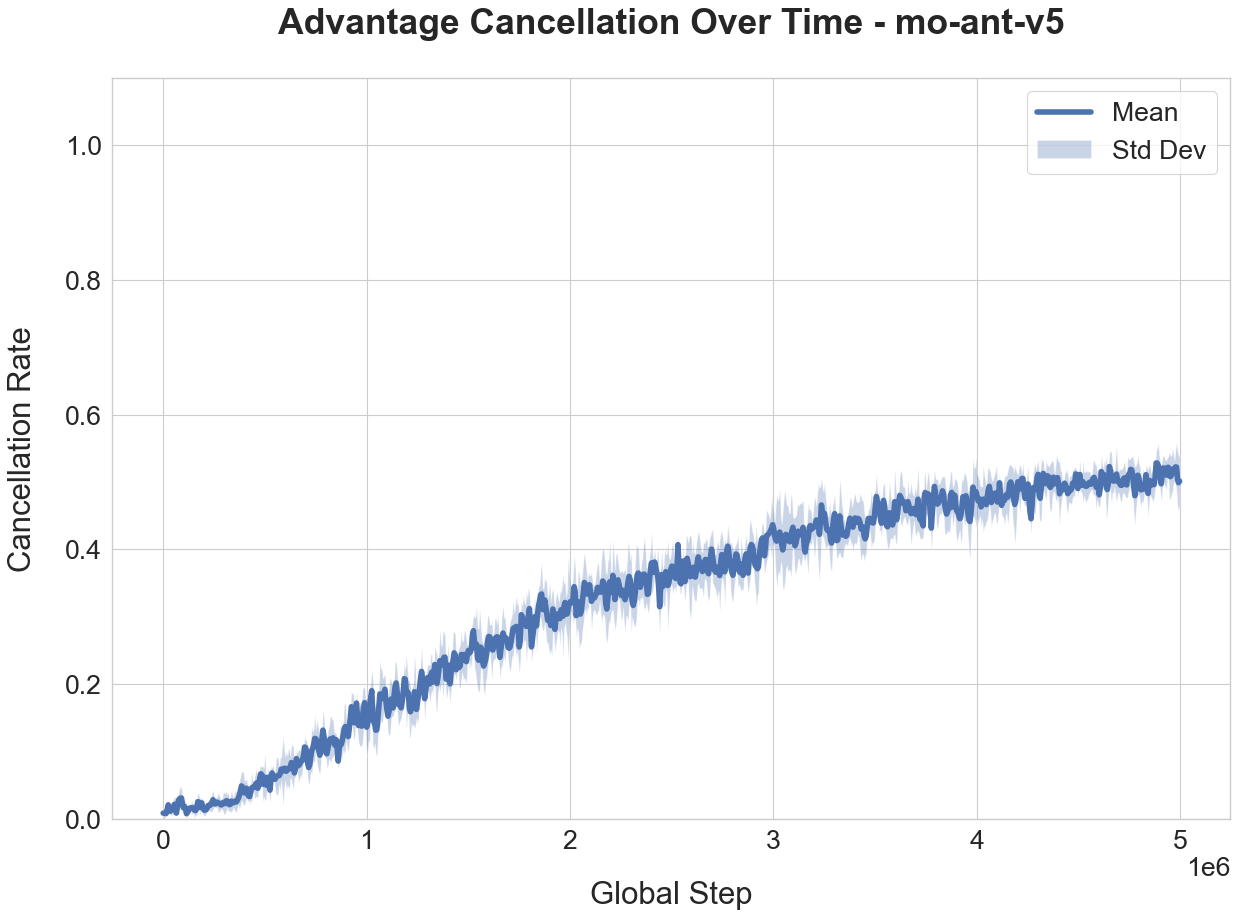}
  \caption{Ant-3d}
\end{subfigure}\hfill
\begin{subfigure}[b]{0.32\textwidth}
  \centering
  \includegraphics[width=\linewidth]{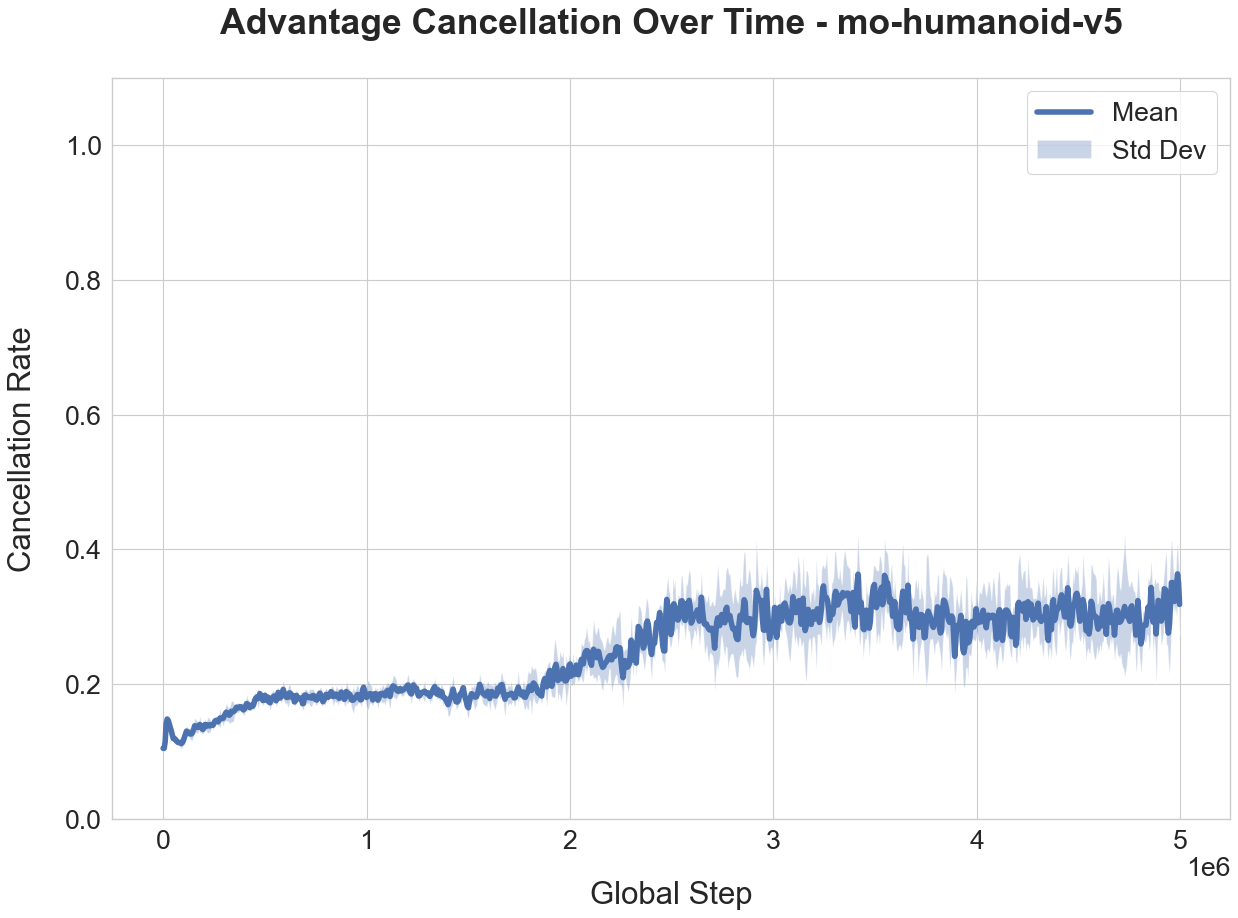}
  \caption{Humanoid-2d}
\end{subfigure}

\vspace{0.3cm}

\begin{subfigure}[b]{0.32\textwidth}
  \centering
  \includegraphics[width=\linewidth]{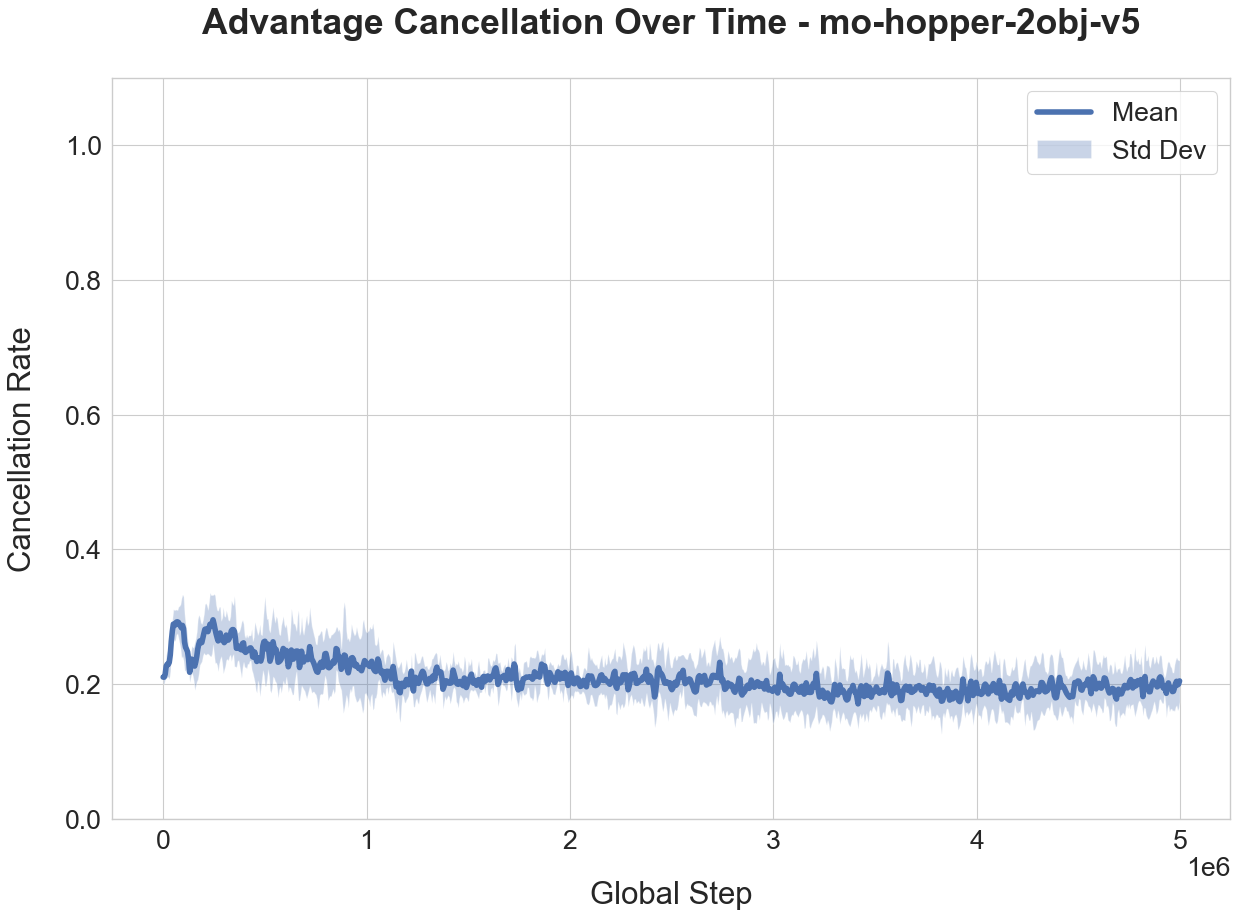}
  \caption{Hopper-2d}
\end{subfigure}\hfill
\begin{subfigure}[b]{0.32\textwidth}
  \centering
  \includegraphics[width=\linewidth]{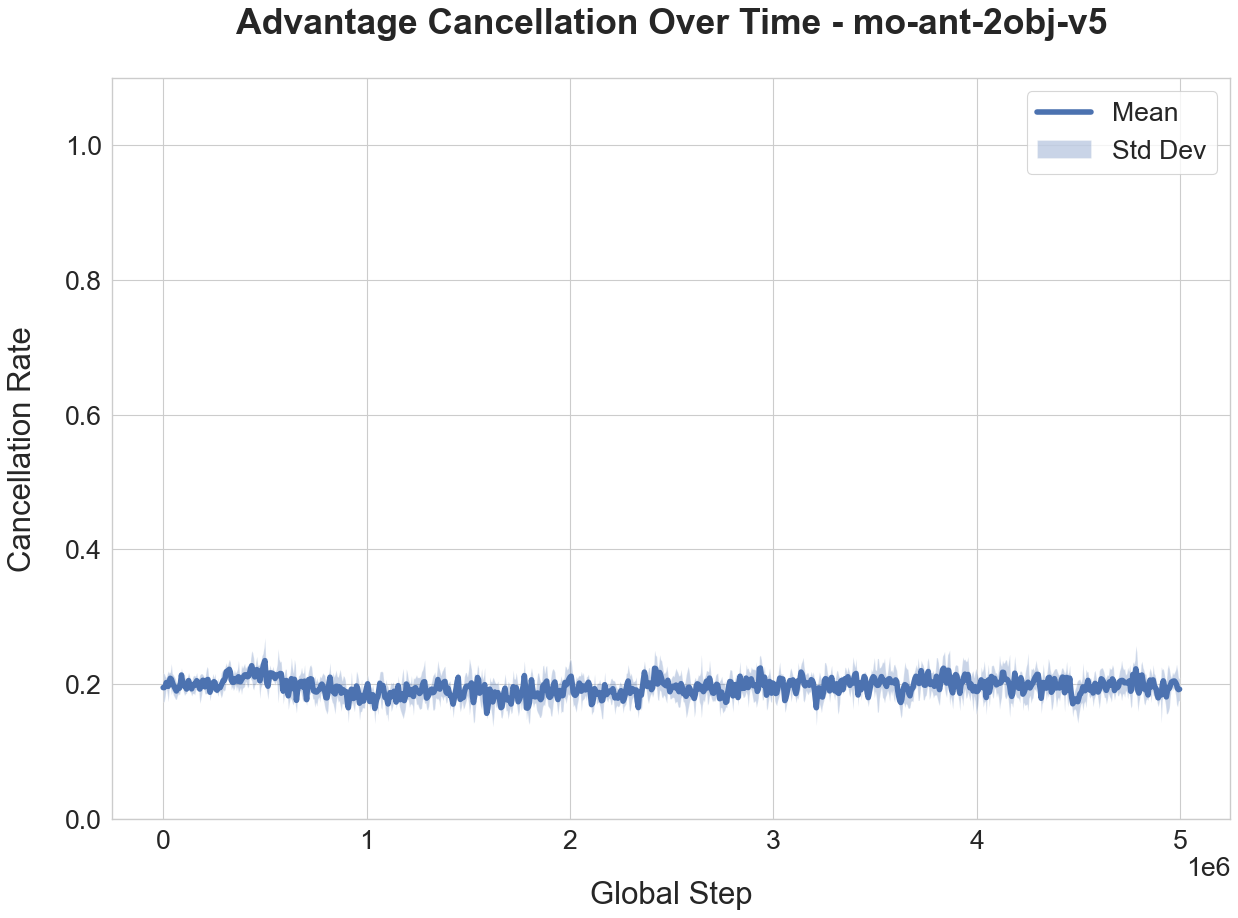}
  \caption{Ant-2d}
\end{subfigure}\hfill
\begin{subfigure}[b]{0.32\textwidth}
  \centering
  \includegraphics[width=\linewidth]{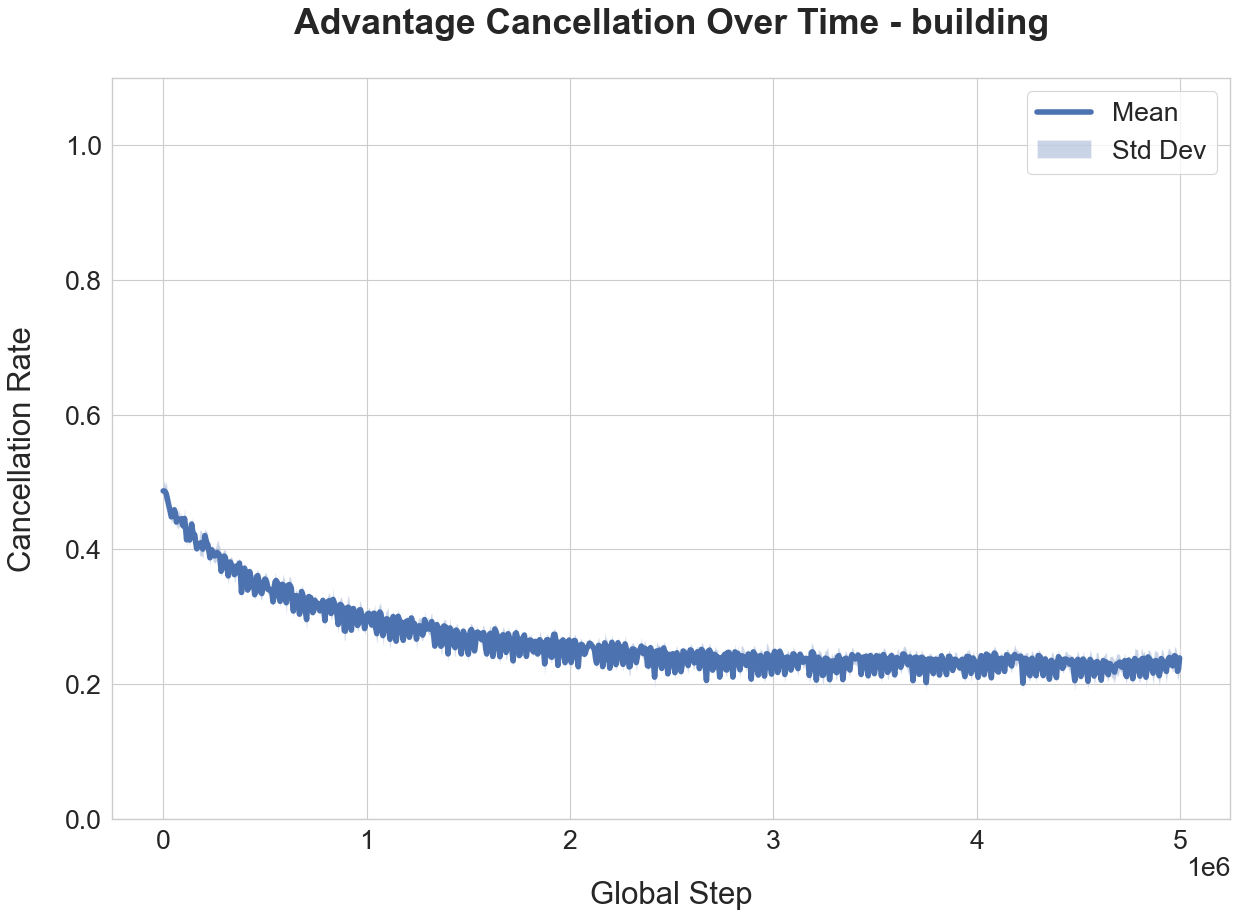}
  \caption{Building-2d}
\end{subfigure}

\vspace{0.3cm}

\begin{subfigure}[b]{0.32\textwidth}
  \centering
  \includegraphics[width=\linewidth]{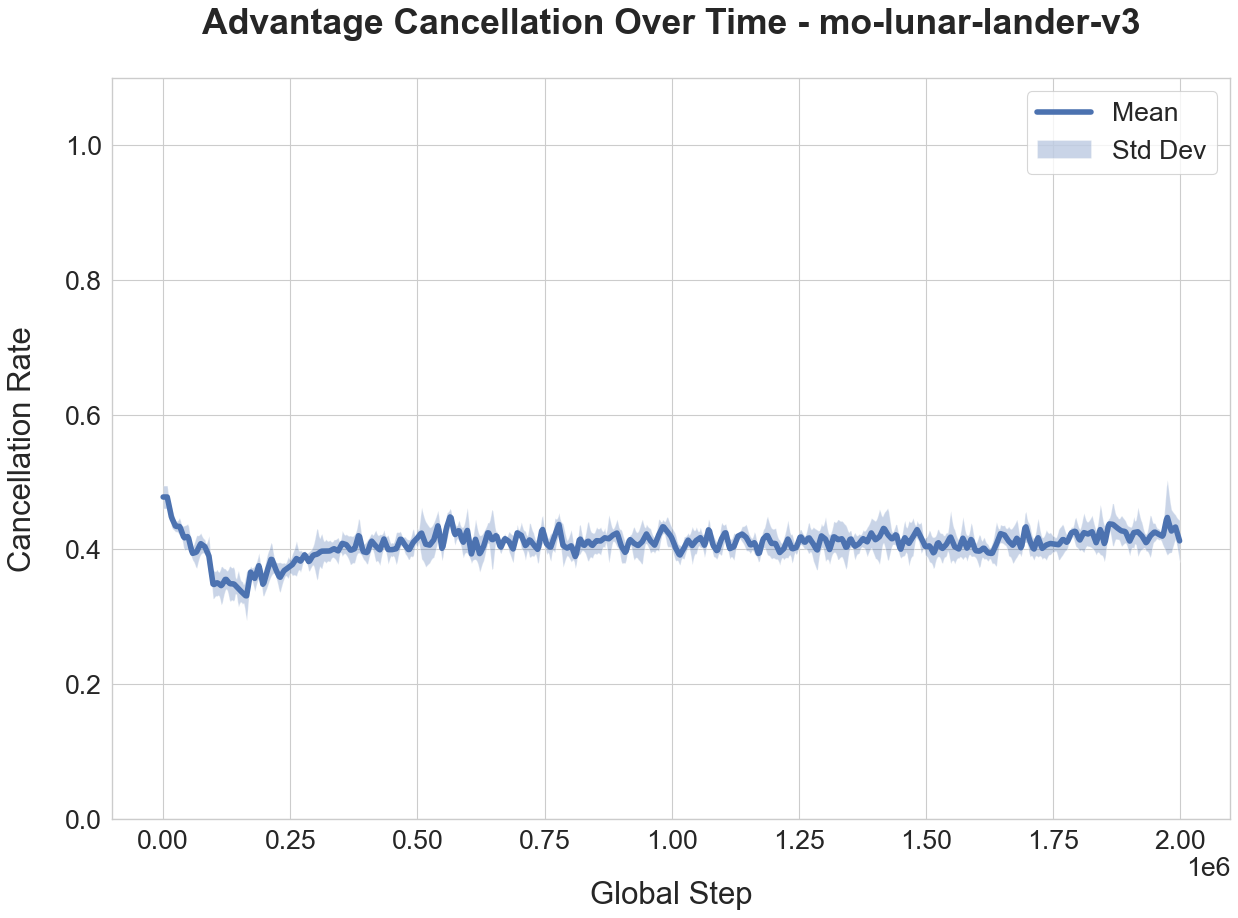}
  \caption{Lunar Lander-4d}
\end{subfigure}\hfill
\begin{subfigure}[b]{0.32\textwidth}
  \centering
  \includegraphics[width=\linewidth]{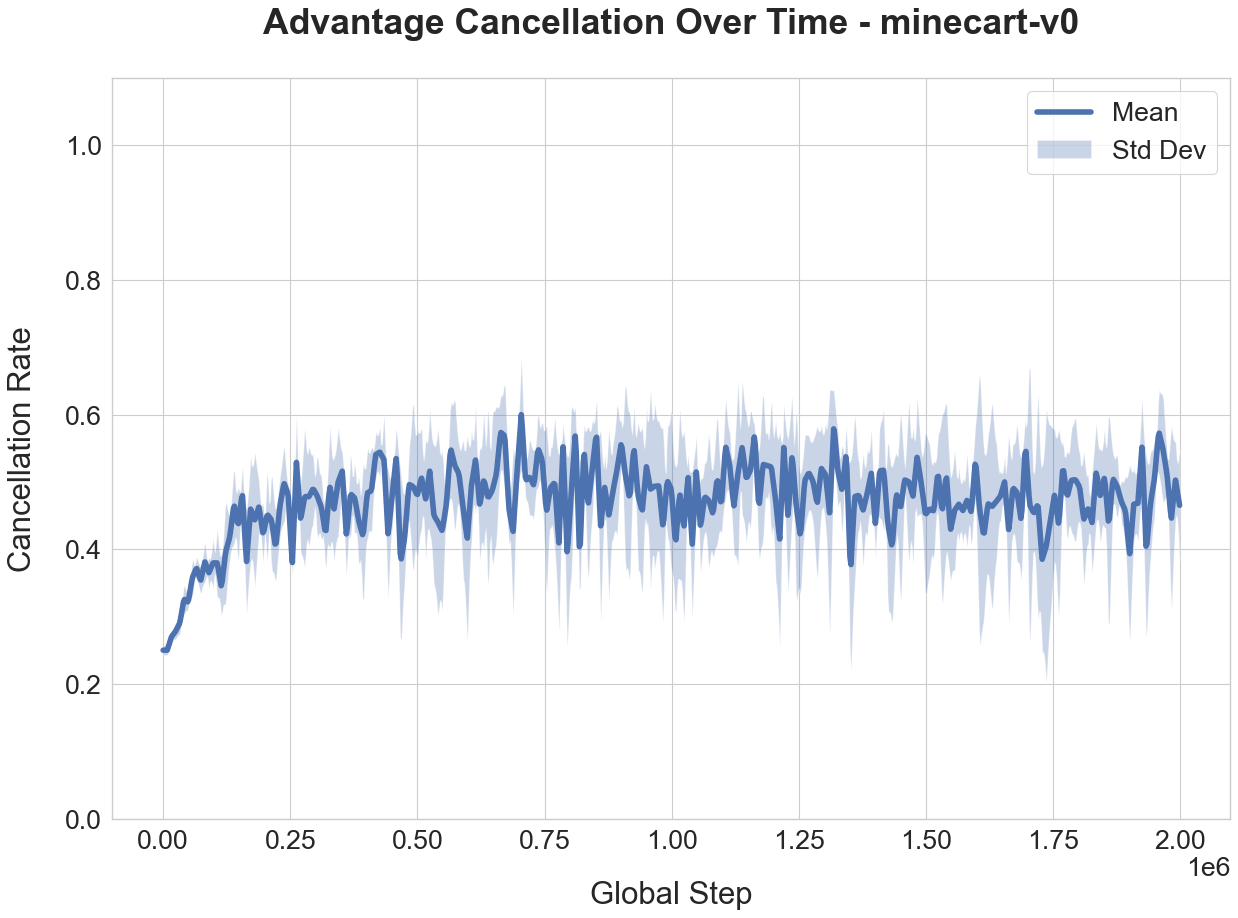}
  \caption{Minecart-3d}
\end{subfigure}\hfill
\begin{subfigure}[b]{0.32\textwidth}
  \centering
  \includegraphics[width=\linewidth]{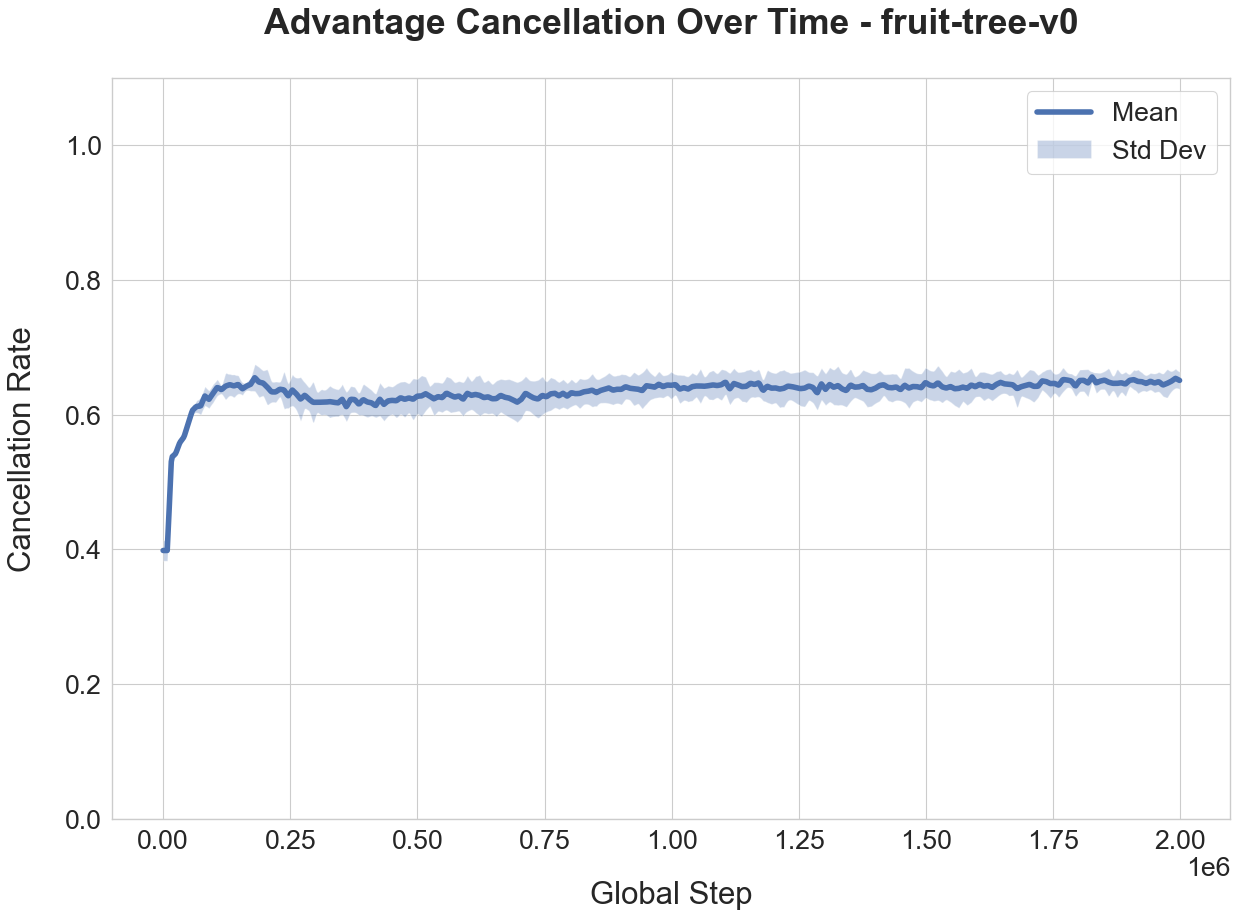}
  \caption{Fruit Tree}
\end{subfigure}

\caption{\textbf{Empirical Advantage Cancellation Rates under Early Scalarization (ES).} Across diverse environments, applying preference weights before trust-region stabilization results in substantial, sustained reduction in advantage magnitude. In dense continuous tasks (Ant-v5) and highly discontinuous fronts (Fruit Tree), ES can reduce up to 50--65\% of the available advantage magnitude at every update, significantly degrading the effective learning signal.}
\label{fig:advantage_cancellation_grid}
\end{figure*}

\begin{figure*}[!h]
\centering
\begin{subfigure}[b]{0.32\textwidth}
  \centering
  \includegraphics[width=\linewidth]{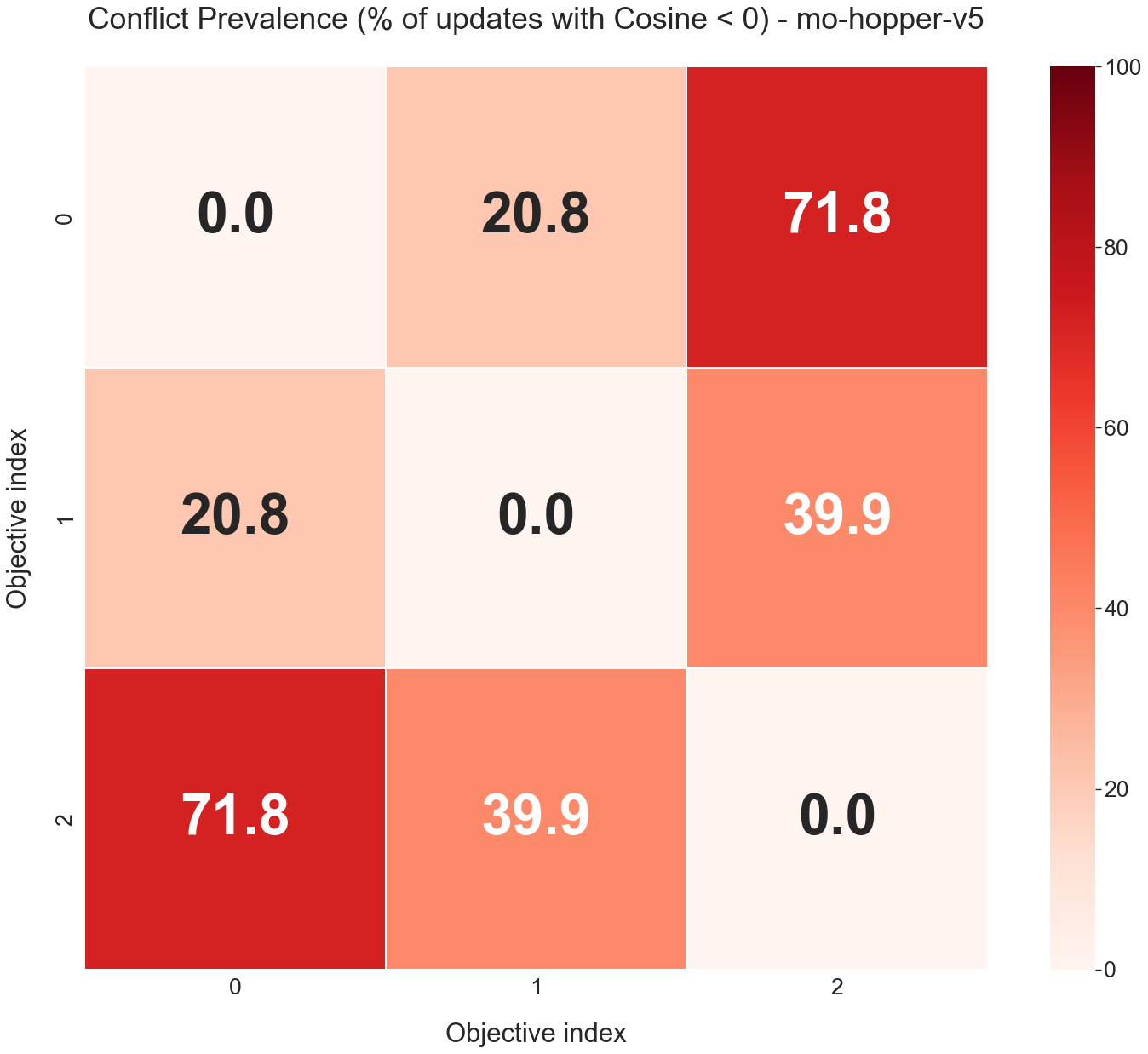}
  \caption{Hopper-3d}
\end{subfigure}\hfill
\begin{subfigure}[b]{0.32\textwidth}
  \centering
  \includegraphics[width=\linewidth]{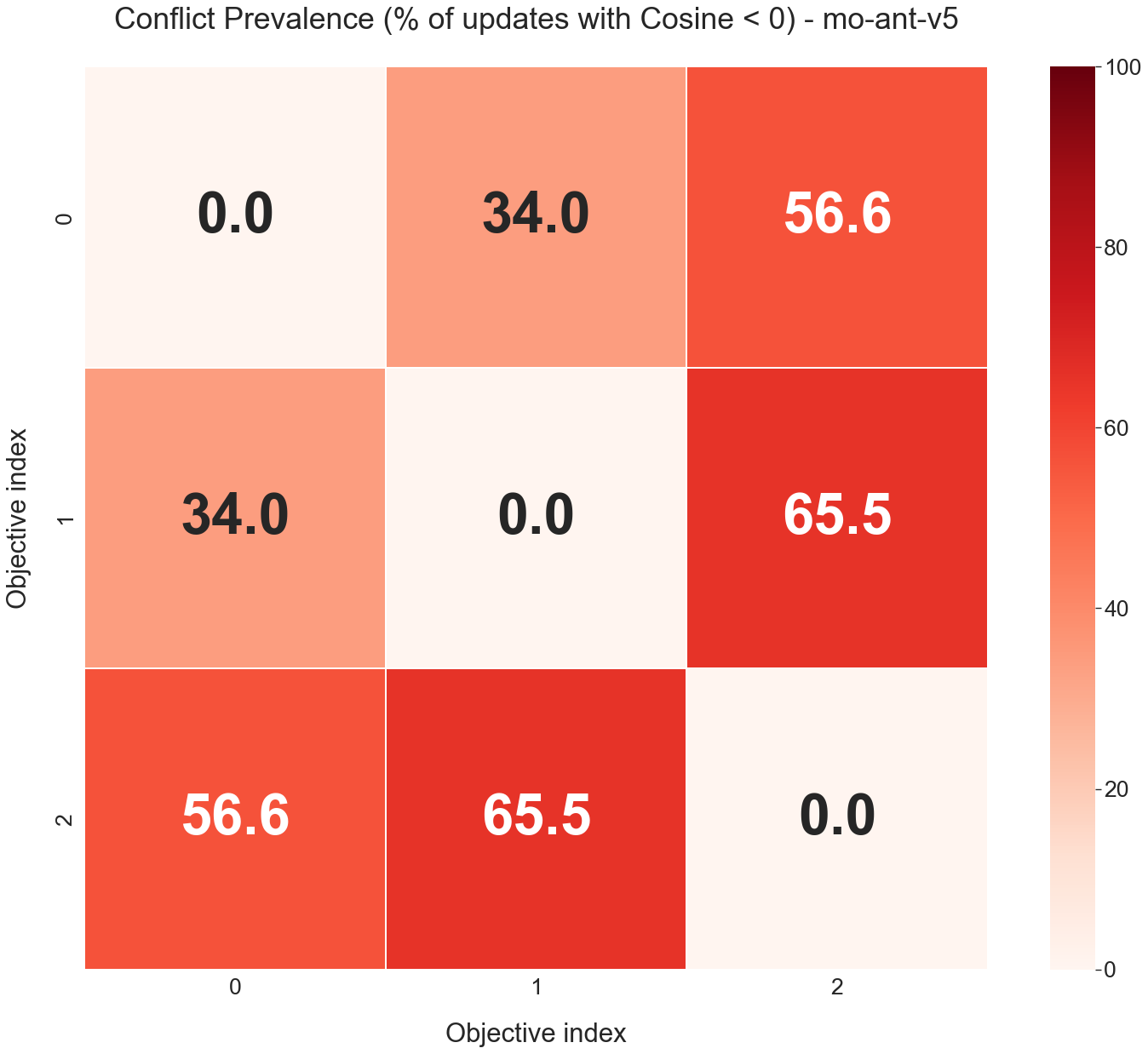}
  \caption{Ant-2d}
\end{subfigure}\hfill
\begin{subfigure}[b]{0.32\textwidth}
  \centering
  \includegraphics[width=\linewidth]{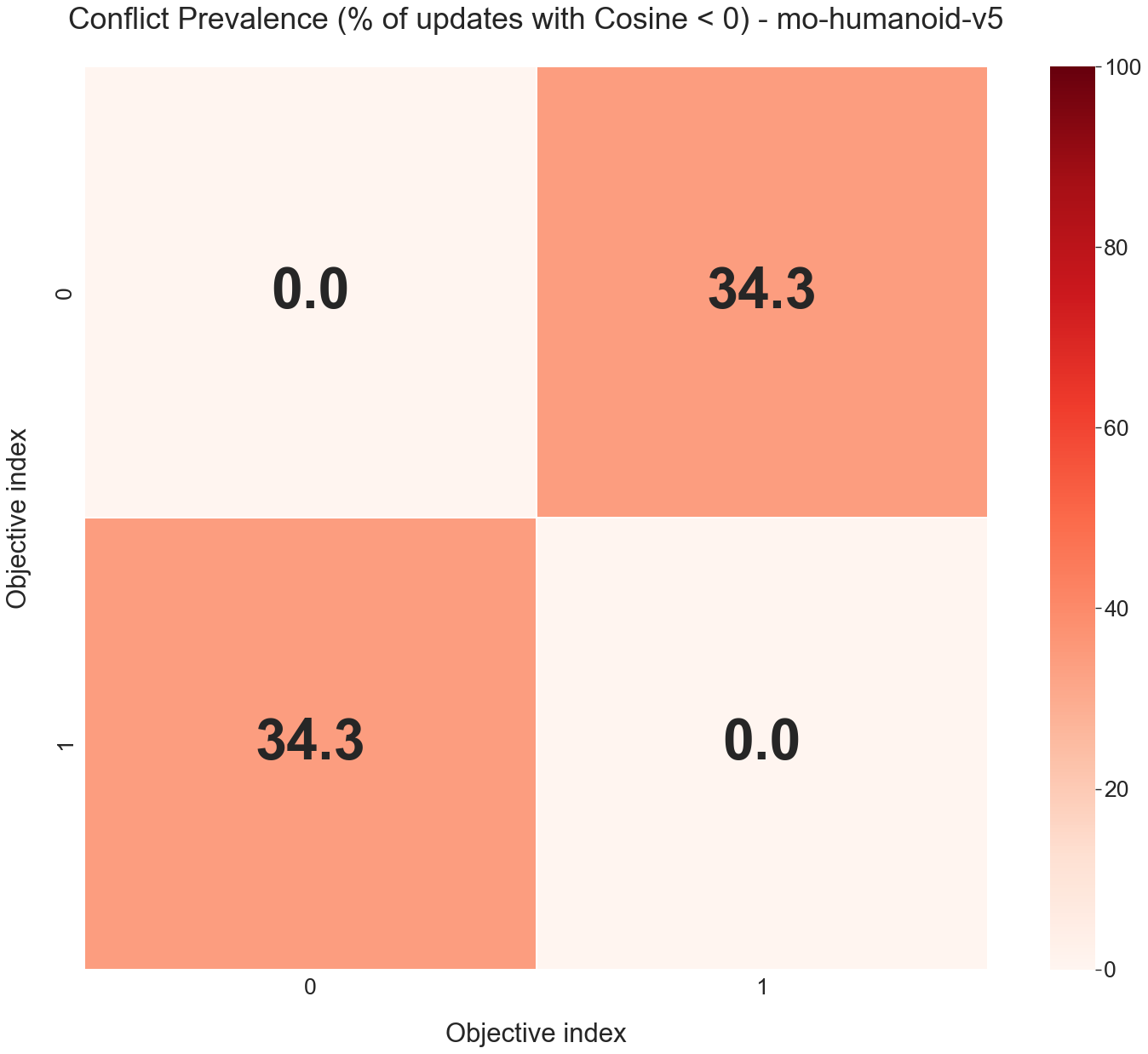}
  \caption{Humanoid-2d}
\end{subfigure}

\vspace{0.3cm}

\begin{subfigure}[b]{0.32\textwidth}
  \centering
  \includegraphics[width=\linewidth]{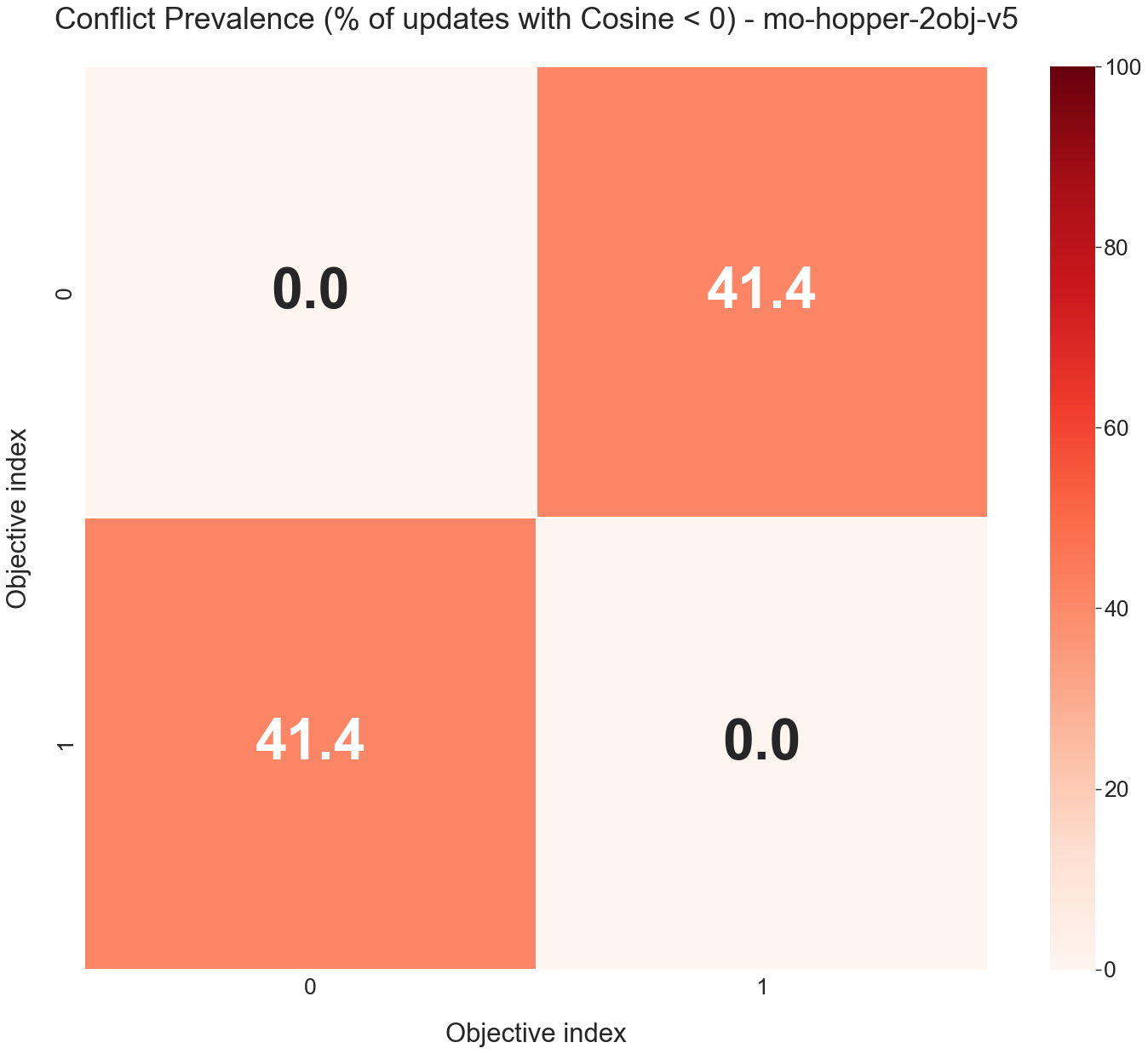}
  \caption{Hopper-2d}
\end{subfigure}\hfill
\begin{subfigure}[b]{0.32\textwidth}
  \centering
  \includegraphics[width=\linewidth]{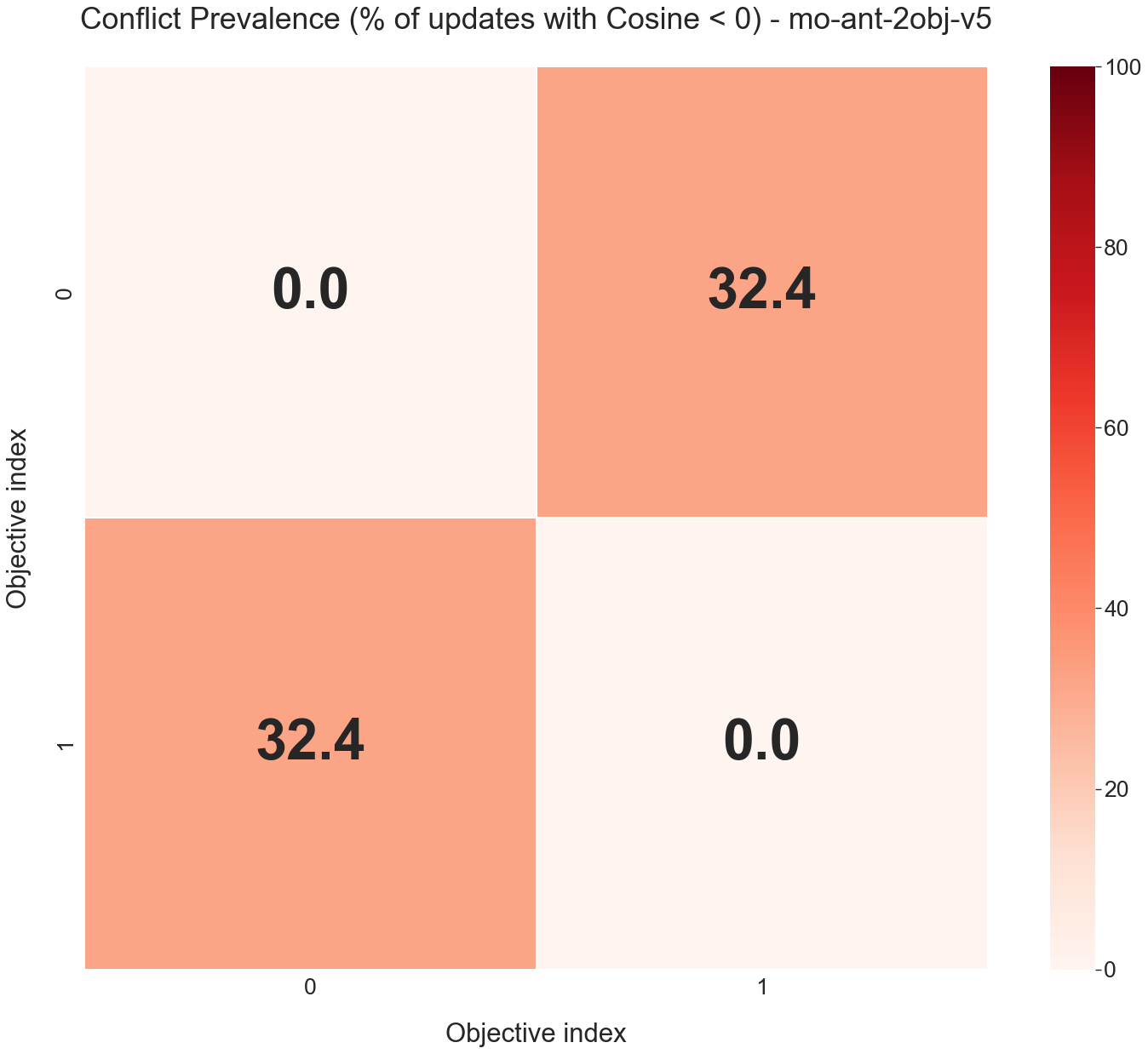}
  \caption{Ant-2d}
\end{subfigure}\hfill
\begin{subfigure}[b]{0.32\textwidth}
  \centering
  \includegraphics[width=\linewidth]{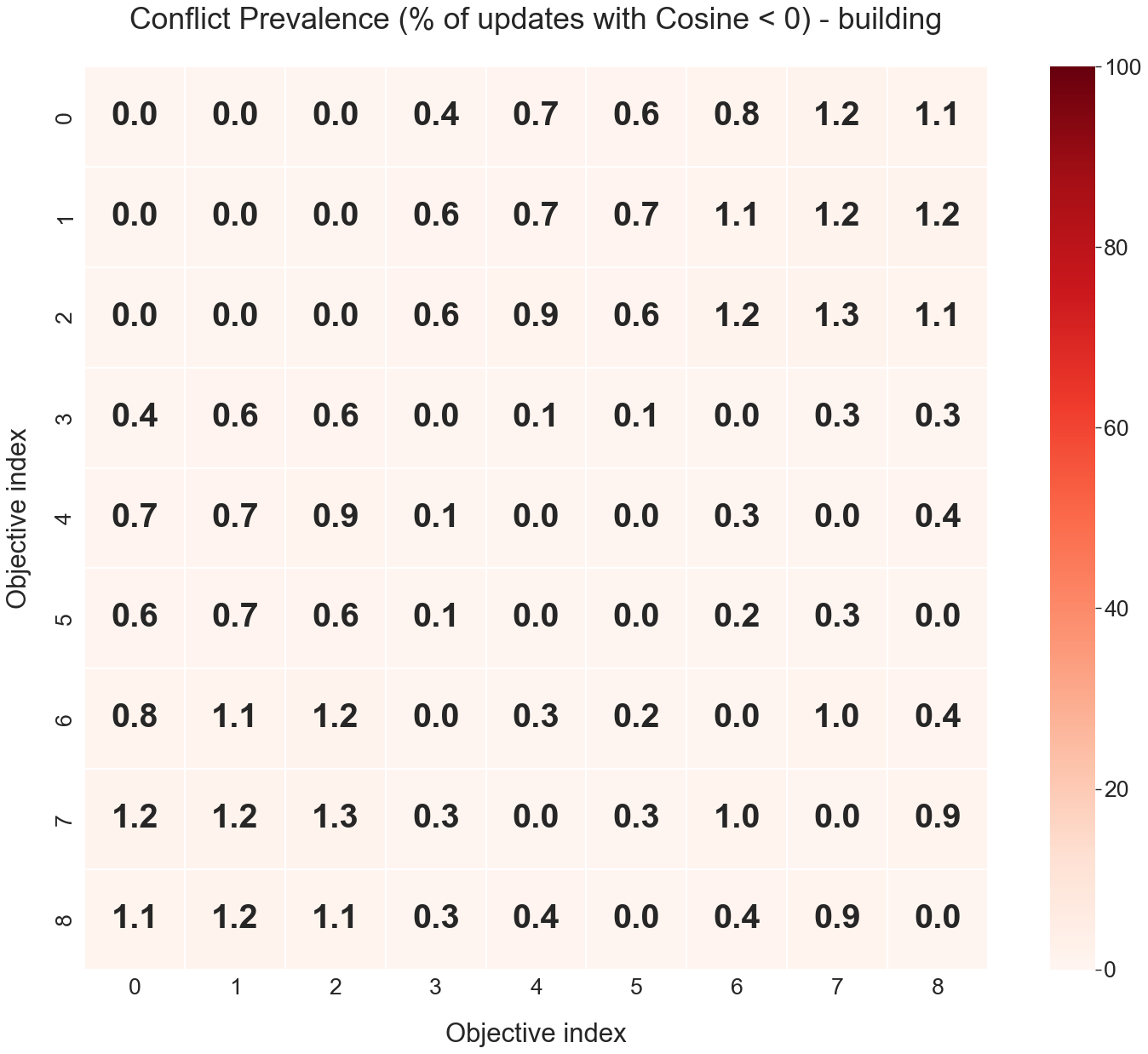}
  \caption{Building-9d}
\end{subfigure}

\vspace{0.3cm}

\begin{subfigure}[b]{0.32\textwidth}
  \centering
  \includegraphics[width=\linewidth]{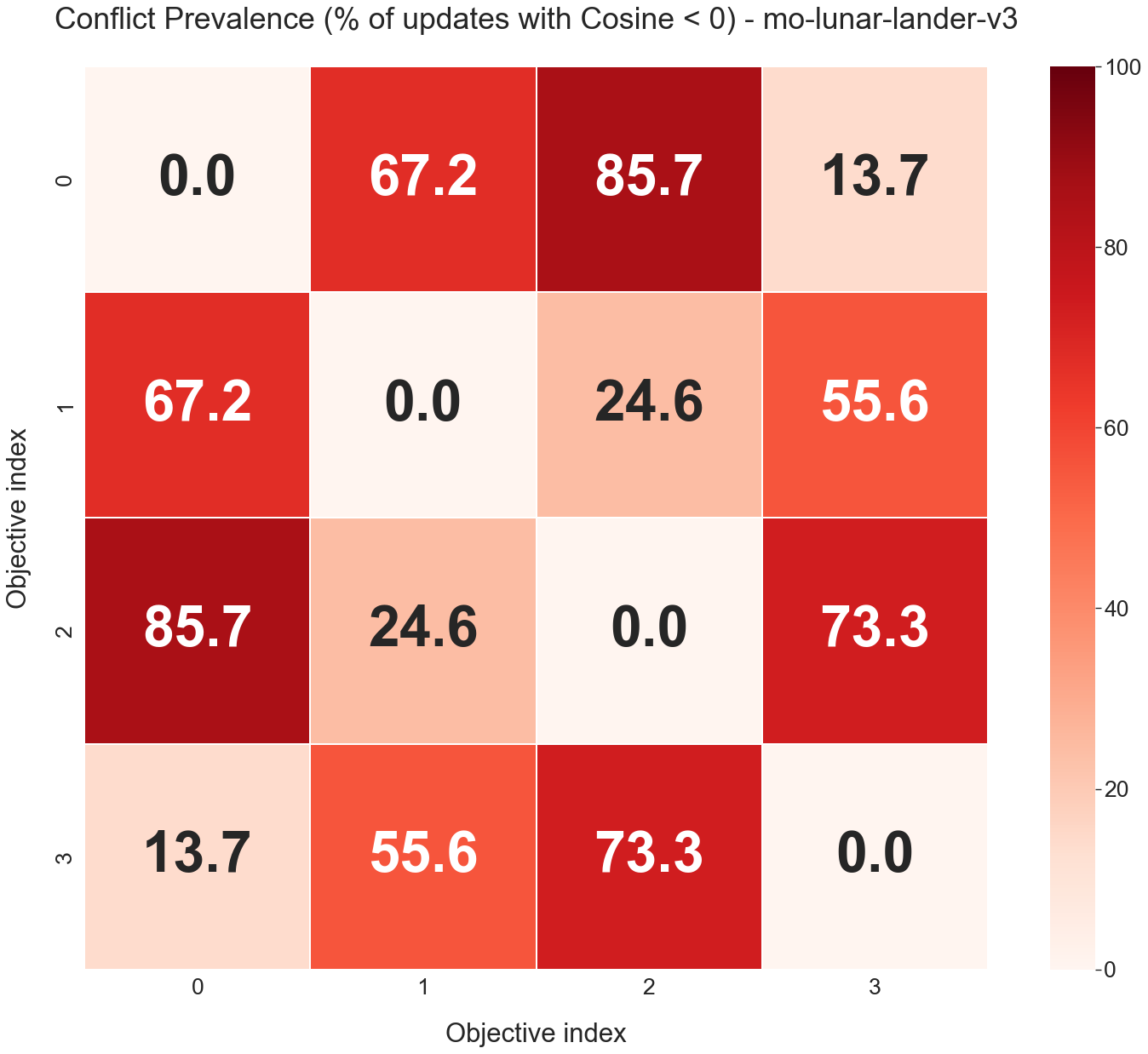}
  \caption{Lunar-Lander-4d}
\end{subfigure}\hfill
\begin{subfigure}[b]{0.32\textwidth}
  \centering
  \includegraphics[width=\linewidth]{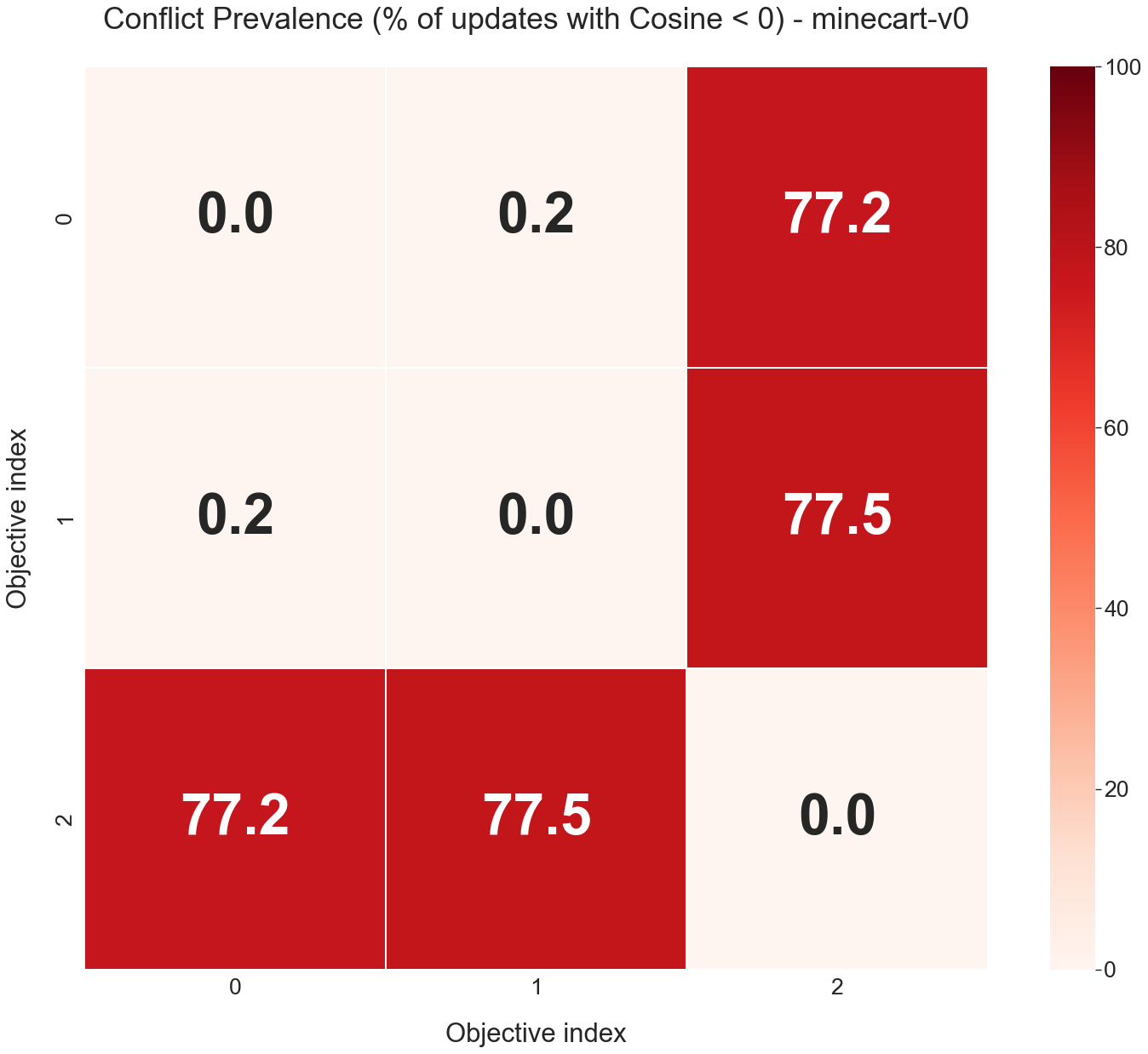}
  \caption{Minecart-3d}
\end{subfigure}\hfill
\begin{subfigure}[b]{0.32\textwidth}
  \centering
  \includegraphics[width=\linewidth]{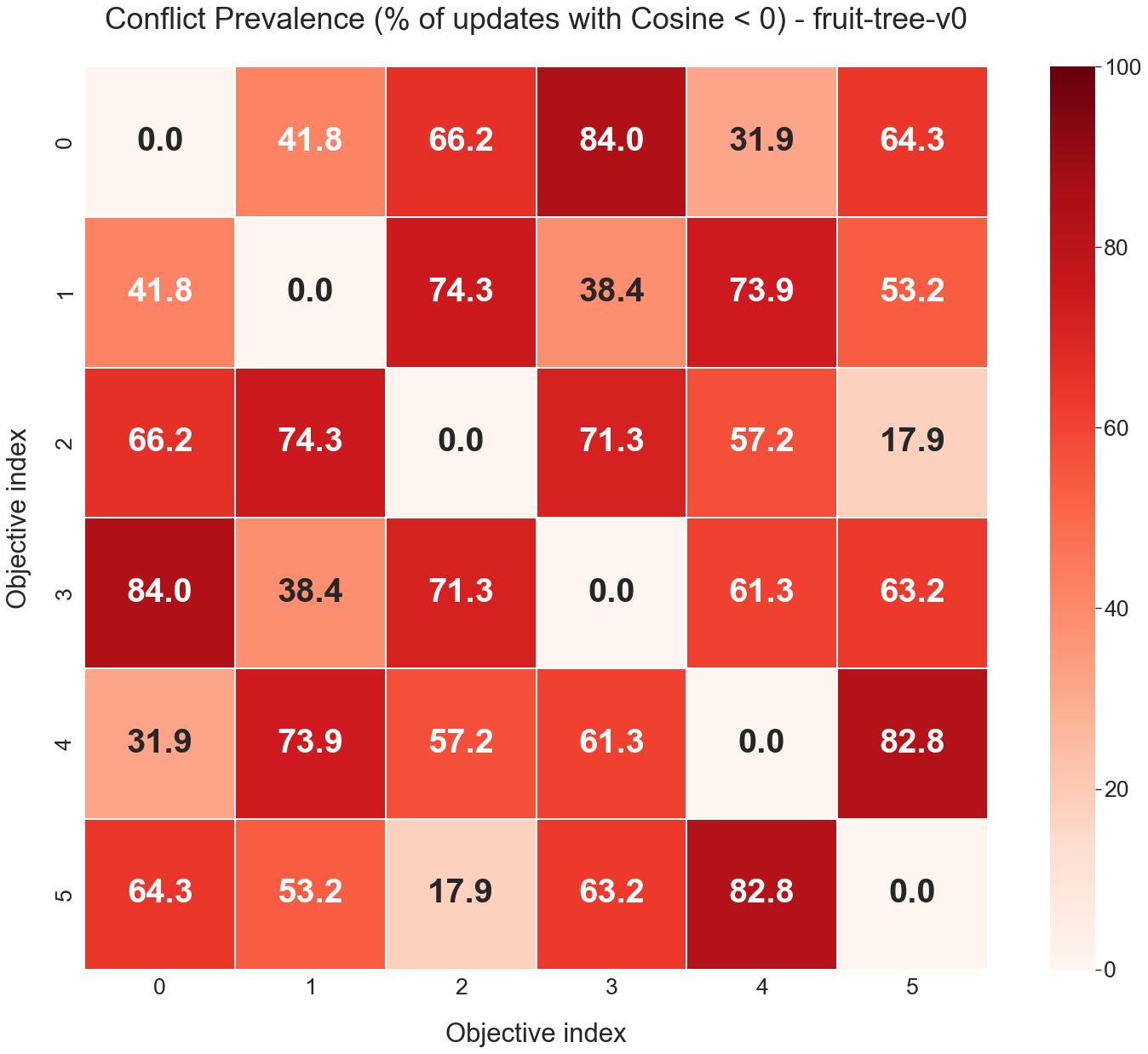}
  \caption{FruitTree}
\end{subfigure}

\caption{\textbf{Prevalence of Gradient Conflict.} This figure illustrates the frequency of gradient opposition between pairs of objectives throughout training. As policies mature and approach the Pareto front, the prevalence of conflicting gradients dramatically increases, often stabilizing between 30\% and 55\%. Under Early Scalarization, this high frequency of conflict leads to frequent and sustained signal cancellation under Early Scalarization.}
\label{fig:gradient_prevalence_grid}
\end{figure*}

%% file: documentBody/F-theory.tex
\section{Theoretical Analysis of Multi-Objective PPO Formulations}
\label{sec:LSW_theory}

To justify the design of Late-Stage Weighting (LSW), we provide a formal
comparative analysis of three methods for integrating preference weights into
PPO. The central question is not merely \emph{when} weights are applied, but
whether PPO's clipping operator receives a single scalarized signal or
independently bounded per-objective signals. We show that this ordering of
operations changes the gradient that reaches the policy whenever the trust
region binds: in the unclipped regime all three variants are equivalent, but
under conflicting objectives and active clipping, ES can zero out a learning
signal that LSW preserves.

\subsection{Formal Definitions of MORL-PPO Variants}

Let
\[
\rho_t(\theta) \;=\; \frac{\pi_\theta(a_t \mid s_t, \omega)}
                          {\pi_{\theta_{\text{old}}}(a_t \mid s_t, \omega)}
\]
be the importance sampling ratio and $\mathbf{A}_t = [A_t^{(1)}, \dots, A_t^{(d)}]$
the vector of per-objective advantages. We compare three ways to incorporate
$\omega \in \Delta^{d-1}$ into a PPO-style surrogate, distinguished by
\emph{where} a scalar first enters the clipping operator.

\begin{enumerate}

\item \textbf{Early Scalarization (ES).}
Advantages are combined into a single scalar \emph{before} any per-objective
processing; this scalar enters the PPO surrogate directly:
\begin{equation}
    \mathcal{L}_{\text{clip}}^{\text{ES}}(\theta)
    = \mathbb{E}_t\!\Big[
        \min\!\Big(
            \rho_t(\theta)\,(\omega^\top\mathbf{A}_t),\;
            \operatorname{clip}(\rho_t(\theta),1{-}\epsilon,1{+}\epsilon)
            \,(\omega^\top\mathbf{A}_t)
        \Big)
    \Big].
    \label{eq:es}
\end{equation}

\item \textbf{Mid-stage Vectorial Scalarization (MVS).}
Per-objective advantages are individually normalized, then weighted and summed
into a single scalar, which is passed to one PPO surrogate:
\begin{equation}
    \mathcal{L}_{\text{actor}}^{\text{MVS}}(\theta)
    = -\,\mathbb{E}_t\!\Big[
        \min\!\Big(
            \rho_t(\theta)\,\textstyle\sum_i \omega_i \hat{A}_t^{(i)},\;
            \operatorname{clip}(\rho_t(\theta),1{-}\epsilon,1{+}\epsilon)
            \,\textstyle\sum_i \omega_i \hat{A}_t^{(i)}
        \Big)
    \Big],
    \label{eq:mvs}
\end{equation}
where $\hat{A}_t^{(i)}$ denotes the normalized advantage for objective $i$.

\item \textbf{Late-Stage Weighting (LSW).}
The PPO clipped surrogate is computed \emph{independently} for each objective
on its raw advantage; preference weights are applied only to the resulting
per-objective surrogate losses:
\begin{equation}
    \mathcal{L}_{\text{actor}}^{\text{LSW}}(\theta)
    = -\sum_{i=1}^{d} \omega_i\;\mathbb{E}_t\!\Big[
        \min\!\Big(
            \rho_t(\theta)\,A_t^{(i)},\;
            \operatorname{clip}(\rho_t(\theta),1{-}\epsilon,1{+}\epsilon)
            \,A_t^{(i)}
        \Big)
    \Big].
    \label{eq:lsw}
\end{equation}

\end{enumerate}

In ES and MVS the clipping operator is applied to a single pre-aggregated
scalar ($\omega^\top\mathbf{A}_t$ and $\sum_i \omega_i \hat{A}_t^{(i)}$
respectively). In LSW the clipping operator is applied to each $A_t^{(i)}$
independently, and preferences are combined only afterwards. The theoretical
analysis below characterizes exactly when and how these three formulations
produce different policy gradients.

\subsection{Gradient-Level Mechanism}
\label{sec:gradient_mechanism}

We first establish that ES and LSW are indistinguishable whenever the trust
region is inactive, and that their divergence arises solely from how the
clipping operator selects its active branch under conflicting objective signs.

\begin{lemma}[Unclipped equivalence, clipped divergence]
\label{lem:clipping_mechanism}
Fix a state--action pair $(s_t, a_t)$, a ratio $\rho_t(\theta)$, a clipping
parameter $\epsilon > 0$, and a preference vector $\omega \in \Delta^{d-1}$.
Denote the per-sample integrands of ES and LSW by
$\ell^{\text{ES}}_t$ and $\ell^{\text{LSW}}_t$ respectively (dropping the
expectation for pointwise analysis).

\textbf{(i) Unclipped regime.} If $\rho_t(\theta) \in [1-\epsilon, 1+\epsilon]$,
then for every configuration of $\mathbf{A}_t$,
\[
\nabla_\theta \ell^{\text{ES}}_t
\;=\;
\nabla_\theta \ell^{\text{LSW}}_t
\;=\;
\nabla_\theta \rho_t(\theta)\,\cdot\,\sum_{i=1}^d \omega_i A_t^{(i)}.
\]

\textbf{(ii) Clipped regime, aligned signs.} If all nonzero $A_t^{(i)}$ share
the same sign, ES and LSW select the same branch of the $\min(\cdot,\cdot)$
operator, and their gradients again coincide up to the identical scalar factor
$\sum_i \omega_i A_t^{(i)}$.

\textbf{(iii) Clipped regime, conflicting signs.} Suppose there exist
$i, j$ with $A_t^{(i)} > 0$, $A_t^{(j)} < 0$, and $\omega_i, \omega_j > 0$,
and suppose $\rho_t(\theta) \notin [1-\epsilon, 1+\epsilon]$. Then ES and LSW
in general select different branches of the $\min$ operator and produce
different policy gradients:
\[
\nabla_\theta \ell^{\text{ES}}_t \;\neq\; \nabla_\theta \ell^{\text{LSW}}_t.
\]
Specifically, partition the objectives into
$\mathcal{I}^+ = \{i : A_t^{(i)} > 0\}$ and $\mathcal{I}^- = \{i : A_t^{(i)} < 0\}$,
and suppose $\rho_t(\theta) > 1+\epsilon$. Then:
\begin{align*}
\nabla_\theta \ell^{\text{LSW}}_t
&\;=\; \nabla_\theta \rho_t(\theta)\,\cdot\,
      \sum_{i \in \mathcal{I}^-} \omega_i A_t^{(i)}, \\
\nabla_\theta \ell^{\text{ES}}_t
&\;=\;
\begin{cases}
\mathbf{0}
& \text{if } \omega^\top \mathbf{A}_t \geq 0, \\
\nabla_\theta \rho_t(\theta)\,\cdot\,\omega^\top \mathbf{A}_t
& \text{if } \omega^\top \mathbf{A}_t < 0.
\end{cases}
\end{align*}
A symmetric statement holds for $\rho_t(\theta) < 1-\epsilon$, with the roles
of $\mathcal{I}^+$ and $\mathcal{I}^-$ exchanged.
\end{lemma}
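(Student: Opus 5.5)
The plan is to reduce everything to one elementary fact about the scalar PPO integrand. I would state it as a preliminary claim. For a fixed scalar $A$, define $f(\theta;A) = \min\big(\rho_t(\theta)A,\ \operatorname{clip}(\rho_t(\theta),1-\epsilon,1+\epsilon)A\big)$. At every $\theta$ with $\rho_t(\theta)\neq 1\pm\epsilon$,
\[
\nabla_\theta f(\theta;A) = \mathbb{1}\big[\text{the unclipped branch is active}\big]\,A\,\nabla_\theta\rho_t(\theta).
\]
Here the unclipped branch is active exactly when either $A>0$ and $\rho_t<1+\epsilon$, or $A<0$ and $\rho_t>1-\epsilon$. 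When $A=0$ the gradient is $\mathbf{0}$.

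To prove the claim I would split on the sign of $A$.
\begin{itemize}
\item For $A>0$, $\min(\rho A,\operatorname{clip}(\rho)A)=A\min(\rho,\operatorname{clip}(\rho))$. This equals $\rho A$ for $\rho\le 1+\epsilon$ and the constant $(1+\epsilon)A$ for $\rho>1+\epsilon$.
\item For $A<0$ the min becomes a max, by the symmetric computation.
\end{itemize}
At the kink points $\rho_t=1\pm\epsilon$ I would either adopt the usual autodiff convention that the derivative of $\operatorname{clip}$ vanishes outside the open interval, or work with one-sided derivatives. I would state this convention explicitly, since part (i) is phrased on the closed interval.

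With the claim in hand, the two integrands are handled as follows.
\begin{itemize}
\item $\ell^{\text{ES}}_t=f(\theta;\omega^\top\mathbf{A}_t)$.
\item $\ell^{\text{LSW}}_t=\sum_i\omega_i f(\theta;A_t^{(i)})$. Its gradient is therefore the $\omega$-weighted sum of per-objective indicator terms.
\end{itemize}
For part (i), when $\rho_t\in(1-\epsilon,1+\epsilon)$ every indicator equals $1$ regardless of sign. Both gradients then reduce to $\nabla\rho_t\sum_i\omega_iA_t^{(i)}$.

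For part (ii), suppose all nonzero $A_t^{(i)}$ are, say, positive. The indicator for each such objective depends only on $\rho_t$ and on the common sign. Since $\omega\ge 0$, the scalar $\omega^\top\mathbf{A}_t$ has the same sign, or is $0$ when every weighted term vanishes. Hence ES and LSW activate the same branch, and both gradients equal the same indicator times $\nabla\rho_t\,\omega^\top\mathbf{A}_t$. The negative case is symmetric.

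For part (iii), take $\rho_t>1+\epsilon$.
\begin{itemize}
\item For LSW, the claim kills every $i\in\mathcal{I}^+$ and keeps every $i\in\mathcal{I}^-$. This gives $\nabla\rho_t\sum_{i\in\mathcal{I}^-}\omega_iA_t^{(i)}$.
\item For ES, the claim applied to the scalar $\omega^\top\mathbf{A}_t$ gives $\mathbf{0}$ if it is nonnegative, and $\nabla\rho_t\,\omega^\top\mathbf{A}_t$ otherwise.
\end{itemize}
To establish $\neq$, I would compare coefficients of $\nabla\rho_t$. The LSW coefficient $c_{\text{LSW}}=\sum_{\mathcal{I}^-}\omega_iA_t^{(i)}$ is strictly negative, because $\omega_j>0$ and $A_t^{(j)}<0$. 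The ES coefficient is either $0$ or $c_{\text{LSW}}+\sum_{\mathcal{I}^+}\omega_iA_t^{(i)}$. The added sum is strictly positive because $\omega_i>0$ and $A_t^{(i)}>0$. In both cases the ES coefficient differs from $c_{\text{LSW}}$, so the gradients differ whenever $\nabla_\theta\rho_t\neq\mathbf{0}$. The case $\rho_t<1-\epsilon$ is the mirror image: positive-advantage terms survive and negative ones are killed.

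The main obstacle is not the algebra but making the statement precise.
\begin{itemize}
\item The phrase "in general" must be pinned down to the nondegeneracy condition $\nabla_\theta\rho_t(\theta)\neq\mathbf{0}$.
\item The kink points $\rho_t=1\pm\epsilon$, which part (i) includes, need the explicit derivative convention; otherwise the gradient is not defined there.
\end{itemize}
I would handle both by stating the convention up front and adding the nondegeneracy hypothesis as a footnote-level remark.
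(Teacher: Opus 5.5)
Your proof is correct and follows the paper's own argument. Both rest on a scalar case analysis of which branch of the $\min$ is active: the clipped branch is selected iff $A>0,\ \rho_t>1+\epsilon$ or $A<0,\ \rho_t<1-\epsilon$. That analysis is applied once to $\omega^\top\mathbf{A}_t$ for ES and separately to each $A_t^{(i)}$ for LSW.

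Your extra care tightens points the paper's proof passes over:
\begin{itemize}
\item You make explicit the nondegeneracy condition $\nabla_\theta\rho_t\neq\mathbf{0}$.
\item Your coefficient comparison actually establishes the $\neq$ in (iii).
\item You flag that (i) at $\rho_t=1\pm\epsilon$ needs a derivative convention. The convention that delivers (i) there is $\operatorname{clip}'=1$ on the closed interval; letting it vanish at the endpoints leaves the answer to how $\min$ breaks ties.
\end{itemize}
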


\begin{proof}
In the unclipped regime, $\operatorname{clip}(\rho_t, 1-\epsilon, 1+\epsilon) = \rho_t$,
so both arguments of the $\min$ operator coincide in both ES and LSW. The per-sample
integrands reduce to $\rho_t \cdot \omega^\top \mathbf{A}_t$ and
$\sum_i \omega_i \rho_t A_t^{(i)}$, which are algebraically identical. Taking
gradients yields part (i).

In the clipped regime, the $\min$ operator selects the clipped branch
(gradient zero with respect to $\theta$) when that branch is smaller, and the
unclipped branch (gradient $\nabla\rho_t \cdot A$) otherwise. For a scalar
advantage $A$, a standard case analysis yields that the clipped branch is
selected iff $A > 0$ and $\rho_t > 1+\epsilon$, or $A < 0$ and $\rho_t < 1-\epsilon$.
Under ES this decision is made once based on $\omega^\top \mathbf{A}_t$; under
LSW it is made independently for each $A_t^{(i)}$.

Under aligned signs (part ii), every $A_t^{(i)}$ has the same sign as
$\omega^\top \mathbf{A}_t$, so LSW's per-objective decisions all match ES's
single decision, and both surrogates contribute the same gradient.

Under conflicting signs and $\rho_t > 1+\epsilon$ (part iii): each $i \in \mathcal{I}^+$
satisfies the clipping condition, so LSW's $i$-th term contributes zero gradient;
each $i \in \mathcal{I}^-$ fails the clipping condition, so LSW's $i$-th term
contributes $\omega_i \nabla \rho_t \cdot A_t^{(i)}$. Summing yields the stated
LSW gradient. For ES, the single clipping decision is governed by the sign of
$\omega^\top \mathbf{A}_t$: if nonnegative, the clipped branch is selected
and the gradient vanishes; if negative, the unclipped branch contributes
$\nabla \rho_t \cdot \omega^\top \mathbf{A}_t$. \qed
\end{proof}

\begin{example}[ES annihilates a valid learning signal]
\label{ex:worked}
Let $d=2$, $\omega = (\tfrac{1}{2}, \tfrac{1}{2})$,
$A_t^{(1)} = +1$, $A_t^{(2)} = -0.9$, so $\omega^\top \mathbf{A}_t = +0.05$.
Suppose $\rho_t(\theta) > 1+\epsilon$.

\textbf{ES:} The scalar advantage $+0.05$ is positive and $\rho_t$ is above the
upper clip, so the $\min$ selects the clipped branch. \emph{$\nabla \ell^{\text{ES}}_t = 0$.}

\textbf{LSW:}
\begin{itemize}[leftmargin=2em, topsep=0pt]
    \item $i=1$: $A_t^{(1)} > 0$ and $\rho_t > 1+\epsilon \Rightarrow$ clipped branch, zero gradient.
    \item $i=2$: $A_t^{(2)} < 0$ and $\rho_t > 1+\epsilon \Rightarrow$ unclipped branch.
\end{itemize}
Total gradient: $\nabla \ell^{\text{LSW}}_t
= \omega_2 \cdot A_t^{(2)} \cdot \nabla \rho_t = -0.45 \cdot \nabla \rho_t \neq 0$.

ES produces no learning signal despite the presence of a large, valid
per-objective advantage on objective 2; LSW retains the signal from the
conflicting objective and correctly pushes the policy away from actions that
hurt it. A symmetric example with $\rho_t < 1-\epsilon$ shows ES discarding
gradient from the positive objective.
\end{example}

Lemma~\ref{lem:clipping_mechanism} isolates the mechanism: the three
formulations are equivalent until the trust region binds, at which point
ES's single clipping decision is made on an aggregated signal that can
destructively combine conflicting objectives, while LSW's per-objective
decisions each reflect the sign and magnitude of the corresponding objective.
The consequence is that ES can entirely suppress a gradient that LSW preserves.

\subsection{Signal Magnitude as a Supporting Bound}

The gradient-level analysis above is the primary justification for LSW.
For completeness, we note a coarser magnitude bound that characterizes the
pre-clipping signal available to each formulation.

\begin{proposition}[Magnitude bound on the pre-clip scalar]
\label{prop:magnitude_bound}
Let $A_t^\omega := \omega^\top \mathbf{A}_t$ denote the scalarized advantage
under ES or MVS, and let
$M_{\text{LSW}} := \sum_{i=1}^d \omega_i |A_t^{(i)}|$
denote the sum of weighted per-objective magnitudes available to LSW prior
to clipping. Then
\[
|A_t^\omega| \;\leq\; M_{\text{LSW}},
\]
with strict inequality whenever there exist $i, j$ with
$A_t^{(i)} A_t^{(j)} < 0$ and $\omega_i, \omega_j > 0$.
\end{proposition}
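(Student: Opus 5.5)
The plan is a direct application of the triangle inequality, followed by a characterization of its equality case. First, since $\omega \in \Delta^{d-1}$ has nonnegative entries, we write
\[
|A_t^\omega| \;=\; \Big|\sum_{i=1}^d \omega_i A_t^{(i)}\Big| \;\le\; \sum_{i=1}^d |\omega_i A_t^{(i)}| \;=\; \sum_{i=1}^d \omega_i |A_t^{(i)}| \;=\; M_{\text{LSW}},
\]
where the middle step uses $|\omega_i| = \omega_i$. This gives the non-strict bound immediately. Whether the advantages are raw (ES) or normalized (MVS), the argument is identical; for MVS we read the statement with $\hat A_t^{(i)}$ in both sides. We note this rather than assume any result from Proposition~\ref{prop:es_mvs_equiv}.

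For strictness, we use the fact that equality in the finite triangle inequality for real numbers $x_1,\dots,x_d$ holds iff all nonzero $x_k$ share a common sign. Set $x_k := \omega_k A_t^{(k)}$. Under the hypothesis $A_t^{(i)}A_t^{(j)}<0$ with $\omega_i,\omega_j>0$, we have $x_i x_j = \omega_i\omega_j A_t^{(i)}A_t^{(j)} < 0$, so there are two nonzero terms of opposite sign and equality fails.

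To make this self-contained, we would give the elementary argument. Split the index set into $\mathcal{I}^+=\{k: x_k>0\}$ and $\mathcal{I}^-=\{k:x_k<0\}$, with $P=\sum_{\mathcal{I}^+}x_k$ and $N=\sum_{\mathcal{I}^-}|x_k|$. Then $|A_t^\omega| = |P-N|$ and $M_{\text{LSW}} = P+N$. Since $|P-N| = P+N - 2\min(P,N)$ and both $P,N>0$ (by the $i\in\mathcal{I}^+$, $j\in\mathcal{I}^-$ witnesses), we get the explicit gap $M_{\text{LSW}}-|A_t^\omega| = 2\min(P,N) > 0$. As a bonus, this quantifies the cancellation.

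No step is a real obstacle. The only care needed is to translate the hypothesis on $A_t^{(i)}$ into opposite-sign nonzero weighted terms, which is exactly what requires $\omega_i,\omega_j>0$. A secondary point is being explicit about which advantages (raw or normalized) enter each side in the MVS case.
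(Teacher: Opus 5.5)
Your proof is correct and follows the paper's argument: the triangle inequality using $\omega_i \ge 0$, with strictness coming from two nonzero weighted terms of opposite sign. Your explicit gap $M_{\text{LSW}} - |A_t^\omega| = 2\min(P,N)$ justifies the equality case of the triangle inequality that the paper only asserts, and your note that MVS should be read with $\hat A_t^{(i)}$ on both sides is a sensible clarification the paper leaves implicit.
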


\begin{proof}
Triangle inequality: $|\omega^\top \mathbf{A}_t|
= |\sum_i \omega_i A_t^{(i)}|
\leq \sum_i \omega_i |A_t^{(i)}| = M_{\text{LSW}}$, with strictness iff at
least two terms have opposite signs. \qed
\end{proof}

Proposition~\ref{prop:magnitude_bound} quantifies a necessary consequence of
ES and MVS: whenever objectives conflict, the scalar entering the clipping
operator has strictly smaller magnitude than the sum of its components. This
bound is a coarser statement than Lemma~\ref{lem:clipping_mechanism}, which
shows that the gradient itself, not merely its pre-clip magnitude, can
collapse to zero under ES.

\subsection{Structural Hierarchy}

\begin{proposition}[Algebraic equivalence of ES and MVS under homogeneous conditions]
\label{prop:es_mvs_equiv}
In the absence of per-objective preprocessing (i.e., $\hat{A}_t^{(i)} = A_t^{(i)}$
for all $i$), ES and MVS collapse to the same scalar $\omega^\top \mathbf{A}_t$
before clipping and are therefore algebraically identical:
\[
\mathcal{L}_{\text{clip}}^{\text{ES}}(\theta)
\;=\;
\mathcal{L}_{\text{actor}}^{\text{MVS}}(\theta).
\]
\end{proposition}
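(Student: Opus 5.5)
The argument is direct substitution. Under the hypothesis $\hat{A}_t^{(i)} = A_t^{(i)}$ for every $i$ and $t$, we show that the scalar fed to the clipping operator in MVS coincides pointwise with the one fed to it in ES. Equality of the surrogates then follows because both are the same fixed function of $(\rho_t(\theta), \text{scalar advantage})$, averaged over $t$.

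\textbf{Step 1.} Fix a sample index $t$. Write $g_\epsilon(\rho, A) := \min\bigl(\rho A,\ \operatorname{clip}(\rho,1-\epsilon,1+\epsilon)A\bigr)$ for the PPO integrand, so that \eqref{eq:es} reads $\mathbb{E}_t[g_\epsilon(\rho_t(\theta), \omega^\top\mathbf{A}_t)]$ and \eqref{eq:mvs} reads $-\mathbb{E}_t[g_\epsilon(\rho_t(\theta), \sum_i \omega_i \hat{A}_t^{(i)})]$.

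\textbf{Step 2.} Substitute the hypothesis: $\sum_i \omega_i \hat{A}_t^{(i)} = \sum_i \omega_i A_t^{(i)} = \omega^\top \mathbf{A}_t$. Both integrands therefore evaluate $g_\epsilon$ at identical arguments for every $t$ and every $\theta$. Since the ratio $\rho_t(\theta)$ is the same object in both definitions (same $\pi_\theta$, $\pi_{\theta_{\text{old}}}$, $\omega$), taking the expectation over $t$ gives equality of the per-sample integrands and hence of the expected surrogates. Because the equality holds identically in $\theta$, the gradients also coincide. This also shows that the branch selection analyzed in Lemma~\ref{lem:clipping_mechanism} is identical for ES and MVS.

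\textbf{Step 3 (the only real obstacle): sign conventions.} As written, $\mathcal{L}_{\text{clip}}^{\text{ES}}$ is an objective to be maximized, while $\mathcal{L}_{\text{actor}}^{\text{MVS}}$ carries a leading minus sign and is a loss. The literal identity therefore holds only after both are put in the same convention. We interpret the claim as $\mathcal{L}_{\text{actor}}^{\text{MVS}} = -\mathcal{L}_{\text{clip}}^{\text{ES}}$, i.e.\ the ES actor loss, consistent with how \eqref{eq:final_loss} negates the clip objective. We state this explicitly so that the two formulations define the same optimization problem and yield the same update.

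We also note that "absence of preprocessing" can be relaxed. If every objective undergoes the same positive affine-free rescaling $\hat{A}^{(i)} = c A^{(i)}$ with $c>0$, the scalars differ by the factor $c$. Since $g_\epsilon(\rho, cA) = c\,g_\epsilon(\rho, A)$ for $c>0$ (clipping is applied to $\rho$, not to $A$), the branch choices are preserved and the gradients are proportional. This indicates that only heterogeneous per-objective normalization can separate MVS from ES, which is the point exploited later.
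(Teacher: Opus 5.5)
Your substitution argument is correct and is the paper's proof: with $\hat{A}_t^{(i)}=A_t^{(i)}$ the MVS scalar becomes $\omega^\top\mathbf{A}_t$, so both surrogates evaluate the same clipped integrand and make the same clipping decisions. Your sign caveat is also right, and the paper's one-line proof passes over it: as displayed, \eqref{eq:es} and \eqref{eq:mvs} differ by an overall minus sign, so the identity should read $\mathcal{L}_{\text{actor}}^{\text{MVS}}=-\mathcal{L}_{\text{clip}}^{\text{ES}}$. Only your closing aside overreaches slightly, since even an objective-independent shift $\hat{A}_t^{(i)}=A_t^{(i)}-\mu$ turns the scalar into $\omega^\top\mathbf{A}_t-\mu$ and can flip the clipping branch, so heterogeneity is not the only way to separate MVS from ES.
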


\begin{proof}
Without preprocessing, the MVS scalar is $\sum_i \omega_i A_t^{(i)} = \omega^\top \mathbf{A}_t$,
which is identical to the ES scalar. Both surrogates therefore operate on
the same collapsed signal, and their clipping decisions coincide. \qed
\end{proof}

\begin{proposition}[Preprocessing distortion in MVS]
\label{prop:mvs_distortion}
Under per-objective advantage normalization $\mathcal{N}(\cdot)$, which is
not positively homogeneous of degree~1, MVS computes
$\mathcal{N}(\omega_i A_t^{(i)})$ while LSW-style normalization computes
$\omega_i \mathcal{N}(A_t^{(i)})$. These quantities differ whenever
preferences are non-uniform, and MVS therefore distorts the per-objective
scale established by $\mathcal{N}$ relative to LSW.
\end{proposition}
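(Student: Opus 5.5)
The plan is to make precise what a per-objective normalization $\mathcal{N}$ is, and then exhibit the discrepancy between applying it to weighted versus unweighted advantages. I would fix the standard batch normalization used in practice, $\mathcal{N}(x)_t = (x_t - \mu(x))/(\sigma(x)+\delta)$ over a minibatch, with $\delta>0$ a small stabilizer. The key structural fact is that $\mathcal{N}$ is \emph{scale invariant up to $\delta$}, not positively homogeneous of degree~1. For $c>0$ we have $\mathcal{N}(cx) = (cx-c\mu)/(c\sigma+\delta)$. This equals $\mathcal{N}(x)$ when $\delta=0$ and is approximately $\mathcal{N}(x)$ otherwise, whereas homogeneity would require $\mathcal{N}(cx)=c\,\mathcal{N}(x)$. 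I would first record this identity as a one-line computation, since it is the whole engine of the proposition.

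Next I would compare the two pipelines objective by objective. MVS feeds $\mathcal{N}(\omega_i A^{(i)})$ into the sum. Within a rollout segment the preference $\omega$ is constant (Algorithm~\ref{alg:mo-ppo-single} resamples only at episode end), so $\omega_i$ is a positive scalar on the relevant batch. Hence, with $\delta=0$, $\mathcal{N}(\omega_i A^{(i)}) = \mathcal{N}(A^{(i)})$ whenever $\omega_i>0$. LSW-style weighting instead gives $\omega_i\mathcal{N}(A^{(i)})$. Their difference is $(1-\omega_i)\mathcal{N}(A^{(i)})$, which is nonzero for any nondegenerate $A^{(i)}$ whenever $\omega_i<1$.

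The conclusion then has two parts. If preferences are non-uniform, the ratios $\mathcal{N}(\omega_i A^{(i)})/\omega_i\mathcal{N}(A^{(i)}) = 1/\omega_i$ differ across $i$. So the relative scale that $\mathcal{N}$ established between objectives is rescaled by a factor depending on $\omega$: effectively the preference is erased, since MVS reduces to $\sum_i \mathcal{N}(A^{(i)})$. That is the claimed distortion. In the uniform case $\omega_i=1/d$, the discrepancy is a common factor $d$, which only rescales the PPO step and does not change the relative weighting. This explains the ``non-uniform'' hypothesis. With $\delta>0$, I would redo the computation keeping $\delta$ and show that the ratio $\omega_i\sigma_i/(\omega_i\sigma_i+\delta)$ still varies with $\omega_i$. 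The difference therefore persists, just not in closed form.

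The main obstacle is that the statement is informal: ``differ'' needs a concrete $\mathcal{N}$, and for $\omega$ varying across samples within a batch the scale-invariance argument breaks down. I would restrict to the batch-constant-$\omega$ case for the clean argument. For mixed batches, I would give a two-sample counterexample showing that $\mathcal{N}(\omega_i A^{(i)})\neq\omega_i\mathcal{N}(A^{(i)})$ generically, which suffices to establish the claimed difference.
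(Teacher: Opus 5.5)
Your proof is correct. It rests on the same fact as the paper's: standard batch normalization is not positively homogeneous of degree~$1$, so weighting before versus after $\mathcal{N}$ changes the result. You carry it out more sharply, though. The paper only asserts that rescaling by $c$ ``changes both the mean and standard deviation,'' hence $\mathcal{N}(cx)\neq c\,\mathcal{N}(x)$ in general. You observe that mean and standard deviation change \emph{proportionally}, so for $\delta=0$ the map $\mathcal{N}$ is homogeneous of degree~$0$, i.e.\ $\mathcal{N}(cx)=\mathcal{N}(x)$ for $c>0$. This gives the explicit discrepancy $(1-\omega_i)\mathcal{N}(A^{(i)})$. It also gives a stronger conclusion the paper's one-line argument does not extract: on a batch with constant $\omega$, MVS as the proposition describes it ignores $\omega$ except through its support. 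The cost is the batch-constant-$\omega$ assumption, which Algorithm~\ref{alg:mo-ppo-single} does not guarantee, since parallel environments with per-episode resampling put several preferences in one buffer. For mixed batches you fall back on a counterexample, which is all that the paper's ``in general'' claims anyway.

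A few details to tighten.

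\emph{Vertices.} Your formula shows that the two quantities differ for some $i$ exactly when some $\omega_i\in(0,1)$, i.e.\ when $\omega$ is not a vertex of the simplex. At $\omega=e_k$ they coincide in your batch-constant setting (with $\mathcal{N}(0)=0$, as any $\delta>0$ gives). So the proposition's ``whenever preferences are non-uniform'' fails there, an exception the paper's proof also misses.

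\emph{Relative distortion.} The \emph{relative} distortion across objectives that you emphasize occurs iff the positive weights are not all equal, not merely iff $\omega$ is non-uniform. For $\omega=(\tfrac12,\tfrac12,0)$ and $\delta=0$, MVS is exactly twice the LSW-style sum, so ``the ratios $1/\omega_i$ differ across $i$'' is false there. The per-objective magnitudes still differ, and that is the sense of distortion the paper's proof uses.

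\emph{The case $\delta>0$.} This case is in closed form: $\mathcal{N}(\omega_iA^{(i)})=\frac{\sigma_i+\delta}{\omega_i\sigma_i+\delta}\,\omega_i\mathcal{N}(A^{(i)})$. Because this factor depends on $\sigma_i$, with $\delta>0$ even uniform preferences distort the relative scale slightly.
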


\begin{proof}
Standard normalization (e.g., subtract mean, divide by standard deviation)
is not positively homogeneous of degree~1: for $c \neq 1$,
$\mathcal{N}(c x) \neq c \, \mathcal{N}(x)$ in general, because the scaling
of $x$ by $c$ changes both the mean and standard deviation of the batch.
Applying $\omega_i$ before $\mathcal{N}$ (as MVS does) therefore distorts
the normalized magnitude of objective $i$ relative to the LSW ordering,
which applies $\mathcal{N}$ first and $\omega_i$ afterwards. \qed
\end{proof}

\begin{corollary}[Robustness hierarchy]
\label{cor:hierarchy}
Combining Lemma~\ref{lem:clipping_mechanism}, Proposition~\ref{prop:magnitude_bound},
Proposition~\ref{prop:es_mvs_equiv}, and Proposition~\ref{prop:mvs_distortion}:
\[
\text{LSW} \;\succ\; \text{MVS} \;\succeq\; \text{ES},
\]
in terms of gradient preservation under conflicting objectives and an active
trust region. LSW strictly dominates ES and MVS whenever objective signs
conflict and $\rho_t$ is outside the clipping window. MVS and ES coincide in
the absence of per-objective preprocessing and MVS strictly exceeds ES only
when such preprocessing is present, via the scale distortion of
Proposition~\ref{prop:mvs_distortion}.
\end{corollary}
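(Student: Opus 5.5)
The hierarchy is only meaningful once "$\succ$" and "$\succeq$" are pinned down, so I will first fix an order on variants by gradient preservation. Variant $X \succeq Y$ if, for every sample $(s_t,a_t)$ and every configuration of $(\rho_t,\mathbf{A}_t,\omega)$, every per-objective signal that reaches the gradient under $Y$ also reaches it under $X$. Here a per-objective signal reaches the gradient if objective $i$'s advantage enters through the unclipped branch with its own sign and weight. Strictness $X \succ Y$ means there is, in addition, a nonempty set of configurations where $X$ yields a nonzero gradient contribution that $Y$ discards. With this definition the corollary splits into three claims: (a) LSW $\succeq$ ES, (b) LSW $\succ$ ES strictly under conflict and active clipping, and (c) MVS $\succeq$ ES, with equality absent preprocessing.

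For (a) I will run the case split of Lemma~\ref{lem:clipping_mechanism}.
\begin{itemize}
\item In the unclipped regime and in the aligned-sign clipped regime, parts (i) and (ii) give identical gradients, so the variants are tied.
\item In the conflicting-sign regime with $\rho_t>1+\epsilon$, part (iii) gives the LSW gradient as $\nabla\rho_t\sum_{i\in\mathcal{I}^-}\omega_iA_t^{(i)}$. The ES gradient is either $\mathbf 0$, or $\nabla\rho_t\,\omega^\top\mathbf{A}_t$, in which the $\mathcal{I}^-$ signal is partly cancelled by the $\mathcal{I}^+$ terms.
\item To compare these I will use Proposition~\ref{prop:magnitude_bound}. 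Since $\omega^\top\mathbf A_t\le 0$ in the second ES case, $|\omega^\top\mathbf A_t| \le \sum_{i\in\mathcal I^-}\omega_i|A_t^{(i)}|$. So the ES coefficient on $\nabla\rho_t$ is never larger in magnitude than the LSW coefficient.
\item The case $\rho_t<1-\epsilon$ is symmetric, with $\mathcal I^+$ and $\mathcal I^-$ exchanged.
\end{itemize}
For (b), Example~\ref{ex:worked} exhibits a configuration where the ES gradient is $\mathbf 0$ while the LSW gradient is $-0.45\,\nabla\rho_t\neq 0$. The inequality is strict on the open set $\{\omega^\top\mathbf A_t>0,\ \mathcal I^-\ne\emptyset,\ \rho_t>1+\epsilon\}$, which gives strict dominance.

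For (c), Proposition~\ref{prop:es_mvs_equiv} gives equality when $\hat A=A$, and that is the $\succeq$ direction with a tie. When preprocessing is present, Proposition~\ref{prop:mvs_distortion} shows that the MVS scalar differs from the ES scalar. I will state the strict part of MVS over ES only as the scale-distortion difference the corollary describes, not as a pointwise gradient dominance. A normalized scalar can still cancel, so MVS gets no pointwise improvement over ES and inherits ES's single clipping decision. The same observation gives LSW $\succ$ MVS: rerun the argument for (a) and (b) with $\hat A$ in place of $A$. The cancellation example persists after choosing the normalized values appropriately, e.g.\ $\hat A^{(1)}=1$ and $\hat A^{(2)}=-0.9$.

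The main obstacle is that "MVS $\succeq$ ES" is not a pointwise gradient inequality. Normalization can flip the sign of $\sum_i\omega_i\hat A^{(i)}$ relative to $\omega^\top\mathbf A$, and then MVS preserves less than ES on some samples. I would first try to make the order claim precise only in the weak sense of "coincide absent preprocessing, and otherwise differ by the distortion of Proposition~\ref{prop:mvs_distortion}", which is exactly how the corollary's final sentence phrases it. I would not attempt a universal dominance of MVS over ES.
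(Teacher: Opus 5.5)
Your route is the one the corollary invokes; the paper gives no separate proof beyond naming the four results and some informal remarks afterward. It uses the case split of Lemma~\ref{lem:clipping_mechanism}, Example~\ref{ex:worked} for strictness, Proposition~\ref{prop:es_mvs_equiv} for the ES/MVS tie, and Proposition~\ref{prop:mvs_distortion} read only as ``the two scalars differ.'' Your caveat that MVS $\succeq$ ES cannot hold pointwise is correct, since normalization can flip the sign of the aggregated scalar. The weak reading you settle on is all those propositions support.

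\textbf{The gap in (a).} The order you define is not the property you verify, and under your own definition (a) is false. Take $\rho_t>1+\epsilon$, conflicting signs, and $\omega^\top\mathbf{A}_t<0$. ES then takes the unclipped branch, so its gradient $\nabla_\theta\rho_t\,\omega^\top\mathbf{A}_t$ contains every term $\omega_iA_t^{(i)}$ with $i\in\mathcal{I}^+$, with its own sign and weight. LSW clips exactly those terms. So some per-objective signals reach the gradient under ES but not under LSW, and LSW $\succeq$ ES fails in your sense. What your computation actually shows is a collinear-magnitude statement: on every sample, $\nabla_\theta\ell^{\text{ES}}_t=\tau\,\nabla_\theta\ell^{\text{LSW}}_t$ for some $\tau\in[0,1]$. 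Make that the definition of $\succeq$, with strictness meaning $\tau<1$ and a nonzero LSW gradient.

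\textbf{Strictness holds on the whole conflict regime.} With that definition, strictness is not confined to your subset where $\omega^\top\mathbf{A}_t>0$. When $\omega^\top\mathbf{A}_t<0$,
\[
\tau \;=\; 1-\sum_{i\in\mathcal{I}^+}\omega_iA_t^{(i)}\,\Big/\,\Bigl|\sum_{j\in\mathcal{I}^-}\omega_jA_t^{(j)}\Bigr| \;\in\;(0,1)
\]
as soon as both $\mathcal{I}^+$ and $\mathcal{I}^-$ contain an objective of positive weight. This gives the corollary's ``whenever,'' not just strictness on a subset. Under your discard-based strictness this subcase would not count, since ES attenuates the $\mathcal{I}^-$ signal rather than discarding it.

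\textbf{Two smaller points.}
\begin{enumerate}
\item The bound you use, $|\omega^\top\mathbf{A}_t|\le\sum_{i\in\mathcal{I}^-}\omega_i|A_t^{(i)}|$ when $\omega^\top\mathbf{A}_t\le 0$, is not Proposition~\ref{prop:magnitude_bound}. That proposition bounds by the full weighted sum, including $\mathcal{I}^+$. Your bound follows instead from $\omega^\top\mathbf{A}_t\ge\sum_{i\in\mathcal{I}^-}\omega_iA_t^{(i)}$, because the $\mathcal{I}^+$ terms are nonnegative.
\item ``Rerun with $\hat A$ in place of $A$'' compares MVS against LSW applied to normalized advantages, whereas \eqref{eq:lsw} applies LSW to raw advantages. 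Batch normalization subtracts a mean, so it can change individual signs and hence the partition $\mathcal{I}^\pm$. State explicitly that both variants are compared on the same per-objective advantages. This is what the implementation's advantage normalization and the ``LSW-style normalization'' of Proposition~\ref{prop:mvs_distortion} presuppose.
\end{enumerate}
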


\subsection{Empirical Validation of the Hierarchy}

Table~\ref{tab:ablation_mvs} directly validates the theoretical hierarchy by
isolating the contribution of each scalarization stage. ES collapses
per-objective signals at the clipping stage, MVS partially recovers structure
through per-objective normalization before the single-scalar collapse, and LSW
avoids the collapse entirely by decoupling the clipping decisions.

\begin{table}[h]
\centering
\caption{Ablation of scalarization strategy across three environments,
validating the hierarchy $\text{LSW} \succ \text{MVS} \succeq \text{ES}$.
The ES$\to$MVS gain reflects the secondary benefit of per-objective
normalization before collapse; the MVS$\to$LSW gain reflects the primary
benefit of per-objective clipping, consistent with
Lemma~\ref{lem:clipping_mechanism}. $0^*$ denotes collapse to a
single-point front. Results averaged over 5 seeds for $1.5\times 10^6$ steps.}
\label{tab:ablation_mvs}
\begin{tabular}{llccc}
\toprule
Environment & Metric & ES & MVS & LSW \\
\midrule
\multirow{3}{*}{Humanoid-2d}
  & HV ($10^5\uparrow$) & $1.50 \pm 0.17$ & $2.67 \pm 0.11$ & $\mathbf{3.76 \pm 0.11}$ \\
  & EU ($10^2\uparrow$) & $2.87 \pm 0.22$ & $4.01 \pm 0.24$ & $\mathbf{5.11 \pm 0.09}$ \\
  & SP ($10^4\downarrow$) & $0^*$           & $0.004 \pm 0.004$ & $\mathbf{0.003 \pm 0.001}$ \\
\midrule
\multirow{3}{*}{Hopper-2d}
  & HV ($10^5\uparrow$) & $1.23 \pm 0.03$ & $1.25 \pm 0.16$ & $\mathbf{1.30 \pm 0.03}$ \\
  & EU ($10^2\uparrow$) & $2.38 \pm 0.05$ & $2.45 \pm 0.25$ & $\mathbf{2.47 \pm 0.01}$ \\
  & SP ($10^2\downarrow$) & $0.08 \pm 0.02$ & $0.87 \pm 1.04$ & $\mathbf{0.26 \pm 0.31}$ \\
\midrule
\multirow{3}{*}{Ant-2d}
  & HV ($10^5\uparrow$) & $1.53 \pm 0.11$ & $1.60 \pm 0.19$ & $\mathbf{1.91 \pm 0.18}$ \\
  & EU ($10^2\uparrow$) & $2.71 \pm 0.13$ & $2.75 \pm 0.22$ & $\mathbf{3.14 \pm 0.21}$ \\
  & SP ($10^3\downarrow$) & $0.18 \pm 0.07$ & $0.28 \pm 0.10$ & $0.66 \pm 0.40$ \\
\bottomrule
\end{tabular}
\end{table}

Three patterns are noteworthy. First, the HV and EU ordering ES $<$ MVS $<$ LSW
holds consistently across all three environments, directly confirming
Corollary~\ref{cor:hierarchy}. Second, the MVS$\to$LSW gain is substantially
larger than the ES$\to$MVS gain, particularly in Humanoid-2d — consistent with
per-objective clipping being the primary mechanism
(Lemma~\ref{lem:clipping_mechanism}) and per-objective normalization a
secondary one (Proposition~\ref{prop:mvs_distortion}). Third, ES collapses
entirely to a single-point front in Humanoid-2d (SP $= 0^*$), while MVS
partially recovers diversity and LSW achieves robust front coverage,
illustrating that the structural failure of single-scalar clipping is most
damaging in high-dimensional environments where objective conflict is most
severe.

\subsection{Implications}

\begin{itemize}

\item \textbf{The mechanism is at the clipping stage, not the aggregation stage.}
Lemma~\ref{lem:clipping_mechanism} shows that ES and LSW produce identical
gradients whenever the trust region is inactive. The divergence arises
entirely from how the clipping operator resolves its branch selection under
conflicting objective signs. Changing the order of weighting and clipping is
therefore not a cosmetic reordering: it changes which gradients survive the
trust region.

\item \textbf{ES can zero out a valid learning signal.}
Example~\ref{ex:worked} exhibits a case where a large per-objective advantage
exists and is consistent with policy improvement, yet ES produces zero
gradient because the aggregated scalar $\omega^\top \mathbf{A}_t$ happens to
fall on the clipped side. LSW retains this signal. This failure mode is
structural; it cannot be fixed by tuning $\epsilon$ or by reward
normalization.

\item \textbf{ES and MVS share the same clipping-stage limitation.}
Proposition~\ref{prop:es_mvs_equiv} establishes that without preprocessing
they are algebraically identical. Proposition~\ref{prop:mvs_distortion}
establishes that with preprocessing, MVS improves the pre-clip scale but
still passes a single scalar to the clipping operator, so the mechanism of
Lemma~\ref{lem:clipping_mechanism} still applies. The empirical gap between
ES and MVS is modest relative to the MVS-to-LSW gap
(Table~\ref{tab:ablation_mvs}), consistent with this analysis.

\item \textbf{LSW preserves PPO's intended semantics per-objective.}
By computing the clipped surrogate independently for each objective before
any preference weighting, LSW ensures the trust region bounds each
objective's contribution individually — exactly the behavior PPO's clip is
calibrated for. Preference weights then combine already-stabilized
contributions, and the resulting gradient reflects a preference-weighted
combination of correctly-bounded per-objective updates.

\end{itemize}

%% file: documentBody/G-regularizer.tex
\section{Theoretical Analysis of the Scaled Diversity Regularizer}
\label{sec:diversity_theory}

In this section, we provide a formal argument that the scaled diversity regularizer explicitly penalizes representational mode collapse by establishing a target separation in policy space proportional to separation in preference space.

\begin{definition}[Representational Mode Collapse]
A preference-conditioned policy $\pi_\theta(a|s,\omega)$ exhibits \textbf{mode collapse} if there exists a region in the preference simplex of non-zero measure where two distinct preference vectors, $\omega_A \neq \omega_B$, produce statistically indistinguishable action distributions for all states. Formally, for some $\delta = \|\omega_A-\omega_B\|_1 > 0$,
\[
    D_{KL}(\pi_{\theta}(\cdot|s, \omega_A) \,\|\, \pi_{\theta}(\cdot|s, \omega_B)) = 0 \quad \text{for all } s \in \mathcal{S}.
\]
\end{definition}

\begin{proposition}[Separation Induced by Diversity Regularizer]
\label{lem:diversity_theory}
Let the scaled diversity regularizer be defined as:
\[
    \mathcal{L}_{\text{diversity}}(\theta) = \mathbb{E}_{s \sim d^\pi, \omega, \omega'} \Big[ \big( D_{KL}(\pi_{\theta}(\cdot|s,\omega) \,\|\, \pi_{\theta}(\cdot|s,\omega')) - \alpha \|\omega-\omega'\|_1 \big)^2 \Big].
\]
A global minimum of this regularization term is achieved when:
\[
    D_{KL}(\pi_{\theta}(\cdot|s,\omega_A)\,\|\,\pi_{\theta}(\cdot|s,\omega_B)) = \alpha \|\omega_A-\omega_B\|_1
\quad \text{almost everywhere under } s \sim d^\pi.
\]
Consequently, any policy exhibiting mode collapse ($\exists \omega_A \neq \omega_B$ where $D_{KL} = 0$) incurs a strictly positive penalty proportional to $\alpha^2\|\omega_A-\omega_B\|_1^2$, introducing a gradient signal that discourages collapsed representations during joint optimization.
\end{proposition}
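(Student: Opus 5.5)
The plan is to argue pointwise: the regularizer is an expectation of a squared residual, so it is always nonnegative. We then exhibit when that lower bound of zero is attained, and show that collapse forces a positive residual on a set of positive measure.

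\textbf{Step 1 (nonnegativity and attainment).} Write the residual as
\[
g_\theta(s,\omega,\omega') := D_{KL}(\pi_\theta(\cdot|s,\omega)\,\|\,\pi_\theta(\cdot|s,\omega')) - \alpha\|\omega-\omega'\|_1 .
\]
Then $\mathcal{L}_{\text{diversity}}(\theta)=\mathbb{E}[g_\theta^2]\ge 0$. Zero is therefore a lower bound, and it equals the global minimum exactly when some admissible policy class achieves $g_\theta=0$ almost everywhere under the sampling measure on $(s,\omega,\omega')$. This uses the standard fact that $\mathbb{E}[X^2]=0$ iff $X=0$ a.s.

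I will state explicitly a realizability assumption: there is a parametrization with $D_{KL}=\alpha\|\omega-\omega'\|_1$ on the support. Two remarks justify treating this as an assumption rather than a consequence:
\begin{itemize}
\item The target is admissible, since $\alpha\|\omega-\omega'\|_1\ge 0$ and KL takes all nonnegative values. For example, Gaussian policies with means shifted appropriately achieve any prescribed KL between a single pair.
\item Exact joint realizability over all pairs is not automatic, because KL is not a metric. This is the main obstacle. My first attempt would be to phrase the claim conditionally: \emph{if} the infimum zero is attained, \emph{then} the minimizers are characterized by $g_\theta=0$ a.e. This is what the statement asserts.
\end{itemize}

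\textbf{Step 2 (characterization of minimizers).} Under attainment, any global minimizer satisfies $g_\theta=0$ a.e. Equivalently, $D_{KL}=\alpha\|\omega_A-\omega_B\|_1$ for almost every $s\sim d^\pi$ and almost every sampled pair. This is the displayed condition.

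\textbf{Step 3 (collapse is penalized).} Suppose mode collapse holds in the sense of the Definition. Then there is a region of positive measure in preference space, and pairs $\omega_A\neq\omega_B$ in it with $\delta=\|\omega_A-\omega_B\|_1>0$, such that $D_{KL}=0$ for all $s$. On those pairs,
\[
g_\theta^2=\alpha^2\delta^2>0 .
\]
Provided the perturbation distribution generating $\omega'$ from $\omega$ assigns positive probability to such pairs, integrating gives
\[
\mathcal{L}_{\text{diversity}}\ge \alpha^2\,\mathbb{E}\bigl[\|\omega-\omega'\|_1^2\,\mathbf{1}_{\text{collapse}}\bigr]>0 .
\]
I will need to argue this positivity from the Gaussian-noise-plus-projection construction, since its support near $\omega$ has positive measure. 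The proposition's ``penalty proportional to $\alpha^2\|\omega_A-\omega_B\|_1^2$'' is then the pointwise version of this bound.

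\textbf{Step 4 (gradient signal).} Differentiate the loss. By the chain rule,
\[
\nabla_\theta g_\theta^2 = 2g_\theta\,\nabla_\theta D_{KL} .
\]
At a collapsed configuration $g_\theta=-\alpha\delta\ne 0$. Hence the gradient is nonzero unless $\nabla_\theta D_{KL}=0$. The KL achieves its minimum at collapse, so its gradient does vanish there, and the signal only appears to second order or from nearby non-exactly-collapsed states. I would therefore phrase this part honestly as: the loss is strictly positive at collapse and zero only at separated configurations, so any descent on the joint objective is pushed away from collapse whenever the PPO term does not dominate.
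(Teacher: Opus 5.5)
Your proposal is correct and follows the same route as the paper's proof: the regularizer is an expectation of a nonnegative squared residual, it attains its lower bound $0$ exactly when the residual vanishes almost everywhere, and a collapsed pair contributes $(0-\alpha\delta)^2=\alpha^2\delta^2>0$. Your additions go beyond the paper and are sound. You treat attainability of $0$ as an assumption, you require collapsed pairs to carry positive probability under the $(\omega,\omega')$ sampling so the penalty actually registers in the expectation, and you note that $\nabla_\theta D_{KL}=0$ at exact collapse, so the claimed gradient signal is only second-order; the paper instead passes from a single pair directly to the expectation and asserts the gradient claim without argument.
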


\begin{proof}
Because $\mathcal{L}_{\text{diversity}}(\theta)$ is an expectation of a non-negative squared error, $\mathcal{L}_{\text{diversity}}(\theta) \ge 0$. The global minimum $\mathcal{L}_{\text{diversity}}(\theta) = 0$ is achieved when the squared term is zero almost everywhere under the sampling distribution, requiring $D_{KL}(\pi_{\theta}(\cdot|s,\omega_A) \,\|\, \pi_{\theta}(\cdot|s,\omega_B)) = \alpha \|\omega_A-\omega_B\|_1$ for sampled states and preference pairs.

If a policy exhibits mode collapse, meaning $D_{KL} = 0$ for some $\delta = \|\omega_A-\omega_B\|_1 > 0$, the squared error for that pair evaluates to $(0 - \alpha\delta)^2 = \alpha^2\delta^2 > 0$. Therefore, mode collapse strictly sub-optimizes the regularization term, establishing a continuous penalty against representational collapse.
\end{proof}

\paragraph{Convexity and Objective Dominance.}
While Proposition~\ref{lem:diversity_theory} shows that the regularizer encourages preference-proportional separation, in practice, $\mathcal{L}_{\text{diversity}}$ is optimized jointly with the PPO surrogate $\mathcal{L}_{\text{policy}}$. This balance is crucial: the regularizer penalizes insufficient separation only when distinct, high-performing behaviors are feasible. If the underlying environment admits only a finite set of Pareto-optimal solutions (e.g., highly discontinuous discrete environments), the primary RL objective will dominate the gradients, causing the policy to converge to these true environmental solutions even if the resulting KL divergence falls short of the target $\alpha\delta$. Thus, the diversity term smoothly \emph{encourages} distinct solutions along the Pareto front without forcing the network to hallucinate suboptimal policies in regions where the environment affords none.

%% file: documentBody/I-environments.tex
\section{Environment Descriptions}

\paragraph{Minecart.} A multi-objective task where an agent controls a cart in a 2D continuous environment. The state space is 70dimensional. The agent selects from a discrete action space (6 actions) to navigate the environment and mine for resources. The reward is a 3-dimensional vector, with conflicting objectives for collecting two different types of ore while minimizing fuel consumption. The agent must learn to navigate between different mining locations, creating a trade-off between the types of ore collected and the fuel expended. The hypervolume reference point is $[-1, -1, -200]$ and the $\gamma$ used to calculate the returns to construct the front is 0.99 

\paragraph{Lunar-Lander-4D.} 
A multi-objective version of the classic Lunar Lander control problem.
The state space is 8-dimensional ($\mathcal{S} \subseteq \mathbb{R}^8$), containing the lander's position, velocity, angle, and leg contact information.
The agent selects from a 4-dimensional discrete action space ($\mathcal{A}$) representing firing the main engine, the left or right orientation thrusters, or doing nothing.
The reward is a 4-dimensional vector, with separate components for the landing outcome (success or crash), a distance-based shaping reward, main engine fuel cost, and side engine fuel cost. 
The hypervolume reference point is $[-101, -1001, -101, -101]$ and the $\gamma$ used to calculate the returns to construct the front is 0.99

\paragraph{Hopper-2D.}
A continuous-control task based on the Hopper-v5 environment, where a one-legged robot must learn a trade-off between forward movement and jumping height.
The observation space is 11-dimensional ($\mathcal{S} \subseteq \mathbb{R}^{11}$), capturing joint angles and velocities, while the 3-dimensional continuous action space ($\mathcal{A} \subseteq \mathbb{R}^3$) controls joint torques.
The two objectives are the agent's forward velocity and its vertical displacement, both augmented with a small control cost. The hypervolume reference point is $[-100, -100]$ and the $\gamma$ used to calculate the returns to construct the front is 0.99.

\paragraph{Hopper-3D.}
An extension of MO-Hopper-2D with an explicit third objective: minimizing control cost.
The agent must now learn a three-way trade-off between forward velocity, jumping height, and energy efficiency, which is defined as the negative squared magnitude of the action vector ($-\sum a_i^2$).
The observation space remains 11-dimensional and the action space 3-dimensional. The hypervolume reference point is $[-100, -100, -100]$ and the $\gamma$ used to calculate the returns to construct the front is 0.99.

\paragraph{Ant-2D.}
Based on the Ant-v5 robot, this continuous-control task involves a quadruped navigating a 2D plane.
The state space is 105-dimensional ($\mathcal{S} \subseteq \mathbb{R}^{105}$), representing joint positions, velocities, and contact forces.
The action space is 8-dimensional ($\mathcal{A} \subseteq \mathbb{R}^8$), controlling the torques at each leg joint.
The 2-dimensional reward vector consists of the agent's x-velocity ($v_x$) and y-velocity ($v_y$).
The hypervolume reference point is $[-100, -100]$ and the $\gamma$ used to calculate the returns to construct the front is 0.99.

\paragraph{Ant-3D.}
An extension of MO-Ant-2D with an additional objective for control cost.
The agent must optimize its x-velocity and y-velocity while simultaneously minimizing the magnitude of applied joint torques ($-2\sum a_i^2$).
The state space remains 105-dimensional and the action space 8-dimensional, but the objective space is now 3-dimensional. The hypervolume reference point is $[-100, -100, -100]$ and the $\gamma$ used to calculate the returns to construct the front is 0.99.

\paragraph{Humanoid-2D.}
Based on the Humanoid-v5 robot, this environment features one of the most complex state spaces in common benchmarks, with 348 state dimensions ($\mathcal{S} \subseteq \mathbb{R}^{348}$) and a 17-dimensional continuous action space ($\mathcal{A} \subseteq \mathbb{R}^{17}$).
The task presents two highly conflicting objectives: maximizing forward velocity ($v_x$) and minimizing energy consumed, represented by a control cost penalty ($-10\sum a_i^2$). The hypervolume reference point is $[-100, -100]$ and the $\gamma$ used to calculate the returns to construct the front is 0.99.

\paragraph{Building-9D.}
A complex thermal control task for a large commercial building, featuring a 29-dimensional state space ($\mathcal{S} \subseteq \mathbb{R}^{29}$) and a 23-dimensional continuous action space ($\mathcal{A} \subseteq \mathbb{R}^{23}$).
The agent must manage the heating supply across 23 zones.
The three core objectives (minimizing energy cost, temperature deviation, and power ramping) are calculated independently for each of the building's three floors, resulting in a challenging, high-dimensional 9-objective problem. The hypervolume reference point is $[0, 0, 0, 0, 0, 0, 0, 0, 0]$ and the $\gamma$ used to calculate the returns to construct the front is 1.

%% file: documentBody/J-expDetails.tex
\section{Experimental Details}
\label{sec:experiment_details}
The PPO specific hyperparameters are the following:
\begin{itemize}
    \item Number of environments: 4
    \item Learning Rate: 0.0003
    \item Batch Size: 512
    \item Number of minibatches: 32
    \item Gamma: 0.995
    \item GAE lambda: 0.95
    \item Surrogate Clip Threshold: 0.2
    \item Entropy Loss coefficient: 0
    \item Value function loss coefficient: 0.5
    \item Normalize Advantages, Normalize Observations, Normalize rewards: True
    \item Max gradient Norm: 0.5
\end{itemize}

For the actor network, we initialized the final layer with logstd value of 0.
For humanoid and ant benchmarks, the logstd value was -1.
We performed every experiment with 5 random seeds to find confidence intervals.
In all cases, both actor and critic networks had 2 hidden layers with 64 neurons in each layer.
The activations were tanh, with the final layer having no activation.
For all experiments, the action diversity loss parameter $\lambda$ was 0.01 and $\alpha = 1$

We trained all baselines and \toolname{} on a M3 Ultra 96gb RAM Mac Studio. 

All baselines used the same number of environment interactions, network architecture size, and PPO parameters. We have specifically ensured that all baselines and \toolname{} use a fixed number of steps, despite parallelism. We would like to note that C-MORL and PG-MORL spawn multiple workers (6-8), each using the initialization + improvement budget, which causes their total interaction count to get multiplied by the number of processes. In the supplementary material, we have corrected the code to ensure these baselines use the correct and same budget as all others. We provide full logs and code to ensure reproducibility

\subsection{Memory Comparison}

To demonstrate the substantial memory advantage of \toolname{} over the state-of-the-art C-MORL algorithm, we compare the total number of parameters required to represent all policies along the Pareto front. Because C-MORL is a multi-policy approach, it trains a separate actor--critic pair for each preference, meaning that every point on the front corresponds to an independent network $\pi_{\text{cmorl}}$ that maps only the state to an action. In contrast, \toolname{} learns a single preference-conditioned policy $\pi_{\text{d3po}}(a \mid s, \omega)$ capable of representing the entire continuum of optimal trade-offs with one unified actor--critic model.

Table~\ref{tab:memory} reports the parameter counts and corresponding float32 memory footprint. Notably, C-MORL imposes a practical cap of 200 policies per environment due to memory and training limitations, whereas \toolname{} can represent an unbounded number of solutions because preference variation is handled through conditioning rather than training separate networks. In fact, for the Building-9D environment, we observed more than $2000$ distinct Pareto-optimal preference vectors, all represented seamlessly by a single \toolname{} model.

\begin{table}[!h]\centering

\caption{Parameter counts and storage for \toolname{} and C-MORL.}
\label{tab:memory}
\begin{tabular}{lcc}
\toprule
Environment & \toolname{} (params, MB) & C-MORL (params, MB) \\
\midrule
Ant-2D & 23,314 (0.089\,MB) & 3,770,852 (14.385\,MB) \\
Ant-3D & 23,507 (0.090\,MB) & 735,776 (2.807\,MB) \\
Hopper-2D & 10,632 (0.041\,MB) & 1,361,052 (5.192\,MB) \\
Hopper-3D & 10,825 (0.041\,MB) & 2,062,200 (7.867\,MB) \\
Humanoid-2D & 55,588 (0.212\,MB) & 1,326,408 (5.060\,MB) \\
Building-9D & 16,887 (0.064\,MB) & 3,043,000 (11.608\,MB) \\
\bottomrule
\end{tabular}
\end{table}

%% file: documentBody/K-Reproducibility.tex
\section*{Reproducibility Statement}

We have taken several steps to ensure the reproducibility of our work. 
All algorithmic details of \toolname{} are fully specified in Section~\ref{sec:method}, with pseudocode provided in Algorithm~\ref{alg:mo-ppo-single}. 
Our theoretical results are supported by complete proofs in Appendix~\ref{sec:LSW_theory} \ref{sec:diversity_theory}, where all assumptions are stated explicitly. 
The experimental setup, including environment details, hyperparameters, and evaluation metrics, is documented in Section~\ref{sec:experiments} and further expanded in Appendix~\ref{sec:experiment_details}. 
We use publicly available benchmark environments without modification, and we describe our training protocols and data processing steps in detail. 
Anonymous source code implementing \toolname{}, along with scripts for reproducing all experiments and figures, is included in the supplementary material. 
Together, these resources ensure that both the theoretical and empirical contributions of this paper are fully reproducible.


\section{Limitations and Future Work}
\label{sec:limitations}

While \toolname{} establishes a new state-of-the-art across our evaluated benchmarks, its reliance on linear scalarization inherently bounds its discoverable solutions to the Convex Coverage Set (CCS). In environments featuring strictly concave Pareto regions (e.g., the classic Deep Sea Treasure), \toolname{}---like all linearly scalarized baselines---will mathematically bridge the concavity and recover only the convex extremes. Extending our decomposed advantage architecture to target concave interiors is a primary direction for future research. This could be achieved by integrating \toolname{} with non-linear utility functions (e.g., Tchebycheff scalarization) or archive-based evolutionary strategies, which bypass linear constraints by explicitly anchoring exploration to previously discovered Pareto-optimal points.

Furthermore, our architectural innovations are currently tailored to the on-policy setting. Because Late-Stage Weighting (LSW) explicitly relies on the independent stabilization of per-objective signals via the PPO clipping operator, it does not trivially transfer to off-policy algorithms (e.g., SAC or TD3), which lack direct advantage clipping and instead rely on Q-value bootstrapping. Formulating an off-policy equivalent of LSW remains an open challenge. 

D$^3$PO inherits the convergence properties of PPO: the actor objective
(Eq.~\ref{eq:lsw} combined with the scaled diversity regularizer) is
smooth in $\theta$ under standard policy parameterizations, and thus
standard stochastic approximation results for smooth non-convex objectives
apply under the usual assumptions (bounded-variance gradient estimates,
Robbins--Monro step sizes, bounded iterates). We do not claim a
D$^3$PO-specific convergence result: the diversity regularizer is a smooth
penalty that does not alter the regularity of the actor objective, and
Late-Stage Weighting is a finite sum of clipped per-objective surrogates
that inherits the smoothness structure of standard PPO. Convergence
analysis for PPO itself remains an active area, and the same caveats
apply here.

Finally, regarding objective dimensionality, \texttt{Building-9d} represents the highest-objective continuous control benchmark currently available. While we have not empirically evaluated \toolname{} beyond nine objectives, the mathematical formulation of LSW and the scaled diversity regularizer scales strictly linearly with the number of objectives ($O(d)$). Therefore, we anticipate that the framework will gracefully scale to massively multi-objective environments without the exponential computational explosion common to hypervolume-maximization methods.

\section{Explorations of Nonlinear Scalarization in Multi-Objective PPO:
Theoretical Motivations and Empirical Failure Modes}

A central limitation of linear scalarization is well-established: it can only recover
the convex regions of the Pareto frontier. This motivates the investigation of nonlinear
scalarization strategies that could, in principle, recover interior and concave regions.
We systematically explored several such approaches within the PPO framework. Our
central finding is that while nonlinear scalarization is theoretically appealing, its
introduction into PPO consistently violates the implicit assumptions of the clipping
operator and degrades performance. This section presents a unified account of these
explorations and a precise diagnosis of why each approach fails.

\paragraph{The Trust Region Assumption and Why It Matters.}
PPO's clipped surrogate is derived under a specific set of assumptions: the advantage
function is taken as a fixed, linear signal, and the clipping operator enforces a local
trust region that bounds the policy ratio to $[1-\epsilon, 1+\epsilon]$. Crucially,
the calibration of this trust region — the choice of $\epsilon$ and the implicit step
size it induces — assumes that the gradient of the surrogate loss is proportional to
the advantage. Any transformation applied to the advantage or the surrogate loss that
introduces curvature breaks this proportionality. The clipping threshold that was
appropriate for a linear advantage is no longer appropriate for a nonlinearly
transformed one: the trust region is simultaneously too large in directions where the
nonlinearity amplifies the gradient and too small in directions where it suppresses it.
This misalignment is the root cause of instability across all nonlinear variants we
explored.

\paragraph{Log-Sum-Exp and Smooth Maximum Scalarization.}
The most theoretically motivated nonlinear approach replaces the linear combination
of per-objective surrogate losses with a log-sum-exp transformation, approximating
a smooth maximum over objectives:
\[
\mathcal{L}^{\text{LSE}}(\theta) = \frac{1}{\tau} \log \sum_{i=1}^d
\exp\!\left(\tau \, \omega_i \, \mathcal{L}^{(i)}_{\text{clip}}(\theta)\right).
\]
This corresponds to optimizing a risk-sensitive utility that adaptively emphasizes
whichever objective is currently performing worst, and in principle can recover concave
regions of the Pareto frontier by concentrating gradient pressure where improvement
is most needed.

In practice, this formulation violates the trust region in a systematic way. The
gradient of $\mathcal{L}^{\text{LSE}}$ with respect to $\theta$ involves softmax
weights $s_i \propto \exp(\tau \omega_i \mathcal{L}^{(i)}_{\text{clip}})$ applied
to the per-objective gradients. These weights are data-dependent and change across
minibatches, introducing a second source of curvature that is invisible to the clipping
operator. When $\tau$ is large, the softmax sharpens and concentrates gradient mass on
a single objective, producing updates far outside the trust region intended by the
clipping threshold. When $\tau$ is small, the transformation approaches linearity and
the method recovers LSW, but without the nonlinear coverage benefit that motivated
it. We found no stable operating regime across environments: small $\tau$ wastes the
nonlinearity, large $\tau$ destabilizes training. The fundamental issue is that the
clipping operator was not designed to control updates whose effective magnitude depends
on the loss landscape rather than the policy ratio alone.

\paragraph{Anchor-Based Advantage Shaping.}
A second approach augments the per-objective advantage with a gap term derived from
a Pareto-optimal reference point maintained in an archive:
\[
\tilde{A}^{(i)}_t = A^{(i)}_t + \beta \left(G^{(i)}_{\text{target}} -
V^{(i)}_\phi(s_t, \omega)\right),
\]
where $G^{(i)}_{\text{target}}$ is the $i$-th return of the nearest Pareto-dominant
solution. The intuition is that the augmented advantage provides a global signal
pulling the policy toward underexplored regions of the frontier rather than relying
solely on local temporal difference errors.

This approach fails for a fundamental reason: the augmented quantity is no longer a
valid advantage in the sense required by PPO. The advantage function in PPO must be
action-dependent — it represents the relative merit of a specific action over the
baseline — and the trust region is calibrated under the assumption that the advantage
reflects local action quality. The gap term $G^{(i)}_{\text{target}} -
V^{(i)}_\phi(s_t, \omega)$ is state-dependent but not action-dependent; it is a
constant with respect to $a_t$ at any given state. Injecting it into the advantage
therefore biases the policy ratio in a direction that is decoupled from action
selection, producing gradients that push the policy to regions of high predicted
return regardless of which actions actually achieve them. Empirically this manifests
as unstable updates and convergence to high-return states via unreliable action
sequences. Replacing the value estimate with empirical returns to restore
action-dependence introduces high variance that the clipping operator cannot
adequately control.

\paragraph{Nash Bargaining-Inspired Adaptive Weighting.}
Motivated by cooperative game theory, Nash scalarization defines weights inversely
proportional to the margin between current returns and a nadir point:
\[
\omega^{\text{Nash}}_i \propto \frac{1}{V^{(i)}_\phi(s_t, \omega) - n_i + \delta},
\]
where $n_i$ is the nadir value for objective $i$ and $\delta > 0$ is a stability
constant. These adaptive weights correspond to the gradient of a log-transformed
product utility, a concave objective whose maximizer lies on the Pareto frontier and
is not confined to the convex hull. The method therefore has genuine theoretical
potential to recover non-convex trade-offs.

The practical failure is twofold. First, the adaptive weights are data-dependent and
change across training steps and minibatches, reintroducing the same trust region
miscalibration as log-sum-exp: the clipping threshold $\epsilon$ was chosen under
fixed weights, and varying weights change the effective step size in a way the
clipping operator cannot compensate. Second, the inverse relationship between weights
and margins amplifies noise in value estimates. When $V^{(i)}_\phi$ is poorly
estimated — which is inevitable early in training and in high-dimensional environments
— small errors in the denominator produce large, unstable weights. Clamping the
weights mitigates this but destroys the Nash property. Replacing trajectory-dependent
estimates with a learned global value function stabilizes the weights but removes the
adaptivity that gives Nash scalarization its appeal. We found no implementation that
preserved both stability and the theoretical coverage benefit.

\paragraph{Hybrid Selective Nonlinearity.}
The observation that PPO's clipping operator is itself a piecewise nonlinearity
suggests a natural hybrid: apply nonlinear aggregation only when the surrogate is in
its linear (unclipped) regime, where additional curvature is needed, and fall back to
linear aggregation when clipping is already providing nonlinear pressure. While this
aligns conceptually with PPO's structure, it introduces a discrete partition of
objectives into clipped and unclipped sets that changes across timesteps and
minibatches. The gradient of the combined objective becomes discontinuous at the
boundaries of this partition, and the effective weighting of objectives changes
abruptly as training progresses. These discontinuities create instability that
outweighs the benefit of selective nonlinearity in all environments tested.

\paragraph{Unified Diagnosis.}
Across all variants, the failure mode is the same: PPO's trust region is a precisely
calibrated mechanism designed for a specific class of surrogate objectives, and any
transformation that introduces data-dependent curvature into the advantage or loss
renders the clipping threshold inappropriate. The trust region becomes simultaneously
too permissive in directions where the transformation amplifies gradients and too
restrictive where it suppresses them, producing either instability or premature
convergence.

This diagnosis supports the design philosophy of D$^3$PO. Rather than seeking
expressiveness through nonlinear transformations, D$^3$PO preserves the exact
structure of PPO's clipped surrogate for each objective independently. The clipping
operator retains its intended semantics — bounding per-objective updates to a
calibrated trust region — and the only modification is the order of operations:
preference weights are applied after rather than before the clipping decision. The
resulting nonlinearity is not injected but emergent: it arises from the interaction
between independent clipping decisions across objectives and the preference-weighted
aggregation of already-stabilized surrogates. This emergent nonlinearity is sufficient
in practice to recover broad, high-quality Pareto fronts on all evaluated benchmarks,
without the instability that accompanies explicit nonlinear scalarization.

The broader implication for multi-objective PPO is clear: the path to recovering
non-convex Pareto regions does not lie in modifying the surrogate loss, but in
the architecture and sampling strategy that determines which regions of the frontier
receive gradient pressure. The diversity regularizer in D$^3$PO addresses this at
the behavioral level, ensuring that distinct preferences map to distinct policies
without requiring any modification to the trust region structure. Future work that
seeks to recover genuinely concave frontier regions within a PPO framework should
similarly decouple the coverage mechanism from the optimization objective.

%% file: documentBody/L-demo.tex
\section{Demonstration with User Interface}

We have developed a user interface to demonstrate the behaviour of D3PO agents. There are 3 columns in the user interface. The first column shows the live policy rollout rendering. The second column shows the a line plot reward collected in every channel over time and a bar plot of the instantaneous reward at the current time step. The third column shows a slider for the objectives that are part of the environment. These sliders can change the weight value for the particular objective during the rollout to change the policy behaviour. The attached videos show demonstrations with the Mo-hopper-3D and MO-ant-3d environments. The flask file that serves this demo is part of the code and will be made public.